\documentclass[10pt,twocolumn,letterpaper]{article}

\usepackage[pagenumbers]{cvpr}

\definecolor{cvprblue}{rgb}{0.21,0.49,0.74}
\usepackage[breaklinks,colorlinks,allcolors=cvprblue]{hyperref}

\title{\textsc{Scene2Demo}: Self-Evolving Embodied Data Generation\\via Object-Action Graph}

\author{%
Xiang Liu$^{*}$ \qquad
Sen Cui$^{*,\dagger}$ \qquad
Guocai Yao \qquad
Zhong Cao \\[0.35em]
Jingheng Ma \qquad
Min Zhang \qquad
Changshui Zhang$^{\dagger}$ \\[0.55em]
{\small $^{*}$Equal contribution \qquad $^{\dagger}$Corresponding authors} \\[0.20em]
{\tt\small cuis@mail.tsinghua.edu.cn \qquad zcs@mail.tsinghua.edu.cn}
}

\usepackage{multirow}
\usepackage{wrapfig}
\usepackage{amsthm}
\newtheorem{definition}{Definition}
\usepackage{algorithm}
\usepackage{algorithmic}
\newtheorem{proposition}{Proposition}

\usepackage[most]{tcolorbox}
\newtcblisting{OutputBox}[1][]{
    enhanced,
    breakable,            % 允许跨页
    listing only,
    listing options={
        basicstyle=\ttfamily\small, % 等宽字体
        breaklines=true,            % 自动换行
        columns=fullflexible
    },
    colback=gray!10,      % 背景色：浅灰
    colframe=gray!50,     % 边框色：深灰
    arc=0pt,              % 直角边框（如需圆角改成 3mm）
    boxrule=1pt,          % 边框宽度
    title={#1},           % 标题（可选）
    coltitle=black,       % 标题文字颜色
    attach boxed title to top left={yshift=-2mm, xshift=2mm},
    boxed title style={colback=white, colframe=white} % 标题框样式
    before skip=80pt,   % 设置上方间距，支持 cm, mm, pt, em
    after skip=8pt,
}

\newcommand{\method}{\textsc{Scene2Demo}}

\begin{document}

\maketitle

\begin{abstract}
% We present \method, a multi-agent workflow for offline embodied data generation. Given a single real-world RGB image and a user query, \method~ constructs an interactive simulated scene and generates executable task configurations, primitive action flows, multi-view execution videos, and offline robot-learning datasets. The workflow contains five role-specialized generation modules for task understanding, symbolic decomposition, configuration generation, size adjustment, and action flow planning, followed by two feedback agents that inspect failed executions and revise action flows through visual feedback, while we use an extensible primitive action library to support both initial planning and later self-evolution. Our symbolic definitions constrain task decomposition, making generated subtasks more stable and executable than unconstrained LLM planning. Across 102 automatically generated primitive scene-task pairs, \method~ achieves a 71.6\% execution success rate; on four representative long-horizon tasks, self-evolution improves both task success and subtask-level execution quality over primitive-only execution, and comparisons with RoboGen and GenSim2 show stronger task planning and execution performance under automated data-generation settings. Finally, behavior cloning policies achieve 96.0\% and 92.0\% success on two representative tasks, validating that the generated data can support downstream policy learning. Our project page is available at \url{https://scene2demo-anon.github.io/}.

We present \method, a self-evolving framework for offline embodied data generation. Given a single real-world RGB image and a user query, \method~ constructs an interactive simulated scene and generates executable task configurations, multi-view execution videos, and offline robot-learning datasets. \method~ uses a structured multi-module workflow via an object-action graph, representing task generation through object-centric configurations and action transitions. 
% On top of this graph formulation, \method~ uses a structured multi-module workflow for task understanding, symbolic decomposition, configuration generation, size adjustment, and action-flow planning. 
Failed or incomplete executions are further refined by feedback agents that inspect visual rollouts and revise action flows through sequence modification or parameter adjustment. Across 102 automatically generated primitive scene-task pairs, \method~ achieves a 71.6\% execution success rate; on four representative long-horizon tasks, self-evolution improves both task success and subtask-level execution quality over primitive-only execution, and comparisons with RoboGen and GenSim2 show stronger task planning and execution performance under automated data-generation settings. Finally, behavior cloning policies achieve 96.0\% and 92.0\% success on two representative tasks, validating that the generated data can support downstream policy learning. Our project page is available at \url{https://scene2demo-anon.github.io/}.

\end{abstract}

\begin{figure*}[h]
	\vspace{-.4cm}
	\centering{
		\includegraphics[width=2\columnwidth]{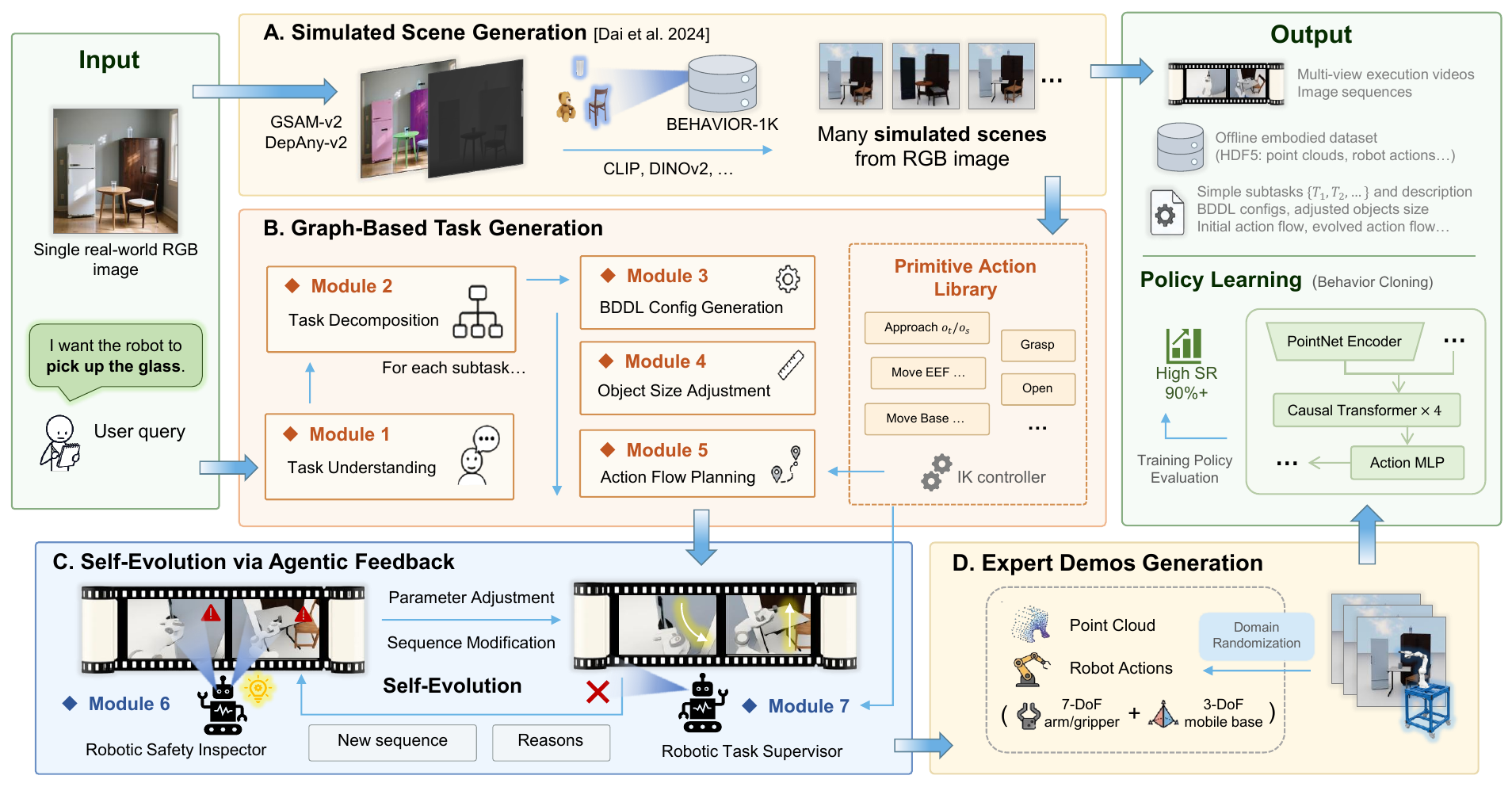}}
        \vspace{-.2cm}
	\caption{An introduction of our pipeline.}
	\label{fig:intruduction}
	\vspace{-.4cm}
\end{figure*}

\section{Introduction}
% \vspace{-0.4cm}
Embodied robot learning requires interaction data that is executable, physically valid, and structured enough for downstream training. In this paper, we study an offline embodied data generation problem: given a real-world scene image and a user task query, how can a system automatically produce executable robot task data? The desired outputs include an interactive simulated scene, task execution configurations, primitive action flows, multi-view execution videos, and observation-action datasets. Our goal is to build an automated data generation engine that provides reusable data for imitation learning, reinforcement learning from demonstrations, and world model training.

We propose \method, a self-evolving framework for embodied data generation via an object-action graph. The input is a single RGB scene image and a user query, such as ``pick up the glass''. We use an existing scene construction pipeline, following Digital Cousins~\cite{acdc}, to reconstruct interactive simulated scenes from real-world observations. On top of this scene substrate, \method~ generates structured task definitions, BDDL-style configurations~\cite{behavior1k}, object-size adjustments, primitive action flows, execution videos, and robot-learning datasets.

% We propose \method, a multi-agent workflow for self-evolving embodied data generation. The input is a single RGB scene image and a user query, such as ``pick up the glass''. We use an existing scene construction pipeline, following Digital Cousins direction~\cite{acdc} to reconstruct interactive simulated scenes from real-world observations. On top of this scene substrate, \method~ generates structured task definitions, BDDL~\cite{behavior1k} configurations, object-size adjustments, primitive action flows, execution videos, and HDF5-style robot-learning datasets.

The first stage of \method~ is graph-based task generation through five role-specialized modules. The object-action graph provides the abstraction behind this stage: nodes encode object-action-time configurations, while edges represent either primitive action transitions within a subtask or transfer links between consecutive subtasks. Module 1 expands the user query into a scene-grounded task description. Module 2 decomposes the task according to our symbolic task definitions. For each simple subtask, Module 3 generates the task configuration, Module 4 adjusts object size when needed to satisfy gripper constraints, and Module 5 plans an initial action flow from a primitive action library. This design is inspired by recent automated robot data generation systems such as RoboGen~\cite{robogen} and GenSim2~\cite{gensim2}, but differs in two aspects: it separates the generation process into inspectable modules, and it constrains task decomposition using symbolic definitions rather than relying only on unconstrained LLM outputs.

% The first stage of \method~ is graph-based task generation through five role-specialized modules. Module 1 expands the user query into a scene-grounded task description. Module 2 decomposes the task according to our symbolic task definitions. For each simple subtask, Module 3 generates the task configuration, Module 4 adjusts object size when needed to satisfy gripper constraints, and Module 5 plans an initial action flow from a primitive action library. This design is inspired by recent automated robot data generation systems such as RoboGen~\cite{robogen} and GenSim2~\cite{gensim2}, but differs in two aspects: it separates the generation process into inspectable modules, and it constrains task decomposition using symbolic task definitions rather than relying only on unconstrained LLM outputs.

The second stage is self-evolution for failed or incomplete subtasks. After the current action flow is executed in simulation, the Robotic Safety Inspector analyzes the current primitive action and recent multi-view frames to produce step-wise critiques. The Robotic Task Supervisor then aggregates these critiques with a longer-horizon video context, the current action flow, and previous iteration history. It revises the action flow through sequence modification or parameter adjustment, while reusing the primitive action library as the executable action interface. This closed-loop design follows the broader idea of combining reasoning, acting, feedback, and memory in agentic workflows~\cite{react,reflexion,autogen}.

A key part of \method~ is its graph-grounded task formulation. Instead of allowing an LLM to freely decompose a long-horizon instruction, we constrain the decomposition using symbolic definitions of simple tasks, complex tasks, action flows, and inter-task transfers. In the object-action graph, this means that a long-horizon instruction is represented as a compositional path: intra-task edges execute each simple task, while inter-task transfer edges ensure that the terminal state of one subtask can serve as the valid initialization of the next. This constraint is important in practice: repeated unconstrained LLM decomposition may produce inconsistent subtask structures for the same user query, while robot execution requires stable and parseable task units. In our repeated decomposition study, \method~ consistently decomposes the same instruction into the same executable subtasks, while unconstrained baselines produce varying subtask counts.

% A key part of \method~ is its graph-grounded task formulation. Instead of allowing an LLM to freely decompose a long-horizon instruction, we constrain the decomposition using symbolic definitions of simple tasks, complex tasks, action flows, and inter-task transfers. This constraint is important in practice: repeated unconstrained LLM decomposition may produce inconsistent subtask structures for the same user query, while robot execution requires stable and parseable task units. In our repeated decomposition study, \method~ consistently decomposes the same instruction into the same executable subtasks, while unconstrained baselines produce varying subtask counts.

We evaluate \method~ as an automated embodied data generation engine. Across 102 primitive scene-task pairs, \method~ achieves a 71.6\% execution success rate. On four representative long-horizon tasks, self-evolution improves complete-task success and subtask-level execution quality over primitive-only execution. Direct comparisons with RoboGen and GenSim2 further show that \method~ achieves stronger task planning and execution performance under automated data-generation settings. Finally, behavior cloning policies trained on 100 generated demonstrations achieve 96.0\% success on opening a refrigerator and 92.0\% success on picking up a glass, showing that the generated data can support downstream policy learning.

% Our contributions are threefold. First, we introduce \method, a multi-agent workflow that converts a real-world image and a user query into executable embodied task data. Second, we formulate task generation as graph-grounded decomposition over symbolic task definitions, improving the stability and executability of generated subtasks. Third, we introduce a self-evolution mechanism that uses step-wise inspection and task-level supervision to revise failed action flows through visual feedback.

Our contributions are threefold. First, we introduce \method, a self-evolving framework that converts a real-world image and a user query into executable embodied task data. Second, we formulate task generation through an object-action graph, improving the stability and executability of long-horizon decomposition. Third, we introduce a visual-feedback-driven self-evolution mechanism that uses step-wise inspection and task-level supervision to revise failed action flows.

\section{Related Works}
\paragraph{Generative video models and physics simulators.}
Generative video models have made rapid progress in visual prediction and interactive scene synthesis~\cite{videogpt,genie,sora}. However, recent studies suggest that video generation models can still produce physically inconsistent results and may fail to generalize physical laws beyond training-like scenarios~\cite{WorldModelBench,HowFarVideoPhys}. For robot interaction data, physics simulators remain a more direct substrate because they provide explicit states, collision handling, and action-conditioned transitions. Simulators and benchmarks such as iGibson~\cite{igibson2}, ThreeDWorld~\cite{threedworld}, Genesis~\cite{genesis}, ManiSkill~\cite{maniskill3}, OmniGibson~\cite{behavior1k}, and IsaacGym~\cite{isaacgym} support embodied interaction and robot learning in physically grounded environments. \method~ follows this simulation-based direction, but focuses on automatically generating executable task data rather than proposing a new simulator.

% \paragraph{Automated scene, task, and demonstration generation.}
% Recent work has explored using foundation models to reduce the manual effort of robot task and data generation. RoboGen generates tasks, scenes, rewards, and supervision for automated robot learning~\cite{robogen}. GenSim2 scales robot data generation with multimodal and reasoning LLMs~\cite{gensim2}. GRS generates robotic simulation tasks from real-world images~\cite{grs}. Digital Cousins reconstructs interactive simulated scenes from single real-world RGB images and provides useful scene grounding for robot learning~\cite{acdc}. \method~ does not treat scene reconstruction as the main contribution; instead, it uses interactive scene construction as the substrate for a workflow-agent system that generates, executes, inspects, and revises embodied task data.

\paragraph{Automated scene, task, and demonstration generation.}
Recent work has explored using foundation models to reduce the manual effort of robot scene, task, and data generation. RoboGen generates tasks, scenes, rewards, and supervision for automated robot learning~\cite{robogen}, while GenSim2 scales robot data generation with multimodal and reasoning LLMs~\cite{gensim2}. Other works generate or reconstruct embodied simulation environments from language, real-world images, or 3D scans, including Holodeck, ReGen, GRS, EmbodiedGen, MetaScenes, and Digital Cousins~\cite{Holodeck,regen,grs,wang2025embodiedgengenerative3dworld,metascenes,acdc}. These works provide important tools for creating realistic or interactive simulation substrates. \method~ leverages interactive scene construction as the substrate for a multi-agent workflow that generates, executes, inspects, and revises embodied task data.

% \paragraph{LLM/VLM agents and workflow-based reasoning.}
% LLM agents have been used to combine reasoning, tool use, feedback, and memory. ReAct couples reasoning traces with environment actions~\cite{react}, Reflexion improves agents through verbal feedback and episodic memory~\cite{reflexion}, and AutoGen organizes multiple customizable agents through conversation-based workflows~\cite{autogen}. Recent robotic agent frameworks also explore using LLMs and VLMs for general robotic manipulation~\cite{maniagent}. \method~ follows the workflow-agent perspective, but grounds the agents in an embodied data generation pipeline.

\paragraph{LLM/VLM agents and workflow-based reasoning.}
ReAct couples reasoning traces with environment actions~\cite{react}, Reflexion improves agents through verbal feedback and episodic memory~\cite{reflexion}, and AutoGen organizes multiple customizable agents through conversation-based workflows~\cite{autogen}. In robotics, SayCan grounds language planning with feasible robot skills~\cite{saycan}, Code-as-Policies represents robot policies as language-model-generated programs~\cite{codeaspolicies}, and CaP-X benchmarks Code-as-Policy agents that compose perception and control primitives~\cite{capx}. Recent robotic agent frameworks also explore using LLMs and VLMs for general robotic manipulation~\cite{maniagent}. \method~ follows this modular workflow perspective, but grounds it in offline embodied data generation, while the shared primitive action library serves as an executable interface.

% \paragraph{Feedback-driven robot execution and generated demonstrations.}
% Long-horizon robot execution often requires feedback because early-stage errors can invalidate later subtasks. Prior work has used LLMs and VLMs for reward design, task supervision, and self-evolving embodied agents~\cite{ma2024eureka,ma2024dreureka,SEEA-R1}. \method~ targets a more specific offline data generation setting: the system executes primitive action flows in simulation, inspects failures with visual feedback, and revises the action program before saving the resulting data. Related demonstration-generation systems such as MimicGen, SkillMimicGen, and MoMaGen synthesize or adapt demonstrations for policy learning~\cite{mimicgen,SkillMimicGen,momagengen}. Embodied world model work also benefits from scalable action-observation trajectories~\cite{learningprimitiveembodiedworld}. In contrast to methods that adapt human demonstrations, \method~ generates demonstrations through a scene-grounded workflow-agent pipeline.

\paragraph{Self-evolution, failure correction, and policy learning.}
Self-evolution in \method~ is related to feedback-driven robot execution and VLM-guided failure correction. Inner Monologue shows that LLM planners can incorporate environment feedback for embodied planning~\cite{inner_monologue}, while recent VLM-based systems detect, explain, or correct manipulation failures through visual reasoning, constraint monitoring, or visual-symbol guidance~\cite{aha,code_as_monitor,vifailback}. LLM/VLMs have also been used for reward design, task supervision, and self-evolving embodied agents~\cite{ma2024eureka,ma2024dreureka,SEEA-R1}. After self-evolution, \method~ stores successful demonstrations for downstream learning. This connects to demonstration-generation systems such as MimicGen, SkillMimicGen, and MoMaGen~\cite{mimicgen,SkillMimicGen,momagengen}, as well as behavior-cloning and visuomotor policy learning methods such as RT-1 and Diffusion Policy~\cite{rt1,diffusion_policy}.

\section{Method}
Given a real-world RGB image $X_0$ and a user query $I$, \method~ first constructs an interactive simulated scene using an ACDC-based reconstruction pipeline~\cite{acdc}, and then generates executable robot task data through two stages: graph-based task generation and self-evolution. The first stage contains five role-specialized modules: task understanding, task decomposition, BDDL configuration generation, object size adjustment, and action-flow planning. The second stage contains two feedback agents: a Robotic Safety Inspector and a Robotic Task Supervisor.

\subsection{Scene-Conditioned Simulation Substrate}
\label{sec:Scene-Conditioned Simulation Substrate}

The simulation substrate reconstructs an interactive scene from a single real-world RGB observation. Let $\mathcal{S}$ denote the universal physical state space of the environment, where each element $s_\tau \in \mathcal{S}$ represents a possible relationship between objects at time $\tau$. Given the real-world input image $X_0 \in \mathbb{R}^{H \times W \times 3}$, the scene construction pipeline outputs a simulated initial state $S_0 \subset \mathcal{S}$ as $S_0 \sim p(S \mid X_0; \epsilon_0)$, where $\epsilon_0$ denotes the reconstruction error inherited from the scene construction pipeline. 
% For the complete simulated scene generation process and some examples of state definition, refer to Appendix~\ref{app:Simulated Scene Generation Details}, ~\ref{app:Subtask Instantiation}.
See Appendices~\ref{app:Simulated Scene Generation Details} and~\ref{app:Subtask Instantiation} for scene generation details and state-definition examples.

\subsection{Modules 2--3: Symbolic Task Interface}
\label{sec:Symbolic Task Interface}

This interface directly connects Module 2, which decomposes a user query into simple subtasks, and Module 3, which turns each subtask into initial and goal conditions.

\paragraph{Object roles.}
For each simple task, we denote the target object as $o_t$ and the two support objects as $o_{s_1}$ and $o_{s_2}$. The target object $o_t$ is the object directly manipulated or operated by the robot. The first support object $o_{s_1}$ specifies the precondition of the target object, while the second support object $o_{s_2}$ specifies the desired relation after execution. For tasks that only change the state of $o_t$ itself, one or both support objects may be null.

The initialization and termination states are defined as functions of these object roles:
\begin{equation}
    % s_{\mathrm{init}} = s_{\mathrm{init}}(o_t, o_{s_1}), \qquad
    % s_{\mathrm{goal}} = s_{\mathrm{goal}}(o_t, o_{s_2}), \qquad
    % s_{\mathrm{init}}, s_{\mathrm{goal}} \in \mathcal{S}.
    s_{\mathrm{init}}(o_t,o_{s_1}),\
s_{\mathrm{goal}}(o_t,o_{s_2})
\subseteq \mathcal{S}.
\end{equation}
And we write $S \models s_{\mathrm{init}}(T)$ or $S \models s_{\mathrm{goal}}(T)$ when the global state $S$ (see Appendix~\ref{app:Object-Action Task Graph and Compositional Reachability}) satisfies the corresponding BDDL-style predicate.

\paragraph{Simple task.}
A simple task is the basic unit produced by Module 2 and instantiated by Module 3.

\begin{definition}[Simple Task]
A simple task $T$ is defined as $T = \langle o_t, o_{s_1}, o_{s_2}, s_{\mathrm{init}}, s_{\mathrm{goal}} \rangle$.
\end{definition}

Module 2 is responsible for decomposing a user query into one or more simple tasks and assigning the object roles $(o_t,o_{s_1},o_{s_2})$ for each subtask. Module 3 then instantiates the logical boundary of each subtask by generating BDDL-style predicates for $s_{\mathrm{init}}$ and $s_{\mathrm{goal}}$.

% For example, a picking subtask may use an initial condition such as $\mathrm{ontop}(\mathrm{glass}, \mathrm{table})$ and a goal condition such as $\neg \mathrm{ontop}(\mathrm{glass}, \mathrm{table})$, while a placing subtask may use $\mathrm{inside}(\mathrm{glass}, \mathrm{gripper})$ as the initial condition and $\mathrm{inside}(\mathrm{glass}, \mathrm{refrigerator})$ as the goal condition.

\paragraph{Complex long-horizon task.}
A complex long-horizon task is represented as an ordered sequence of simple tasks. This definition constrains Module 2 to produce a parseable task sequence rather than an unconstrained natural-language plan.

\begin{definition}[Complex Long-Horizon Task]
A complex long-horizon task $\mathcal{T}$ is defined as $\mathcal{T} = \{T_1, T_2, \ldots, T_K\},\ K \geq 2$. For a single simple task, we also use $\mathcal{T}=\{T_1\}$.
\end{definition}

\subsection{Module 5: Action Flow and Action Transfer Planning}

Module 5 maps symbolic task definitions into executable primitive-action programs. It contains two coupled responsibilities: intra-task action-flow planning, which executes each simple task, and inter-task action-transfer planning, which connects consecutive subtasks in a long-horizon task.

% \paragraph{Action flow for simple tasks.}
% A simple task $T$ is executed by an action flow:
% \begin{equation}
%     \pi(T) = \{a_1(o_t), a_2(o_t), \ldots, a_n(o_t)\},
% \end{equation}
% where $\{a_i\}_{i=1}^{n}$ are atomic primitives selected from the primitive action library and applied to $o_t$. The primitive action library contains navigation, base movement, and end-effector operations, and serves as the executable interface shared by the initial planner and the self-evolution stage.

\paragraph{Action flow for simple tasks.}
A simple task $T$ is executed by an action flow:
\begin{equation}
    \pi(T) = \{a_1(\theta_1 \mid T), a_2(\theta_2 \mid T), \ldots, a_n(\theta_n \mid T)\},
\end{equation}
where each $a_i$ is an atomic primitive selected from the primitive action library, and $\theta_i$ denotes its task-conditioned parameters, such as the target object, support objects, navigation targets, or continuous control values. The primitive action library contains navigation, base movement, end-effector operations, and other task-conditioned primitives, serving as the executable interface shared by the initial planner and the self-evolution stage.

Since the action flow is generated by an LLM/VLM module, we model it as a sample from a conditional distribution $\pi \sim p_F(\pi \mid T; \epsilon_1)$, where $\epsilon_1$ denotes generation error in action flow planning.

A simple task $T$ is executable if there exists at least one action flow $\pi$ such that $P(s_{\mathrm{goal}} \mid s_{\mathrm{init}}, \pi) > 0$. This condition means that the primitive sequence can move the subtask from its initial state to its goal state while preserving the irrelevant parts of the scene state.

\paragraph{Action transfer between subtasks.}
For a complex long-horizon task $\mathcal{T}=\{T_1,\ldots,T_K\}$, Module 5 must also ensure that consecutive subtasks are logically connected under the global task context. Unlike intra-task action flows, the transition between two consecutive simple tasks $T_k$ and $T_{k+1}$ involves a semantic and physical context shift, which we call an action transfer:
\begin{equation}
\begin{split}
    e_k =& e(T_k,T_{k+1}\mid \mathcal{T})\\
    =&
    \left\{
    a_{\mathrm{end}}(\theta_{\mathrm{end}}^{(k)} \mid \mathcal{T}),
    a_{\mathrm{start}}(\theta_{\mathrm{start}}^{(k+1)} \mid \mathcal{T})
    \right\}.
\end{split}
\end{equation}
Here, $a_{\mathrm{end}}$ denotes the terminal action of subtask $T_k$, and $a_{\mathrm{start}}$ denotes the initial action of subtask $T_{k+1}$. Their parameters are conditioned not only on the local subtasks, but also on the global task $\mathcal{T}$, so that the transfer remains consistent with the overall long-horizon objective. The action transfer does not necessarily correspond to an additional physical operation; instead, it encodes whether the terminal state of one subtask can serve as the valid initialization of the next subtask.

We model the action-transfer sequence $E=\{e_1,\ldots,e_{K-1}\}$ as
\vspace{-0.4cm}
\begin{equation}
    E \sim p_T(E \mid \mathcal{T}; \epsilon_2)
    =
    \prod_{k=1}^{K-1}
    p_T(e_k \mid \mathcal{T}, e_{<k}; \epsilon_2),
\vspace{-0.3cm}
\end{equation}
where $e_{<k}=\{e_1,\ldots,e_{k-1}\}$ denotes previous transfer decisions. This formulation avoids treating each transfer as an isolated local decision and reflects that transfer planning should remain globally consistent across the full task sequence. Thus, the success of a long-horizon task depends on both intra-task execution feasibility and globally consistent inter-task transition planning.

\subsection{Object-Action Task Graph and Compositional Reachability}

The above symbolic definitions can be viewed as a path-planning problem over an object-action task graph. This graph formulation clarifies the sufficient conditions under which a decomposed long-horizon task can be composed from executable primitive-action segments.

% \begin{figure}[h]
% 	\vspace{-.2cm}
% 	\setlength{\abovecaptionskip}{-0cm}
% 	\setlength{\belowcaptionskip}{-0cm}
% 	\centering{
% 		\includegraphics[width=0.45\columnwidth]{figure/Affordance-Graphed Task Worlds.pdf}}
%         % \vspace{-.3cm}
% 	\caption{\textbf{Universal object-action graph.} For any complex long-horizon task, they are decomposed into multiple simple tasks, connected via inter-task edges that bridge different object slices or reset temporal states.}
% 	% \vspace{-.3cm}
% 	\label{fig:Affordance-Graphed Task Worlds}
% 	\vspace{-.6cm}
% \end{figure}

\begin{figure}[h]
	\vspace{-.3cm}
	\centering
    \includegraphics[width=0.46\textwidth]{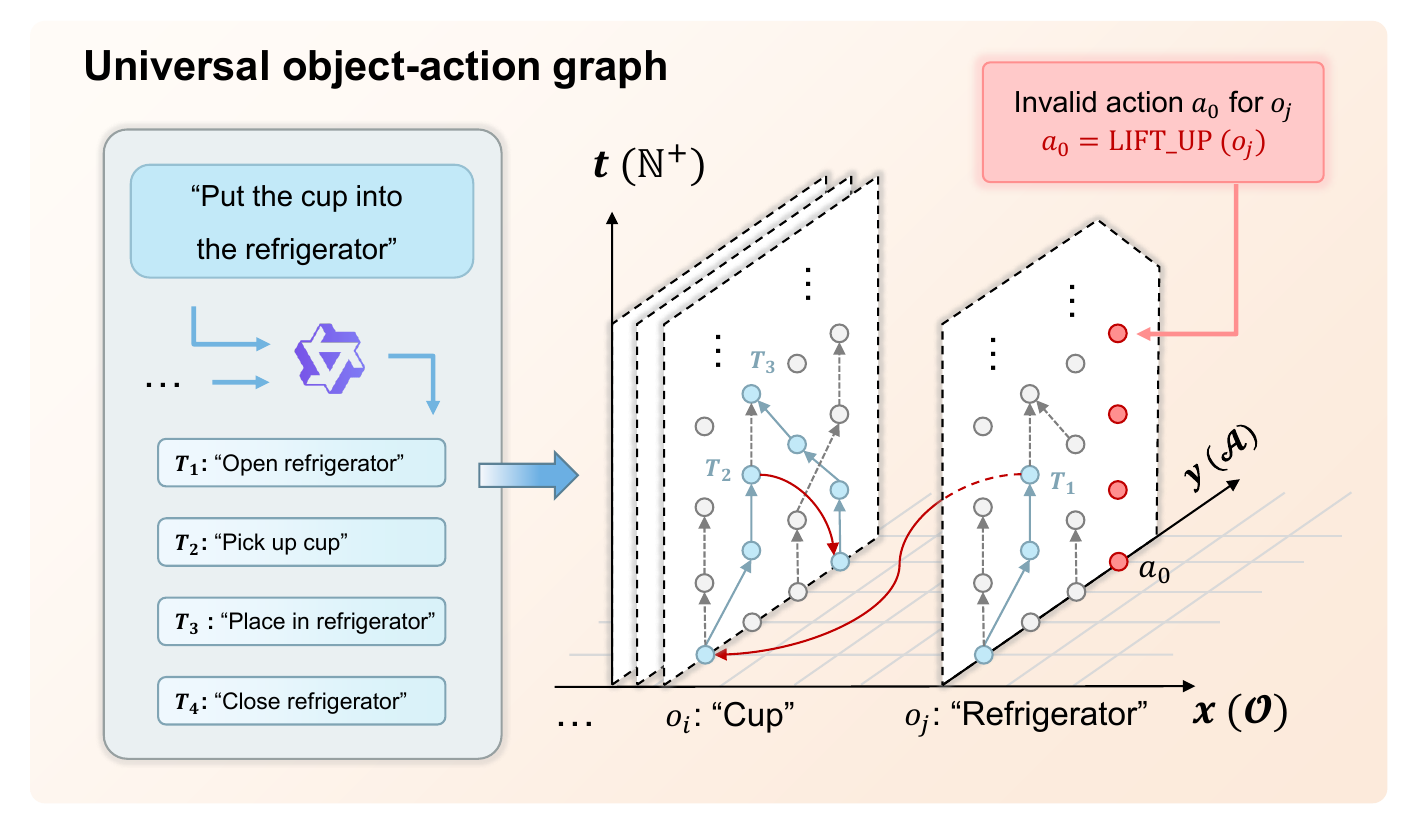}
	\vspace{-.4cm}
	\caption{Universal object-action graph.}
    \label{fig:Universal object-action graph}
	\vspace{-0.8cm}
\end{figure}

% \begin{figure}[1\textwidth]
% 	% \vspace{-.4cm}
%     \centering
%     \includegraphics[width=0.48\textwidth]{figure/Universal object-action graph.pdf}
% 	% \vspace{-.1cm}
% 	\caption{Universal object-action graph.}
%     \label{fig:Universal object-action graph}
% 	% \vspace{-.1cm}
% \end{figure}

\paragraph{Universal object-action graph.}
Let $\mathcal{O}$ denote the set of all manipulable objects, $\mathcal{A}$ denote the universal set of atomic actions, and $\mathbb{N}^{+}$ denote the discrete temporal order within a simple task. We define a universal directed graph
\begin{equation}
    \mathcal{G} = (\mathcal{V}, \mathcal{E}), \qquad
    \mathcal{V} = \mathcal{O} \times \mathcal{A} \times \mathbb{N}^{+},
\end{equation}
where $\mathcal{E} \subset \mathcal{V} \times \mathcal{V}$ represents physically or semantically feasible transitions between object-action-time configurations. The temporal dimension denotes the relative order inside a subtask rather than a global timestamp.

\paragraph{Compositional reachability.}
Given a reconstructed scene state $S_0$, the universal graph $\mathcal{G}$ induces a scene-specific subgraph $\mathcal{G}_{S_0}\subset \mathcal{G}$ over the objects available in the scene and the primitive actions supported by our action library. On this subgraph, Proposition~\ref{prop:Global Reachability} provides a sufficient condition for compositional reachability. Specifically, for a validly decomposed long-horizon task $\mathcal{T}$, if each simple task admits at least one executable primitive action flow and adjacent subtasks have compatible boundary conditions, then there exists a witness trajectory composed of intra-task action flows and inter-task transfers that reaches the final goal with non-zero probability.

This result completes the theoretical view of our graph-based task generation pipeline: Module 2 proposes symbolic subtasks with compatible boundaries, Module 3 instantiates checkable initial and goal predicates, and Module 5 searches for executable action flows and transfer links. The full scene-specific graph construction, proposition, and proof are provided in Appendix~\ref{app:theoretical_supplement}.

% \paragraph{Compositional reachability.}
% Given a reconstructed scene state $S_0$, the universal graph $\mathcal{G}$ induces a scene-specific subgraph $\mathcal{G}_{S_0}\subset \mathcal{G}$ over the objects available in the scene and the primitive actions supported by our action library. On this subgraph, we further establish Proposition~\ref{prop:Global Reachability}, \emph{Global Reachability via Hierarchical Composition}. The proposition states that if $\mathcal{G}_{S_0}$ satisfies two sufficient conditions: \textbf{Primitive Completeness}, meaning that each simple task admits at least one executable primitive action flow, and \textbf{Connectivity of Context}, meaning that adjacent subtasks have compatible terminal and initial states, then for any complex long-horizon task $\mathcal{T}$, there exists a continuous path composed of intra-task action flows and inter-task transfers that connects the initial global state to the final global state with non-zero probability.

% This result completes the theoretical view of our graph-based task generation pipeline: Module 2 proposes symbolic subtasks, Module 3 instantiates their checkable initial and goal states, and Module 5 searches for executable action flows and transfers. The full scene-specific graph construction, proposition, and proof are provided in Appendix~\ref{app:theoretical_supplement}.

\subsection{Graph-Based Task Generation}

Given the user query $I$, the simulated scene $S_0$, and the original RGB image $X_0$, the task-generation workflow produces a structured task sequence $\mathcal{T} \sim p_G(\mathcal{T}\mid I,S_0,X_0;\epsilon_3)$, where $\epsilon_3$ denotes the uncertainty introduced by task understanding and decomposition.

The pipeline contains five role-specialized modules. Module 1 expands the user query into a scene-grounded task description. Module 2 decomposes the expanded task into simple tasks according to the symbolic definitions above. For each simple task, Module 3 generates a BDDL-style configuration that defines $s_{\mathrm{init}}$ and $s_{\mathrm{goal}}$. Module 4 adjusts the scale of manipulable objects when the reconstructed asset size is incompatible with the robot gripper. Module 5 maps each simple task to an initial action flow and checks the inter-task transfers between consecutive subtasks.

\begin{figure*}[t]
	\vspace{-.4cm}
	\centering{
		\includegraphics[width=2\columnwidth]{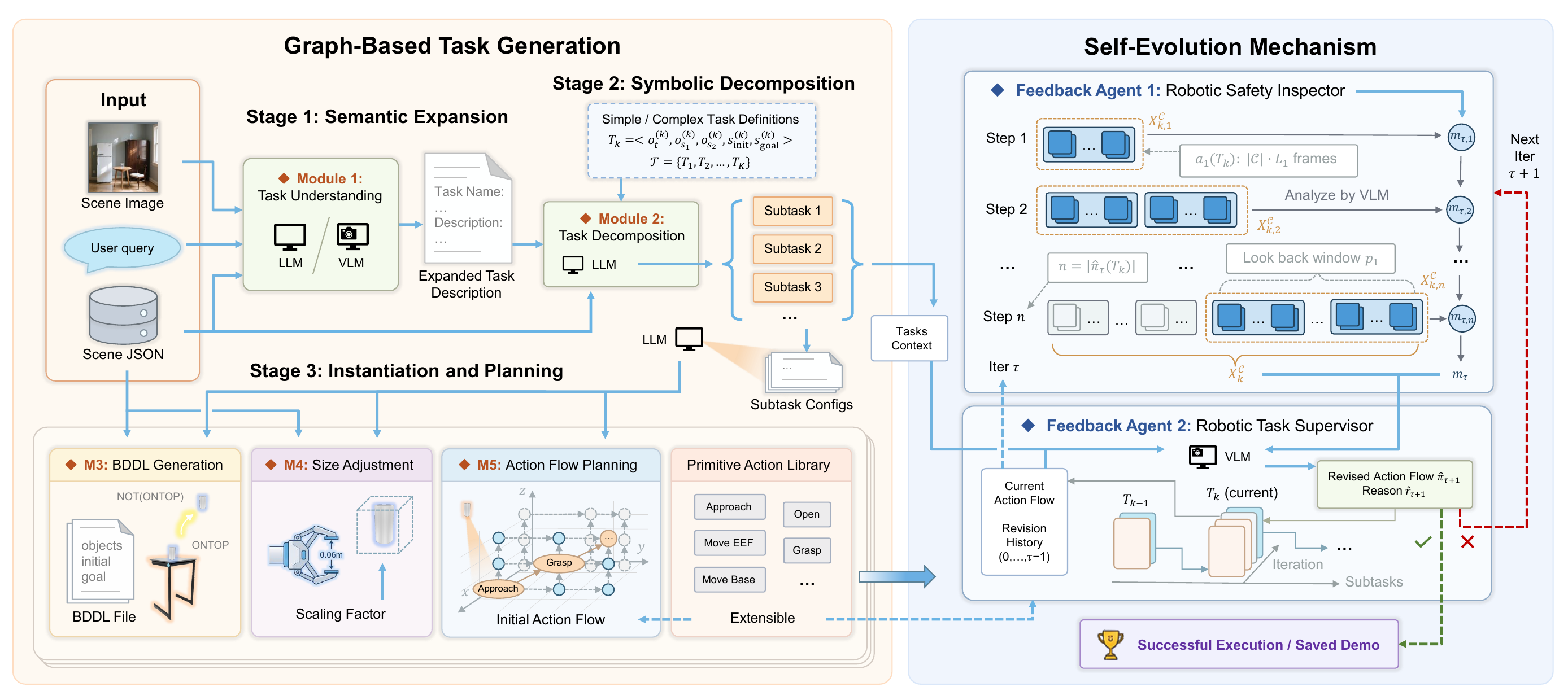}}
        % \vspace{-.6cm}
    \vspace{-0.4cm}
	\caption{\textbf{Graph-Based Task Generation with Self-Evolution.} For details, see Appendix~\ref{app:task_generation_details},\ref{app:self_evolution_details}.}
	\label{fig:Graph-Based Task Generation with Self-Evolution}
	\vspace{-.4cm}
\end{figure*}

\subsection{Self-Evolution for Task Progression}

The initial action flow may fail due to imperfect scene reconstruction, object-size mismatch, navigation offsets, or accumulated errors in long-horizon execution. To handle these failures as much as possible, \method~ introduces a self-evolution mechanism.

\paragraph{Robotic Safety Inspector.}
At evolution iteration $\tau$, for each subtask $T_k\in \mathcal{T}$, the current action flow
$\hat{\pi}_\tau(T_k)$ is executed in simulation, and multi-view video frames are collected as visual feedback. Let
$\mathcal{C}\subset\{\mathrm{Global},\mathrm{Head},\mathrm{Wrist}\}$ denote the set of camera views, and let $p_2$ be the frame sampling interval. Let $x_{k,j}^{c,l}\in \mathbb{R}^{H\times W\times 3}$, if action $a_j$ takes time $t_j$, the image sequence for the $j$-th action from view $c$ is
\begin{equation}
    X_{k,j}^{c}
    =
    \left(
    x_{k,j}^{c,1},
    \ldots,
    x_{k,j}^{c,L_j}
    \right),
    \quad L_j=\min\left(\left\lceil \frac{t_j}{p_2} \right\rceil, 6\right).
\end{equation}

The Robotic Safety Inspector analyzes execution locally. To provide temporal context, we use a look-back window $p_1$. For the $j$-th action, the inspector observes the multi-view frames corresponding to actions
$\{a_{\max(j-p_1,1)},\ldots,a_j\}$. Let 
\begin{equation}
    X_{k,j}^{\mathcal{C}}
    =
    \left\{
    X_{k,\max(j-p_1,1)}^{c},
    \ldots,
    X_{k,j}^{c}
    \right\}_{c\in \mathcal{C}},
\end{equation}
and $X_k^{\mathcal{C}}=\mathop{\textstyle\bigcup}\nolimits_{j=1}^{|\hat{\pi}_{\tau}(T_k)|}X_{k,j}^{\mathcal{C}} $ denote the complete visual observation for the subtask. The inspector outputs a step-wise critique $m_{\tau,j} \sim p_E(m \mid X_{k,j}^{\mathcal{C}};\epsilon_4)$, where $\epsilon_4$ denotes the uncertainty of VLM-based visual diagnosis. 
% The full critique sequence is $m = \{m_j\}_{j=1}^{|\pi(T_k)|}$.

\paragraph{Robotic Task Supervisor.}
At evolution iteration $\tau$, let $m_\tau=\{m_{\tau,j}\}_{j=1}^{|\hat{\pi}_\tau(T_k)|}$ denote the sequence of step-wise critiques for the current action flow $\hat{\pi}_\tau(T_k)$. 
We define the evolution history before iteration $\tau$ as 
$
    \mathcal{H}_{<\tau}
    =
    \{(\hat{\pi}_i(T_k), m_i, \hat{r}_{i+1})\}_{i=0}^{\tau-1},
$
where each record stores a previously evaluated action flow, its step-wise critiques, and the supervisor's revision reason.

The Robotic Task Supervisor aggregates the current action flow, the current critiques, the visual rollout, the task context, and the previous evolution history to produce a revised action flow:
\begin{equation}
    (\hat{\pi}_{\tau+1}(T_k), \hat{r}_{\tau+1})
    \sim
    p_{\mathrm{sup}}\left(
    \cdot
    \mid
    \hat{\pi}_{\tau}(T_k),
    m_\tau,
    X_k^{\mathcal{C}},
    \mathcal{T},
    \mathcal{H}_{<\tau};
    \epsilon_4
    \right),
\end{equation}
where $\hat{r}_{\tau+1}$ denotes the supervisor's global textual explanation for the revision.
The revised action flow $\hat{\pi}_{\tau+1}(T_k)$ may change the primitive sequence, adjust action parameters, or modify navigation waypoints. 
After each failed attempt, we update the history as
$
    \mathcal{H}_{<\tau+1}
    =
    \mathcal{H}_{<\tau}
    \cup
    \{(\hat{\pi}_\tau(T_k), m_\tau, \hat{r}_{\tau+1})\},
$
which helps the supervisor avoid repeatedly trying previously failed configurations. 
The self-evolution loop repeats until the goal condition $s_{\mathrm{goal}}(T_k)$ is satisfied or the maximum number of iterations is reached.

% The Robotic Task Supervisor performs global revision. It aggregates the complete visual history $X_k^{\mathcal{C}}$, the step-wise critiques $m$, the task context $\mathcal{T}$, the current action flow, and the history of previous evolution iterations. The revision from iteration $\tau$ to $\tau+1$ is modeled as
% \begin{equation}
%     (\hat{\pi}_{\tau+1},\hat{m}_{\tau+1})
%     \sim
%     p_E\left(
%     (\pi,m)
%     \mid
%     X_k^{\mathcal{C}},
%     m,
%     \mathcal{T},
%     \{(\hat{\pi}_i,\hat{m}_i)\}_{i=1}^{\tau};
%     \epsilon_4
%     \right).
% \end{equation}
% The revised action flow $\hat{\pi}_{\tau+1}$ may change the primitive sequence, adjust parameters, or modify navigation waypoints. The textual explanation $\hat{m}_{\tau+1}$ is stored as feedback history, preventing the system from repeatedly trying previously failed configurations. The self-evolution loop repeats until the goal condition $s_{\mathrm{goal}}$ is satisfied or a maximum number of iterations is reached. 
% In this way, \method~ converts open-loop primitive planning into a closed-loop generation process: the system proposes an action flow, executes it, diagnoses failures through visual feedback, and revises the action flow before storing the resulting data.

\section{Experiments}
We evaluate \method~ as an automated embodied data generation engine. Our experiments answer: 
(1) whether the generated primitive scene-task pairs are executable, 
(2) whether self-evolution improves complex long-horizon task execution, 
(3) how \method~ compares with existing automated data-generation pipelines, and 
(4) whether the generated demonstrations provide useful data for downstream policy learning.

\subsection{Experimental Setup}

We first evaluate primitive task generation on 102 automatically generated scene-task pairs reconstructed from 34 real-world scenes. 
These tasks include articulated-object operations ($\mathcal{T}_5$: open/close) and rigid-object manipulation ($\mathcal{T}_6$: pick/place). We then evaluate long-horizon execution on four representative tasks: 
$\mathcal{T}_1$: ``Put the glass into the refrigerator'',
$\mathcal{T}_2$: ``Put the apple into the refrigerator'',
$\mathcal{T}_3$: ``Put the apple into the bowl'',
$\mathcal{T}_4$: ``Put the cup on the table''.

\begin{table}[h]
    \vspace{-0.2cm}
    \centering
    \caption{SR on 102 automatically generated scene-task pairs.}
    \vspace{-0.2cm}
    \label{tab:primitive_tasks}
\resizebox{0.35\textwidth}{!}{
    \begin{tabular}{lccc}
    \toprule
    Category & Count & Success & SR (\%) \\
    \midrule
    $\mathcal{T}_5:$ Open/Close & 36 & 24 & 66.7 \\
    $\mathcal{T}_6:$ Pick/Place & 66 & 49 & 74.2 \\
    \midrule
    Total & 102 & 73 & 71.6 \\
    \bottomrule
    \end{tabular}
}
    \vspace{-0.2cm}
\end{table}

As summarized in Table~\ref{tab:primitive_tasks} and Figure~\ref{fig:combined_results}(a), our method achieves a 71.6\% overall success rate, with 66.7\% success on articulated-object operations and 74.2\% success on rigid-object manipulation. This result verifies that the generated task configurations and primitive action flows are executable across diverse reconstructed scenes. The scene generated examples and experiment details see Appendix~\ref{app:Scalable Scene Generation Statistics}.

\subsection{Task Execution and Baseline Comparison}

We report the main metric as \textbf{SR / Subtask-Level Score}. 
SR measures complete-task success, while the subtask-level score provides a diagnostic measure of partial execution quality under each system's own subtask definition and evaluator. 
The exact number of evaluated demonstrations or episodes for each method is provided in Appendix~\ref{app:Additional Details for Main Task Execution Comparison}. 
Detailed long-horizon task breakdowns and generated task configurations are provided in Appendix~\ref{app:Performance of Example Complex Long-Horizon Tasks}.

\begin{table*}[h]
\centering
\small
% \vspace{-0.4cm}
\caption{
Main task execution comparison. Each cell reports \textbf{SR / Subtask-Level Score} (\%). 
% The second metric is a subtask-level diagnostic score: for \method, it is computed as the average conditional success rate over symbolic subtasks; for GenSim2 and RoboGen, we use their native or pipeline-specific subtask-level progress estimates.
}
\vspace{-0.2cm}
\label{tab:main_comparison}
\resizebox{0.7\textwidth}{!}{
\begin{tabular}{lcccccc}
\toprule
Method 
& $\mathcal{T}_1$ & $\mathcal{T}_2$ & $\mathcal{T}_3$ & $\mathcal{T}_4$ & $\mathcal{T}_5$ & $\mathcal{T}_6$ \\
\midrule
\textbf{M1: Ours (Primitive)} 
& 0 / 50 & 0 / 50 & 40 / 70 & 0 / 50 & 66.7 / 66.7 & \textbf{74.2} / 74.2 \\
\midrule
\textbf{M2: Ours (Prim+Evo)}
& \textbf{40} / \textbf{84} & \textbf{30} / \textbf{79} & \textbf{70} / \textbf{93} & \textbf{70} / \textbf{85} & -- & -- \\
\midrule
M3: GenSim2 (Primitive)
& 0 / 73 & 5 / 78 & 55 / 91 & 0 / 68 & \textbf{100} / \textbf{100} & 38.8 / \textbf{82.1} \\
\midrule
M4: RoboGen (Primitive)
& 0 / 50 & 0 / 46.5 & 0 / 50 & 0 / 50 & 0 / 50.2 & 0 / 42.8 \\
\midrule
M5: RoboGen (Prim+RL)
& fail & fail & fail & fail & success & success \\
\bottomrule
\end{tabular}
}
\end{table*}

Table~\ref{tab:main_comparison} shows three main findings. 
First, self-evolution consistently improves long-horizon execution over primitive-only planning. 
Second, compared with RoboGen and GenSim2 under automated data-generation settings, \method~ achieves stronger complete-task success on the four representative long-horizon tasks, while also showing competitive subtask-level execution quality. 
Third, on simple task categories, \method~ remains competitive: it achieves 66.7\% on $\mathcal{T}_5$ and 74.2\% on $\mathcal{T}_6$, while GenSim2 is stronger on Open/Close but weaker on Pick/Place.

\subsection{Detailed Self-Evolution Performance}

To further analyze the role of self-evolution, we retain the per-task detailed results on long-horizon tasks, including task success rate (SR), evolution success rate (ESR), and the average number of iterations for successful executions. For $\mathcal{T}_1$ and $\mathcal{T}_3$, we compare Gemini-3-Flash, GPT-5.2, and the open-source model Qwen3-VL (Specifically, all Qwen3-VL experiments use \textit{qwen3-vl-235b-22a-thinking}); for others, we use Qwen3-VL. We set a strict failure condition: if any subtask exceeds 5 evolution iterations, the entire episode is marked as failed.

\begin{figure*}[h]
	\vspace{-0.2cm}
	\centering
	\includegraphics[width=\textwidth]{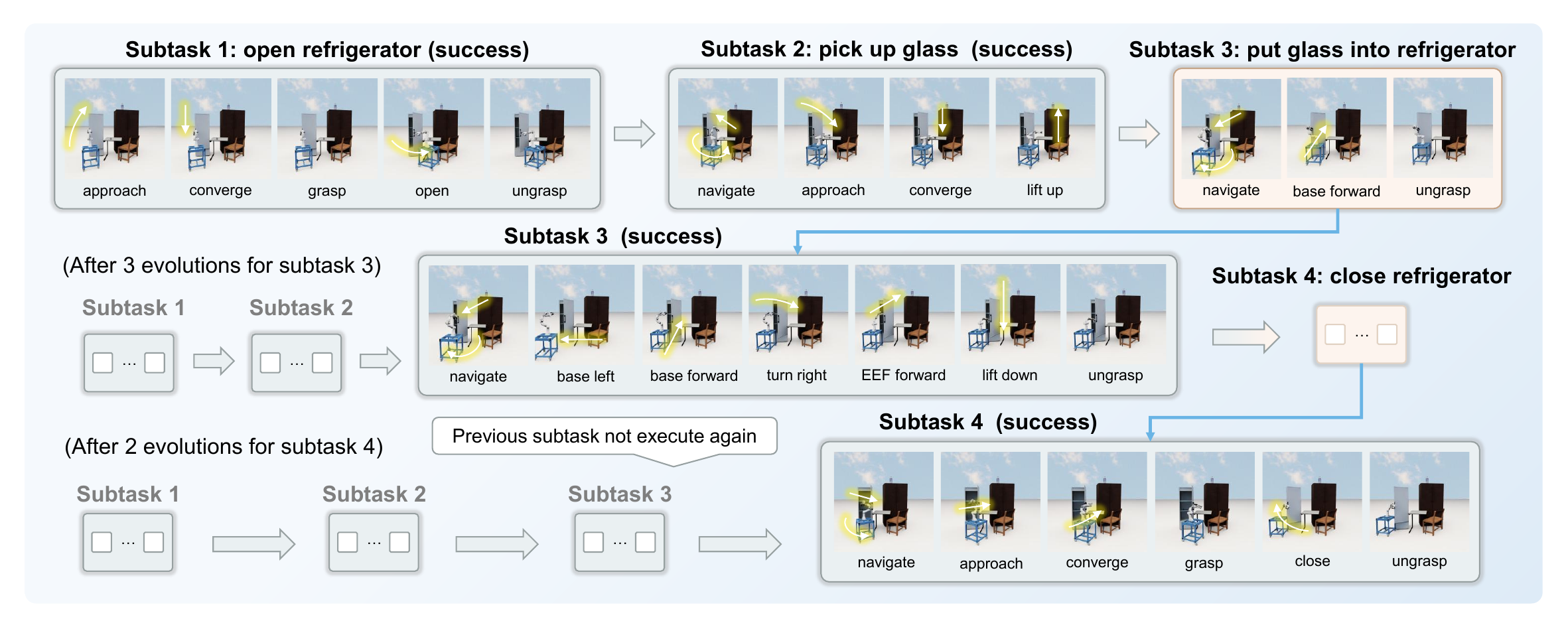}
    \vspace{-0.8cm}
	\caption{\textbf{Detailed Action Display of Self-Evolution.} We take the task $\mathcal{T}_1$ (Glass$\to$Refrigerator) as an example. The task is always decomposed into four subtasks: ``open refrigerator'', ``pick up glass'', ``put glass into refrigerator'', and ``close refrigerator''.}
	\label{fig:Detailed Action Display of Self-Evolution}
	\vspace{-0.4cm}
\end{figure*}

\begin{table*}[h]
\centering
\caption{Self-Evolution Performance on Complex Long-Horizon Tasks.}
\vspace{-.2cm}
\resizebox{0.8\textwidth}{!}{
\begin{tabular}{c|c|c|cc|ccc|ccc|ccc|cc}
\toprule
\multirow{2}{*}{\textbf{Tasks}} & \multirow{2}{*}{\textbf{VLM}} & \multirow{2}{*}{\textbf{Demos}} & \multicolumn{2}{c|}{\textbf{Subtask 1}} & \multicolumn{3}{c|}{\textbf{Subtask 2}} & \multicolumn{3}{c|}{\textbf{Subtask 3}} & \multicolumn{3}{c|}{\textbf{Subtask 4}} & \multicolumn{2}{c}{\textbf{Complete Task}} \\
& & & SR & Iter & SR & ESR & Iter & SR & ESR & Iter & SR & ESR & Iter & SR & Iter  \\
\midrule
\multirow{3}{*}{$\mathcal{T}_1$} 
& Gemini-3-Flash & 10 & \textbf{100} & 0 & \textbf{100} & -- & 0 & 30 & 30 & 1.7 & \textbf{100} & \textbf{100} & 0.7 & 30 & 2.3 \\
& GPT-5.2 & 10 & \textbf{100} & 0 & \textbf{100} & -- & 0 & 30 & 30 & 2 & 66.7 & 66.7 & 1.0 & 20 & 3.5 \\
& \textbf{Qwen3-VL} & 20 & \textbf{100} & 0 & \textbf{100} & -- & 0 & \textbf{45} & \textbf{45} & 2.7 & 88.9 & 66.7 & 0.5 & \textbf{40} & 3.1 \\
\midrule
$\mathcal{T}_2$ & Qwen3-VL & 10 & \textbf{100} & 0 & \textbf{100} & \textbf{100} & 0.3 & 40 & 40 & 1.8 & 75 & 66.7 & 0.7 & 30 & 2.3\\
\midrule
\multirow{3}{*}{$\mathcal{T}_3$} 
& \textbf{Gemini-3-Flash} & 10 & \textbf{100} & 0 & \textbf{70} & \textbf{62.5} & 1.6 & -- & -- & -- & -- & -- & -- & \textbf{70} & 1.6 \\
& GPT-5.2 & 10 & \textbf{100} & 0 & 50 & 16.7 & 0.2 & -- & -- & -- & -- & -- & -- & 50 & 0.2 \\
& Qwen3-VL & 10 & \textbf{100} & 0 & 50 & 16.7 & 0.4 & -- & -- & -- & -- & -- & -- & 50 & 0.4 \\
\midrule
$\mathcal{T}_4$ & Qwen3-VL & 10 & \textbf{100} & 0 & 70 & 60 & 1.0 & -- & -- & -- & -- & -- & -- & 70 & 1.0 \\
\bottomrule
\end{tabular}
}
% \vspace{-0.6cm}
\label{tab:self_evolution_detailed}
\end{table*}

In addition to the main success-rate improvements, Table~\ref{tab:self_evolution_detailed} reveals that self-evolution is most useful for later subtasks, where long-horizon execution accumulates navigation and placement errors. 
For example, later subtasks such as transporting and precise placing often require multiple correction iterations, while early subtasks such as opening or pickup are usually solved with initial planning. 
This observation is consistent with the qualitative evolution examples in Figure~\ref{fig:Detailed Action Display of Self-Evolution}.  

Detailed experimental results can be found in Appendix~\ref{app:Performance of Example Complex Long-Horizon Tasks},\ref{app:Performance of Self-Evolution}. Failures in the later stages mainly stem from situations such as objects falling or getting stuck at the edges, and these situations are usually difficult to correct within a few steps. For detailed failure analysis, see Appendix~\ref{app:Failure Details and Analysis}.

% \subsection{Repeated Decomposition and Ablation Studies}

% We also analyze the design choices behind \method. 
% First, we study repeated decomposition consistency. 
% For the same instruction ($\mathcal{T}_2$: ``put the apple into the refrigerator''), \method~ consistently decomposes the task into the same executable subtasks across repeated runs, while unconstrained baselines such as RoboGen produce varying subtask counts. 
% This result supports our claim that symbolic task definitions improve decomposition stability.

% Second, we conduct ablation studies on visual feedback and temporal context. 
% We investigate the impact of camera-view combinations and the look-back window $p_1$ used in the self-evolution mechanism. 
% The camera-view ablation evaluates different subsets of $\{\text{Global}, \text{Head}, \text{Wrist}\}$, and the context-window ablation varies the amount of recent action history. 
% Consistent with the original findings, the standard and full multi-view settings achieve the best overall performance, while using only a single view leads to more failures. 
% We also observe that increasing $p_1$ brings little success-rate gain but significantly increases latency, so we use $p_1=1$ in all main experiments.

\subsection{Repeated Decomposition and Ablation Studies}

We also analyze the design choices and generation cost of \method. 
First, we study repeated decomposition consistency. As shown in Figure~\ref{fig:combined_results}(b), for the same instruction ($\mathcal{T}_2$: ``put the apple into the refrigerator''), \method~ consistently decomposes the task into the same executable subtasks across repeated runs, while unconstrained baselines produce varying subtask counts. 
This result supports our claim that symbolic task definitions improve decomposition stability.

Second, we analyze the wall-clock time of \method. 
As shown in Figure~\ref{fig:combined_results}(c), a simple task without self-evolution takes about 2 minutes. 
For long-horizon tasks, initial action-flow generation takes 35s--1m20s, simulator execution takes 22s--1m30s per subtask, and a single evolution iteration takes 1m19s--2m57s using Qwen3-VL. 
A complete long-horizon demonstration typically takes about 15--30 minutes including all iterations. 
Because \method~ is an offline data generation engine, the full self-evolution loop only needs to find a successful action configuration once, after which domain randomization can rapidly generate diverse demonstration.

Third, we conduct ablation studies on visual feedback and temporal context. 
As shown in Figure~\ref{fig:combined_results}(d)(e), We investigate the impact of camera-view combinations and the look-back window $p_1$ used in the self-evolution mechanism. 
The camera-view ablation evaluates different subsets of $\{\text{Global}, \text{Head}, \text{Wrist}\}$, and the context-window ablation varies the amount of recent action history. 
Consistent with the original findings, the standard and full multi-view settings achieve the best overall performance, while using only a single view leads to more failures. 
% We also observe that increasing $p_1$ brings little success-rate gain but significantly increases latency, so we use $p_1=1$ in all main experiments.
We also observe that increasing the look-back window from $p_1=1$ to $p_1=2$ does not improve performance and may even degrade complete-task success, while also increasing VLM input length and inference latency. Therefore, we use $p_1=1$.

% \begin{figure*}[t]
%     \centering
%     \vspace{-0.2cm}
%     \includegraphics[width=0.95\linewidth]{figure/combined_results.png}
%     \vspace{-0.2cm}
%     \caption{
%     \textbf{Comprehensive Evaluation and Ablation Studies.} 
%     \textbf{(a)} Generalization performance across 34 diverse real-world scenes; results are split into two task types, with color indicating success (green) or failure (red). 
%     \textbf{(b)} Repeated decomposition study on the same instruction, showing the frequency distribution of generated subtask counts for RoboGen, GenSim2, and \method. 
%     \textbf{(c)} Wall-clock time comparison among \method, RoboGen, RoboGen+RL, and GenSim2. 
%     \textbf{(d)} Ablation study on camera views, comparing SR across different visual input configurations. 
%     \textbf{(e)} Ablation study on context window ($p_1$), analyzing the impact of history length on SR. 
%     All ablations are performed on task $\mathcal{T}_1$ using Qwen3-VL unless otherwise specified.
%     }
%     \vspace{-0.2cm}
%     \label{fig:combined_results}
% \end{figure*}

\begin{figure*}[t]
    \centering
    \vspace{-0.2cm}

    % Left column: (a)
    \begin{minipage}[t]{0.28\textwidth}
        \vspace{0pt}
        \centering
        \includegraphics[width=\linewidth]{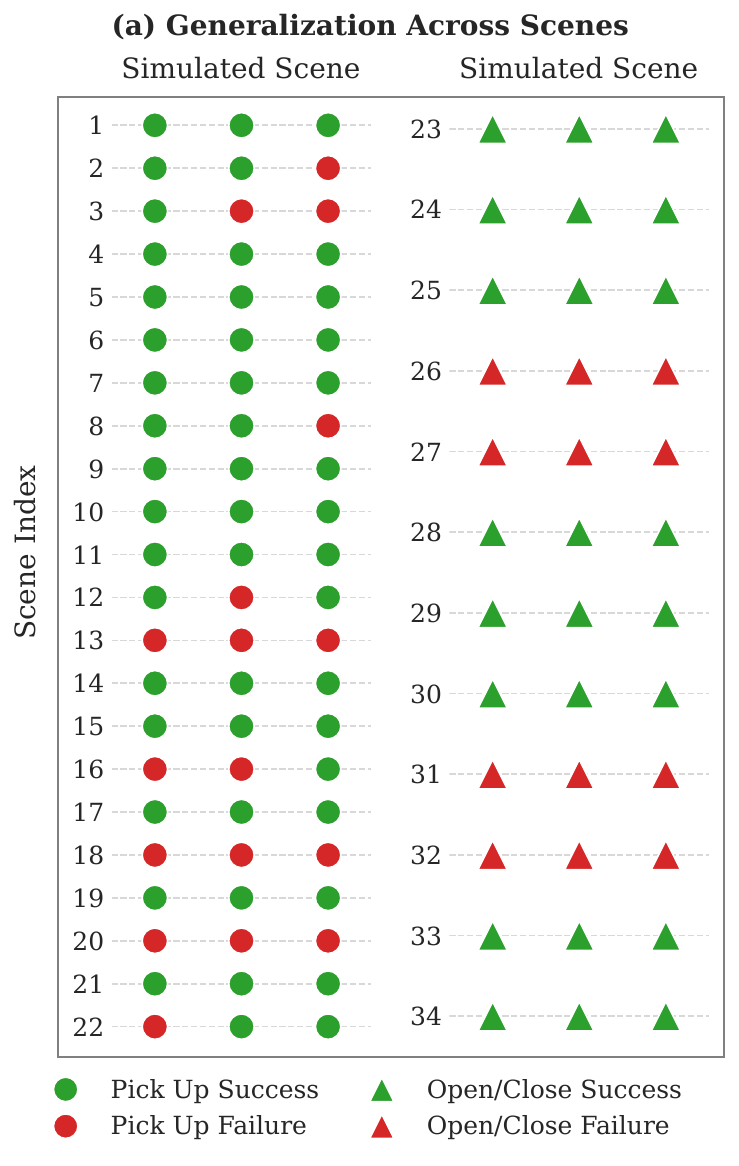}
    \end{minipage}
    \hspace{0.01\textwidth}
    % Right column: (b)(c)(d)(e)
    \begin{minipage}[t]{0.68\textwidth}
        \vspace{0pt}
        \centering

        \begin{minipage}[t]{0.49\linewidth}
            \centering
            \includegraphics[width=\linewidth]{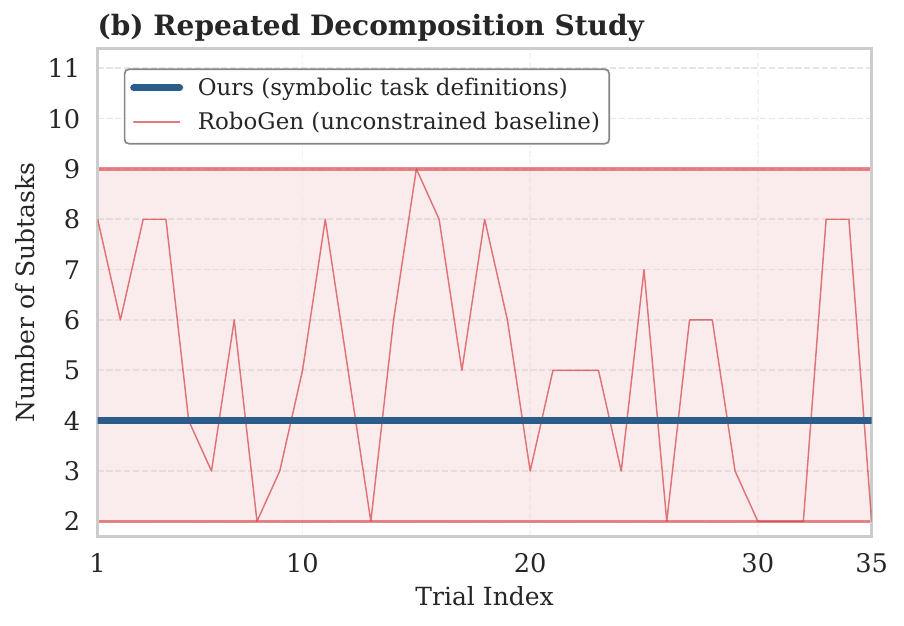}
        \end{minipage}
        \hfill
        \begin{minipage}[t]{0.49\linewidth}
            \centering
            \includegraphics[width=\linewidth]{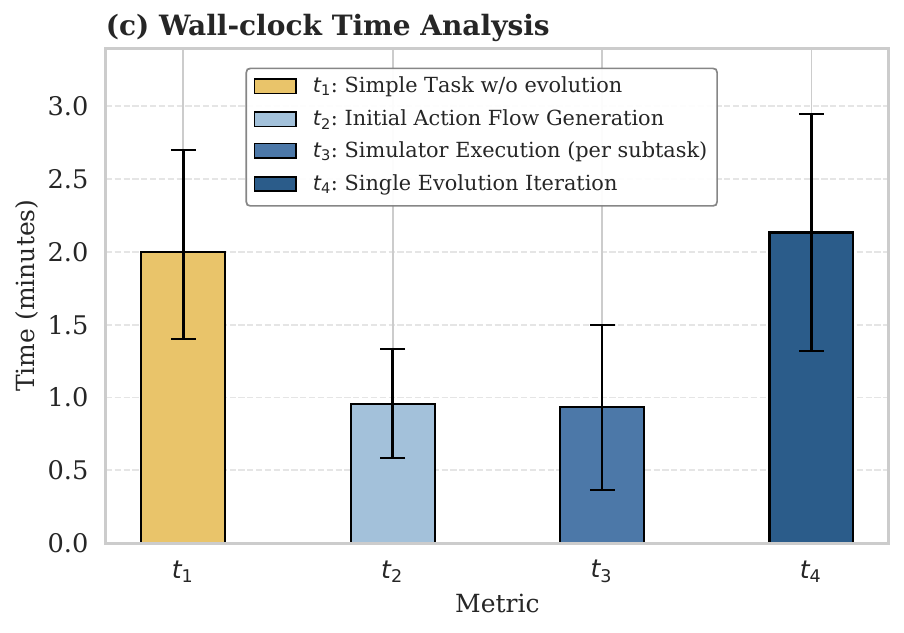}
        \end{minipage}

        \vspace{0.12cm}

        \begin{minipage}[t]{0.49\linewidth}
            \centering
            \includegraphics[width=\linewidth]{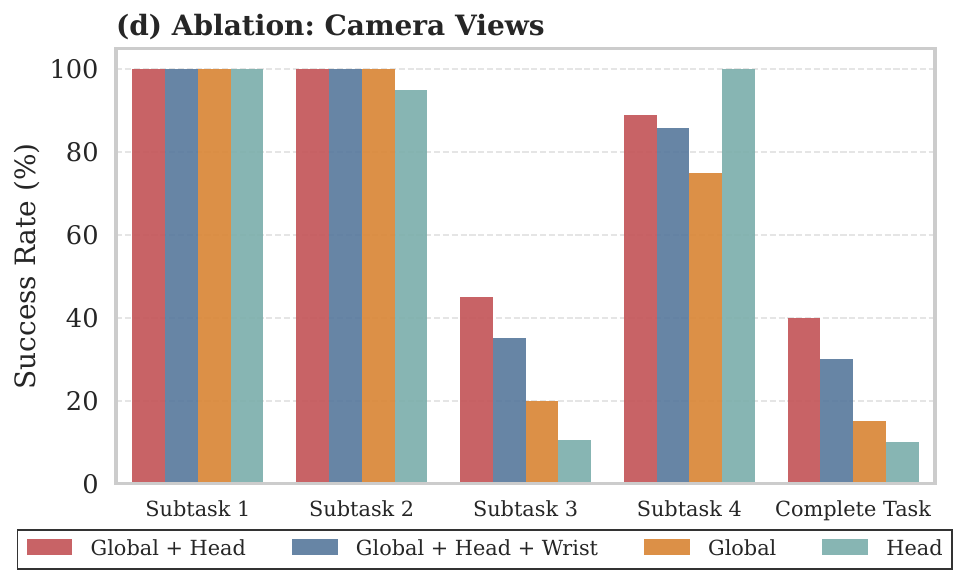}
        \end{minipage}
        \hfill
        \begin{minipage}[t]{0.49\linewidth}
            \centering
            \includegraphics[width=\linewidth]{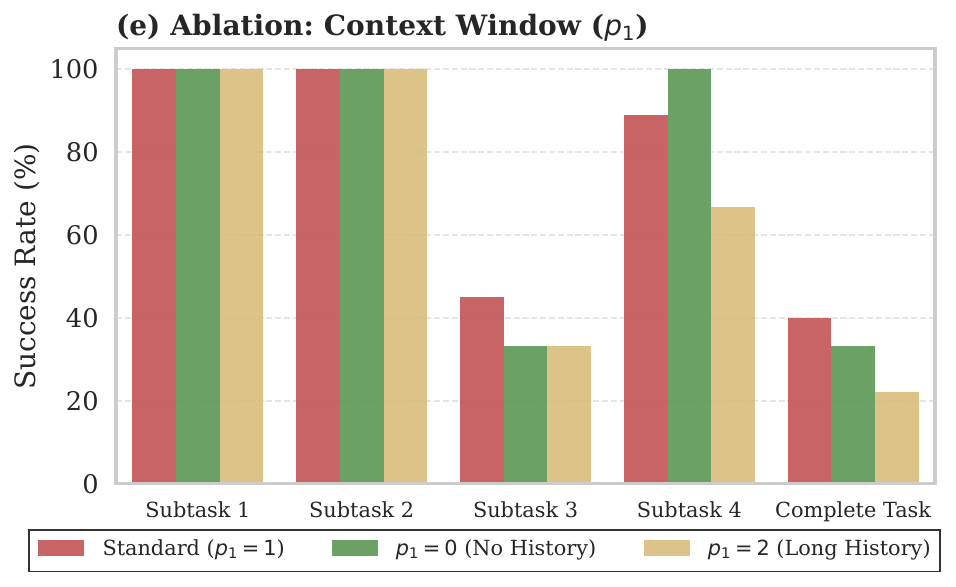}
        \end{minipage}
    \end{minipage}

    \vspace{-0.1cm}
    \caption{
    \textbf{Comprehensive Evaluation and Ablation Studies.}
    \textbf{(a)} Automatically generated 102 primitive scene-task pairs.
    \textbf{(b)} Repeated decomposition study on the same user query.
    \textbf{(c)} Wall-clock time analysis of \method. Bars show average time with the observed range. 
    \textbf{(d)} Ablation on visual input views for self-evolution. 
    \textbf{(e)} Ablation on the context window $p_1$. 
    Ablations are conducted on task $\mathcal{T}_1$.
    }
    \label{fig:combined_results}
    \vspace{-0.4cm}
\end{figure*}

\subsection{Downstream Policy Learning}

Finally, we evaluate whether the generated demonstrations can support downstream policy learning. 
We train behavior cloning policies using 100 expert demonstrations autonomously generated by \method, with randomization on target-object scale and rotation to improve data diversity. 
The policy uses a PointNet encoder for point clouds, fuses the point-cloud features with proprioceptive states, and applies a 4-layer Causal Transformer followed by an action MLP to predict 10-DoF actions (7-DoF arm/gripper and 3-DoF mobile base). 
% Detailed architecture and training settings are deferred to the appendix~\ref{app:Downstream Policy Learning}.
For details, see Appendix~\ref{app:Downstream Policy Learning}.

On two representative tasks, the learned policies achieve 96.0\% success (24/25) on \emph{Open Refrigerator} and 92.0\% success (23/25) on \emph{Pick Up Glass}. 
We emphasize that this experiment is not intended to claim a new policy-learning method; rather, it verifies that the data is directly usable for downstream imitation learning.

\section{Conclusion}
% We introduced \method, a multi-agent workflow for self-evolving embodied data generation. Given a real-world RGB image and a user query, \method~ constructs an interactive simulated scene and generates executable task configurations and offline robot-learning datasets. The core of \method~ is a structured workflow that combines graph-based task generation and feedback-driven self-evolution. By separating task generation into role-specialized modules and using feedback agents to inspect and revise failed executions, \method~ provides a more stable and inspectable pipeline for generating embodied demonstrations. The generated data can further support downstream imitation learning, showing the practical value of \method~ as an offline embodied data generation engine.

We introduced \method, a self-evolving framework for embodied data generation. Given a real-world RGB image and a user query, \method~ constructs an interactive simulated scene and generates executable task configurations and offline robot-learning datasets. The core of \method~ is an object-action graph formulation that connects object-centric task states through primitive action edges and inter-task transfer edges, together with a feedback-driven self-evolution mechanism for revising failed executions. The generated data can further support downstream imitation learning, showing the practical value of \method~ as an offline data generation engine.

There are several limitations to the current framework. First, \method~ is constrained by the capability of the primitive action library and the IK-based controller. At present, the executable action space mainly covers Open/Close, Pick/Place, and mobile-base navigation, and does not yet support fine-grained manipulation such as folding clothes or other dexterous tasks. Future work may introduce stronger RL-based agents or learned low-level controllers to expand the action library and handle more complex manipulation skills. Second, our scene construction stage relies on the Digital Cousins pipeline~\cite{acdc}, so failures may come from imperfect scene reconstruction or missing interactive assets, such as microwave assets that support both opening and button pressing. Future work can explore stronger scene reconstruction and asset-grounding methods to improve the robustness of the simulation substrate. Third, our self-evolution mechanism is still an early closed-loop implementation and depends heavily on the visual reasoning ability of the underlying VLM. A promising direction is to build a skill documentation library from successful tasks, where reusable correction strategies and action patterns are stored and retrieved to strengthen the feedback agents' ability to diagnose and repair future failures.
{
    \small
    \bibliographystyle{ieeenat_fullname}
    \bibliography{main}
}

% APPENDIX
\newpage
\appendix
\section*{Impact Statement}

This work aims to support scalable and reproducible embodied AI research by automatically generating executable robot interaction data in simulation. By producing task configurations, action traces, execution videos, and offline learning datasets, \method~ may reduce the manual effort required to construct embodied training data and make robot-learning experiments easier to inspect and reproduce. Since all demonstrations are generated and refined in simulation, the framework can also help identify unsafe or failed behaviors before real-world deployment. However, the quality and diversity of the generated data depend on the underlying scene reconstruction tools, simulation assets, primitive action library, and VLM feedback quality. We therefore encourage careful validation before using such generated data in real-world robotic systems.

\section{Implementation Details}

\subsection{Simulated Scene Generation Details}
\label{app:Simulated Scene Generation Details}

This section presents the complete process of simulated scene generation described in Section~\ref{sec:Scene-Conditioned Simulation Substrate}, which follows the reconstruction method of~\cite{acdc}. Given a single real-world RGB observation, the scene generation pipeline reconstructs an interactive simulated scene while preserving semantic affordances and object states. The pipeline is as follows:

\begin{enumerate}
\item\textbf{Real-world Extraction:} Obtain data and information from the real world, input the camera's internal parameter matrix $K$ and a single RGB image $X$, and output a set of object representations including object labels, masks, point clouds, etc., for the subsequent creation of simulated scenes. The main methods include observing and extracting objects (using LLM, \textit{GroundedSAM-v2}~\cite{sam2}), followed by depth estimation (\textit{DepthAnything-v2}~\cite{depth_anything_v2}) and extracting point clouds, etc.
\item\textbf{Assets Matching:} Hierarchical search is conducted on the asset dataset \textit{BEHAVIOR-1K}~\cite{behavior1k}. Through the \textit{CLIP} and \textit{DINOv2} models~\cite{clip, dinov2}, the most suitable assets are matched for each object. This step outputs the virtual assets and snapshots corresponding to each cousin in the corresponding direction.
\item\textbf{Simulated Scene Generation:} Improve the details such as asset location determination and scene physical stability (for example, use LLM to determine whether to install objects on the wall). And based on the \textit{OmniGibson} platform~\cite{behavior1k}, simulation scenes are generated, providing physical simulation and rendering including object states, etc.
\end{enumerate}

ManiAgent~\cite{maniagent} directly estimates the 3D coordinates of real objects from RGB images using \textit{Florence-2}, and then employs \textit{AnyGrasp} to estimate the 6-DoF pose of the target position and plan the path. In contrast, our work estimates the 3D coordinates of each object from RGB images while reconstructing the entire simulated environment. It retains advanced scene attributes, such as spatial object layout $S_0$, key semantics, and physical visibility, which offers advantages such as maintaining global physical consistency and preserving the state of complex long-horizon tasks.

\subsection{Implementation Details of Graph-Based Task Generation}
\label{app:task_generation_details}

In this section, we provide a comprehensive breakdown of the Graph-Based Task Generation pipeline, corresponding to the left component of Figure~\ref{fig:Graph-Based Task Generation with Self-Evolution} in the main paper. This process utilizes Large Language Models (LLMs) or Vision-Language Models (VLMs) to autonomously translate vague user commands into executable, physically grounded task specifications. The pipeline consists of three sequential stages: Semantic Expansion, Subtask Decomposition, and Task Instantiation.

\subsubsection{Semantic Expansion}
\label{app:Semantic Expansion}
The goal of this stage is to ground a potentially vague user query into a precise scene context.

\textbf{Input:}
\begin{itemize}
    \item \textbf{User Keyword ($\mathcal{I}$):} This can be a specific object name (e.g., ``refrigerator'', ``cup''), a specific action (e.g., ``pick up the cup''), or a high-level abstract instruction requiring reasoning (e.g., ``put the cup in the refrigerator'', ``clean the room'').
    \item \textbf{Scene Configuration (\textit{scene\_json}):} A JSON file containing the ground-truth metadata of the reconstructed scene, including object IDs, 3D coordinates, and bounding boxes. This is essential for grounding semantic concepts to specific simulation assets (e.g., mapping ``cup'' to the unique ID \textit{glass\_0}).
    \item \textbf{Scene Observation (Optional):} An RGB image of the scene. If provided, a VLM is employed for visual reasoning; otherwise, an LLM processes the textual JSON data.
\end{itemize}

\textbf{Output:}
\begin{itemize}
    \item \textbf{Task Name (\textit{task\_name}):} A strictly formatted string where all objects are replaced by their unique scene IDs (e.g., ``put glass\_0 into cabinet\_0'').
    \item \textbf{Detailed Task Description (\textit{task\_message\_detail}):} A step-by-step narrative of the task logic generated by the model. For example: \textit{``This is a kitchen scene. There is a cup (glass\_0) on top of a cabinet (cabinet\_0). The goal is to place the cup inside the cabinet. The robot must first open the cabinet door, approach the cup, grasp it, lift it up, navigate to the cabinet, and place it inside...''}
\end{itemize}

\subsubsection{Subtask Decomposition}
This stage breaks down the long-horizon task into a sequence of atomic simple tasks, bridging the definition in Section~\ref{sec:Symbolic Task Interface} of the main paper.

\textbf{Input:}
The \textit{task\_name} and \textit{task\_message\_detail} from~\ref{app:Semantic Expansion}, along with the \textit{scene\_json}.

\textbf{Output:}
A sequential list of subtask configurations (JSON). For a complex task $\mathcal{T} = \{T_1, T_2, \dots, T_K\}$, each subtask configuration contains:
\begin{itemize}
    \item \textbf{Subtask Name:} The atomic activity (e.g., ``Open Cabinet'', ``Pick up Cup'').
    \item \textbf{Description:} Detailed instructions specific to this subtask.
    \item \textbf{Target Object ID ($o_t$):} The primary object being manipulated.
    \item \textbf{Support Object ID ($o_{s_1},o_{s_2}$):} The two support objects are associated with the state definitions in the task: the first support object $o_{s_1}$ specifies the initial state relative to the target object (e.g., the table supporting the cup), while the second support object $o_{s_2}$ specifies the goal state relative to the target object (e.g., the container receiving the cup), they all extracted from the \textit{scene\_json}.
    \item \textbf{BDDL Category:} The specific object category mapped to the OmniGibson asset library, used for subsequent BDDL file generation.
\end{itemize}

\subsubsection{Subtask Instantiation}
\label{app:Subtask Instantiation}
For each decomposed subtask, we perform three parallel instantiation steps to ensure execution feasibility: BDDL Generation, Object Size Adjustment, and Action Flow Planning.

\paragraph{1. BDDL Generation (Initial State and Success Detection)}

We utilize the Behavior Domain Definition Language~\cite{behavior1k} (BDDL) to define the initial state ($s_{\text{init}}$) and success criteria ($s_{\text{goal}}$) for the simulation. The LLM generates a BDDL file based on task-specific templates.

This module takes the subtask configuration as input, including the target object and support-object categories, and outputs a standard \texttt{.bddl} file. Here are some example predicates:
    
\begin{itemize}
    \item $s_{\text{init}}$: \textit{inroom(robot, kitchen)}, \textit{ontop(glass\_0, table\_0)}.
    \item $s_{\text{goal}}$: \textit{open(fridge\_0)}, \textit{inside(apple\_0, bowl\_0)}, or \textit{not(ontop(cup\_0, table\_0))}.
\end{itemize}

\paragraph{2. Object Size Adjustment (Physical Feasibility)}

To ensure objects are graspable by the robot's parallel gripper (maximum width $\approx 0.06$m, depending on the robot size parameters), we dynamically adjust object scales based on their bounding box extents defined in \textit{scene\_json}. We classify objects into two types:

\begin{itemize}
    \item \textbf{Type A (Fixed/Articulated Fixtures):} Objects such as refrigerators, cabinets, microwaves, ovens, drawers, dishwashers, and tables. These are operated via handles or are static supports.
    \item \textbf{Type B (Manipulable Items):} Objects such as cups, glasses, apples, bottles, boxes, bowls, and knives. These must fit inside the gripper.
\end{itemize}

For type A, no scaling is applied, and return scale factor $S = 1.0$. For type B, we calculate the minimum dimension on the horizontal plane: $d_{\min} = \min(\text{bbox}_x, \text{bbox}_y)$. If $d_{\min} > 0.06$m, we compute a scaling factor $S$ such that $S \times d_{\min} \approx 0.05$m (ideal grasping width). Otherwise, $S=1.0$.

\paragraph{3. Action Flow Planning with Primitive Action Library}

Module 5 maps each simple subtask $T$ to an executable action flow using a primitive action library. 
Formally, the output action flow is defined as
\begin{equation}
    \pi(T)=\{a_i(\theta_i\mid T)\}_{i=1}^{n},
\end{equation}
where each $a_i$ is an atomic primitive selected from the library, and $\theta_i$ denotes its task-conditioned parameters. 
These parameters may include the target object $o_t$, support objects $o_{s_1},o_{s_2}$, navigation targets, joint limits, or continuous control values such as distances and angles. 
This notation reflects that primitive actions are not restricted to the target object alone, but may also involve support objects or scene-level navigation goals.

We define a library of \textbf{21 atomic primitives} in the \textit{BasicSkill} class, covering navigation, base movement, end-effector (EEF) operations, articulated-object interaction, and other task-conditioned primitives. 
This library serves as the executable interface shared by the initial action-flow planner and the self-evolution stage. 
During self-evolution, the Robotic Task Supervisor can revise the primitive sequence, replace unsuitable primitives, or tune primitive parameters while remaining within the valid action library.

Each primitive action is represented by an integer primitive ID and a structured parameter dictionary. 
The ID denotes the primitive type rather than an object identifier. 
For example, an action can be represented as
\begin{equation}
    (\mathrm{ID}, \mathrm{params}),
\end{equation}
where $\mathrm{ID}$ indexes a primitive such as \textit{NAVIGATE\_TO\_SUPPORT}, \textit{MOVE\_BASE\_FORWARD}, \textit{GRASP}, or \textit{ARTICULATE\_OPEN}, and $\mathrm{params}$ stores the required object references or continuous control values. 
Thus, a format such as \textit{\{``ID": value\}} should be interpreted as a primitive-ID-to-parameter mapping, not as an object-ID mapping.

We categorize the 21 atomic primitives according to their parameter requirements and functional domains. 
This categorization distinguishes between context-aware primitives that rely on the underlying solver for pose determination and parameterized primitives that allow explicit tuning during self-evolution.

\textbf{3.1. Parameter-Free Context-Aware Actions.}

These actions do not require explicit numerical parameters. 
They rely on the underlying motion planner, object state, and task context to determine the target pose or interaction target.
\begin{itemize}
    \item \textit{APPROACH}: Move the end-effector to a pre-grasp or pre-interaction approach pose.
    \item \textit{CONVERGE}: Finely align the end-effector with the target handle or object centroid.
    \item \textit{GRASP}: Close the gripper to grasp the target object.
    \item \textit{UNGRASP}: Open the gripper to release the currently held object.
    \item \textit{RETREAT}: Move the end-effector away from the target to a safe retreat pose.
    \item \textit{NAVIGATE\_TO\_TARGET}: Move the robot base near the target object $o_t$.
    \item \textit{NAVIGATE\_TO\_SUPPORT}: Move the robot base near the support object, typically $o_{s_2}$.
\end{itemize}

\textbf{3.2. End-Effector Translation.}

These primitives move the end-effector along a specified direction. 
The parameter is a continuous distance value in meters, represented as \textit{\{``ID": distance\_in\_meters\}}.
\begin{itemize}
    \item \textit{LIFT\_EEF\_UP}: Lift the end-effector vertically upward.
    \item \textit{LIFT\_EEF\_DOWN}: Move the end-effector vertically downward.
    \item \textit{MOVE\_EEF\_FORWARD}: Move the end-effector forward.
    \item \textit{MOVE\_EEF\_BACKWARD}: Move the end-effector backward.
    \item \textit{MOVE\_EEF\_LEFT}: Move the end-effector to the left.
    \item \textit{MOVE\_EEF\_RIGHT}: Move the end-effector to the right.
\end{itemize}

\textbf{3.3. Mobile Base Translation and Rotation.}

These primitives control the mobile base relative to the robot's current heading. 
For translation primitives, the parameter is a distance in meters, represented as \textit{\{``ID": distance\_in\_meters\}}.
\begin{itemize}
    \item \textit{MOVE\_BASE\_FORWARD}: Move the robot base forward.
    \item \textit{MOVE\_BASE\_BACKWARD}: Move the robot base backward.
    \item \textit{MOVE\_BASE\_LEFT}: Move the robot base left by holonomic sliding.
    \item \textit{MOVE\_BASE\_RIGHT}: Move the robot base right by holonomic sliding.
\end{itemize}

For rotation primitives, the parameter is an angle in degrees, represented as \textit{\{``ID": angle\_in\_degrees\}}.
\begin{itemize}
    \item \textit{TURN\_BASE\_LEFT}: Rotate the robot base counter-clockwise.
    \item \textit{TURN\_BASE\_RIGHT}: Rotate the robot base clockwise.
\end{itemize}

\textbf{3.4. Articulated-Object Interaction.}

These primitives interact with articulated objects such as doors, drawers, and refrigerators. 
The parameter specifies normalized joint limits or physical joint limits, represented as \textit{\{``ID": [min\_limit, max\_limit]\}}. 
We also allow \textit{None} as the parameter value, in which case the default solver limits are used.
\begin{itemize}
    \item \textit{ARTICULATE\_OPEN}: Execute an opening motion for a revolute or prismatic joint.
    \item \textit{ARTICULATE\_CLOSE}: Execute a closing motion for a revolute or prismatic joint.
\end{itemize}

\subsection{Implementation Details of Self-Evolution Mechanism}
\label{app:self_evolution_details}

This section provides implementation details of the \textbf{Self-Evolution Mechanism} in Figure~\ref{fig:Graph-Based Task Generation with Self-Evolution}. 
The main text describes the mathematical formulation of the Robotic Safety Inspector and the Robotic Task Supervisor. 
Here, we focus on the practical execution protocol, the VLM prompt logic, and the reset mechanism used during iterative refinement.

\paragraph{Resetting before each evolution attempt.}
A failed execution may irreversibly alter the scene, such as dropping an object, pushing it out of reach, or leaving the robot in an invalid configuration. 
Therefore, before evaluating each revised action flow, we reset the simulator to a saved subtask initial state $\bar{S}_{\mathrm{init}}^{(k)}$, which satisfies the symbolic initial condition $s_{\mathrm{init}}(T_k)$. 
This ensures that different candidate action flows are evaluated from the same subtask boundary state, making the self-evolution process reproducible and preventing later attempts from being affected by previous failed rollouts.

\subsubsection{Visual Data Pre-processing}

For each action step in the current action flow $\hat{\pi}_{\tau}(T_k)$, we capture RGB frames from multiple camera views $\mathcal{C}\subset\{\mathrm{Global},\mathrm{Head},\mathrm{Wrist}\}$. 
The \textit{Global View} provides a static scene overview, the \textit{Head View} captures the robot-centered perspective, and the \textit{Wrist View} focuses on fine-grained manipulation near the end-effector. 
As defined in the main text, each action-level visual sequence is uniformly downsampled to at most 6 frames, and the inspector receives a sliding-window visual context $X_{k,j}^{\mathcal{C}}$ containing the current action and the previous $p_1$ action steps. 
We use $p_1=1$ by default in the main experiments.

\subsubsection{Step-wise Critique by the Robotic Safety Inspector}

The Robotic Safety Inspector verifies each primitive action locally. 
For the $j$-th action in $\hat{\pi}_{\tau}(T_k)$, the inspector receives the action description and the local visual context $X_{k,j}^{\mathcal{C}}$, and outputs a step-wise critique $m_{\tau,j}$. 
The prompt asks the VLM to check three aspects:

\begin{itemize}
    \item \textbf{Motion Verification:} whether the robot movement matches the intended primitive action, such as whether the base reaches the target region or the end-effector moves in the expected direction.
    \item \textbf{State Contrast:} whether the intended state transition occurs by comparing the beginning and ending frames, such as whether the object is actually grasped or placed.
    \item \textbf{Anomaly Detection:} whether unintended collisions, object displacement, failed grasps, or unsafe motions occur during execution.
\end{itemize}

The output $m_{\tau,j}$ is a concise textual diagnosis of the current action step. 
The full critique sequence for the current attempt is denoted as
$m_\tau=\{m_{\tau,j}\}_{j=1}^{|\hat{\pi}_\tau(T_k)|}$, consistent with the notation in the main text.

\subsubsection{Task-Level Revision by the Robotic Task Supervisor}

The Robotic Task Supervisor performs global action-flow revision. 
It receives the current action flow $\hat{\pi}_{\tau}(T_k)$, the step-wise critique sequence $m_\tau$, the complete visual rollout $X_k^{\mathcal{C}}$, the task context $\mathcal{T}$, and the evolution history $\mathcal{H}_{<\tau}$. 
The history follows the definition in the main text:
\[
    \mathcal{H}_{<\tau}
    =
    \{(\hat{\pi}_i(T_k), m_i, \hat{r}_{i+1})\}_{i=0}^{\tau-1},
\]
where each record stores a previously evaluated action flow, its step-wise critiques, and the supervisor's revision reason.

The supervisor prompt enforces three reasoning steps:
\begin{enumerate}
    \item \textbf{Root Cause Analysis:} identify the earliest failed or risky step, rather than only describing the final failure.
    \item \textbf{Action-Flow Revision:} revise the primitive sequence, tune action parameters, or modify navigation waypoints based on the diagnosis.
    \item \textbf{Library Constraint:} ensure that the revised action flow only uses valid primitives from the primitive action library in Appendix~\ref{app:task_generation_details}.
\end{enumerate}

The supervisor outputs a revised action flow $\hat{\pi}_{\tau+1}(T_k)$ and a global textual explanation $\hat{r}_{\tau+1}$. 
After each failed attempt, the history is updated with the evaluated action flow, its critiques, and the revision reason. 
This memory prevents the supervisor from repeatedly trying previously failed configurations.

\subsubsection{Algorithm Summary}

Algorithm~\ref{alg:self_evolution} summarizes the complete self-evolution procedure. 
The key implementation detail is that every candidate action flow is executed after resetting the simulator to the saved initial state $\bar{S}_{\mathrm{init}}^{(k)}$ of the current subtask.

\begin{algorithm}[h]
\caption{Self-Evolution via Visual Feedback}
\label{alg:self_evolution}
\begin{algorithmic}[1]
\STATE {\bfseries Input:} Initial action flow $\hat{\pi}_0(T_k)$, subtask $T_k$, task context $\mathcal{T}$, camera views $\mathcal{C}$, saved subtask initial state $\bar{S}_{\mathrm{init}}^{(k)}$, max iterations $\tau_{\max}$
\STATE Initialize evolution history $\mathcal{H}_{<0} \leftarrow \emptyset$
\FOR{$\tau = 0$ to $\tau_{\max}$}
    \STATE \textbf{Reset} simulator to $\bar{S}_{\mathrm{init}}^{(k)}$
    \STATE \textbf{Execute} $\hat{\pi}_{\tau}(T_k)$ in simulation and capture visual rollout $X_k^{\mathcal{C}}$
    \IF{success condition $s_{\mathrm{goal}}(T_k)$ is satisfied}
        \STATE {\bfseries Output:} $\hat{\pi}_{\tau}(T_k)$
        \STATE \textbf{Terminate}
    \ENDIF
    \STATE Initialize critique sequence $m_{\tau} \leftarrow \emptyset$
    \FOR{each action step $j$ in $\hat{\pi}_{\tau}(T_k)$}
        \STATE Construct sliding-window visual context $X_{k,j}^{\mathcal{C}}$
        \STATE $m_{\tau,j} \leftarrow \mathrm{VLM}_{\mathrm{Inspector}}\!\left(a_j, X_{k,j}^{\mathcal{C}}\right)$
        \STATE $m_{\tau}.\mathrm{append}(m_{\tau,j})$
    \ENDFOR
    \STATE $\left(\hat{\pi}_{\tau+1}(T_k), \hat{r}_{\tau+1}\right) \leftarrow \mathrm{VLM}_{\mathrm{Supervisor}}\!\left(\hat{\pi}_{\tau}(T_k), m_{\tau}, X_k^{\mathcal{C}}, \mathcal{T}, \mathcal{H}_{<\tau}\right)$
    \STATE $\mathcal{H}_{<\tau+1} \leftarrow \mathcal{H}_{<\tau} \cup \left\{\left(\hat{\pi}_{\tau}(T_k), m_{\tau}, \hat{r}_{\tau+1}\right)\right\}$
\ENDFOR
\STATE {\bfseries Output:} Failure
\end{algorithmic}
\end{algorithm}

%%%%%%%%%%%%%%%%%% Theoretical Supplement %%%%%%%%%%%%%%%%%%%%%%
\section{Theoretical Supplement}
\label{app:theoretical_supplement}

This appendix provides additional theoretical details for the graph-based task formulation used in \method. 
We first present the object-action task graph and the compositional reachability proposition. 
We then provide a probabilistic interpretation of action transfer as stochastic functional switching, followed by the proof of Proposition~\ref{prop:Global Reachability}.

\subsection{Object-Action Task Graph and Compositional Reachability}
\label{app:Object-Action Task Graph and Compositional Reachability}

\paragraph{Universal object-action graph.}
Let $\mathcal{O}$ denote the set of all theoretically manipulable objects, $\mathcal{A}$ denote the universal set of atomic actions, and $\mathbb{N}^{+}$ denote the discrete temporal order within a simple task. We define a universal directed graph
\begin{equation}
    \mathcal{G} = (\mathcal{V}, \mathcal{E}), \qquad
    \mathcal{V} = \mathcal{O} \times \mathcal{A} \times \mathbb{N}^{+},
\end{equation}
where $\mathcal{E} \subset \mathcal{V} \times \mathcal{V}$ represents physically or semantically feasible transitions between object-action-time configurations. The temporal dimension denotes the relative order inside a subtask rather than a global timestamp.

\paragraph{Scene-specific graph.}
Given a reconstructed scene state $S_0$, we instantiate a scene-specific object-action graph:
\begin{equation}
    \mathcal{G}_{S_0}
    =
    \left(
    \mathcal{V}_{S_0},
    \mathcal{E}_{S_0}
    \right)\subset \mathcal{G}, \qquad
    \mathcal{V}_{S_0}
    =
    \mathcal{O}_{S_0} \times \mathcal{A}_{\mathrm{prim}} \times \mathbb{N}^{+},
\end{equation}
where $\mathcal{O}_{S_0}$ contains the objects present in the reconstructed scene, and $\mathcal{A}_{\mathrm{prim}}$ is the primitive action library. The global state at time $\tau$ is the aggregate of all object-pair relations in the scene:
\begin{equation}
    S_{\tau}
    =
    \{s_{\tau}(o_i,o_j)\}_{\{o_i,o_j\}\subset \mathcal{O}_{S_0}}
    \subset \mathcal{S}.
\end{equation}
This formulation allows us to describe both local subtask transitions and global scene consistency.

\paragraph{Intra-task and inter-task edges.}
For a simple task $T_k$, the action flow corresponds to a sequence of intra-task edges in the task-specific object-action slice required by $T_k$. 
Following the notation in the main text, we write $\pi(T_k)=\{a_i(\theta_i\mid T_k)\}_{i=1}^{n_k}$, where each primitive action $a_i$ may be parameterized by target objects, support objects, navigation goals, or continuous control values. These intra-task edges represent executable primitive transitions that move the scene toward the symbolic goal predicate of $T_k$.

For two consecutive subtasks $T_k$ and $T_{k+1}$, the action transfer $e_k = e(T_k,T_{k+1}\mid \mathcal{T})$ corresponds to an inter-task edge that links the terminal context of $T_k$ to the initial context of $T_{k+1}$. To describe this context switch, we introduce the augmented state
\begin{equation}
    S_{\tau}^{+}
    \triangleq
    S_{\tau+\Delta \tau}
    =
    \{s_{\tau+\Delta \tau}(o_i,o_j)\}_{\{o_i,o_j\}\subset \mathcal{O}_{S_0}},
\end{equation}
where $\Delta\tau>0$ is a small temporal margin. We use $S\models s_{\mathrm{init}}(T)$ and $S\models s_{\mathrm{goal}}(T)$ to indicate that the global state $S$ satisfies the symbolic initial or goal predicate of task $T$. For a valid task decomposition, the transfer from $T_k$ to $T_{k+1}$ should satisfy
\begin{equation}
    P\!\left(
    S_{\tau_k}^{+}\models s_{\mathrm{init}}(T_{k+1})
    \mid
    S_{\tau_k}\models s_{\mathrm{goal}}(T_k), e_k
    \right)>0.
\end{equation}
This probability does not mean that the physical scene must change discontinuously. Instead, it measures whether the terminal context of one subtask can be validly interpreted as the initialization context of the next subtask. A more detailed interpretation is provided in Appendix~\ref{app:action_transfer_probabilistic_interpretation}.

\paragraph{Compositional reachability.}
The following proposition gives a sufficient condition for long-horizon feasibility. 
It applies to a validly decomposed task sequence whose adjacent subtasks satisfy the required boundary compatibility. 
We keep the notation $\mathcal{T}=\{T_1,\ldots,T_K\}$ for consistency with the main text, where the braces denote the ordered subtask list generated by Module 2.

\begin{proposition}[Global Reachability via Hierarchical Composition]
\label{prop:Global Reachability}
Let $\mathcal{T}=\{T_1,\ldots,T_K\}$ be a validly decomposed long-horizon task in the scene-specific graph $\mathcal{G}_{S_0}\subset\mathcal{G}$. 
Assume the following conditions hold:

\noindent\textbf{1. Primitive Completeness.}
For every simple task $T_k$, if the current global state satisfies its initial predicate,
\begin{equation}
    S_{\tau_{k-1}}^{+}\models s_{\mathrm{init}}(T_k),
\end{equation}
then there exists an action flow
\begin{equation}
    \pi_k=\pi(T_k)=\{a_i(\theta_i\mid T_k)\}_{i=1}^{n_k}
\end{equation}
such that the subtask goal can be reached with positive probability:
\begin{equation}
    P\!\left(
    S_{\tau_k}\models s_{\mathrm{goal}}(T_k)
    \mid
    S_{\tau_{k-1}}^{+},\pi_k
    \right)>0.
\end{equation}

\noindent\textbf{2. Locality of Primitive Effects.}
For each action flow $\pi_k$, object-pair relations that are irrelevant to $T_k$ remain unchanged unless explicitly affected by the primitive action. 
% Equivalently, the transition induced by $\pi_k$ is local to the task-specific object-action slice required by $T_k$. 
% This is a standard frame assumption used for the structural reachability argument; it does not claim that arbitrary low-level controllers are free of side effects in practice.

\noindent\textbf{3. Connectivity of Context.}
For every pair of consecutive subtasks $T_k$ and $T_{k+1}$, their boundary predicates are compatible. 
That is, after completing $T_k$, there exists an action transfer
\begin{equation}
    e_k = e(T_k,T_{k+1}\mid \mathcal{T})
\end{equation}
such that the next subtask can be initialized with positive probability:
\begin{equation}
    P\!\left(
    S_{\tau_k}^{+}\models s_{\mathrm{init}}(T_{k+1})
    \mid
    S_{\tau_k}\models s_{\mathrm{goal}}(T_k), e_k
    \right)>0.
\end{equation}

Then there exist an action-flow sequence
\begin{equation}
    \Pi
    =
    \{\pi_k:
    \pi_k \sim p_F(\pi\mid \mathcal{T};\epsilon_1)\}_{k=1}^{K}
\end{equation}
and a transfer sequence
\begin{equation}
    E
    =
    \{e_k:
    e_k \sim p_T(e\mid \mathcal{T};\epsilon_2)\}_{k=1}^{K-1}
\end{equation}
that induce at least one witness trajectory
\begin{equation}
    \Gamma=
    (S_{\tau_1},S_{\tau_1}^{+},S_{\tau_2},S_{\tau_2}^{+},\ldots,S_{\tau_K})
\end{equation}
with positive joint probability:
\begin{equation}
\begin{split}
    &P(\Gamma\mid S_0,\Pi,E)
    =\\
    &\prod_{k=1}^{K}
    P(S_{\tau_k}\mid S_{\tau_{k-1}}^{+},\pi_k)
    \prod_{k=1}^{K-1}
    P(S_{\tau_k}^{+}\mid S_{\tau_k},e_k)>0,
\end{split}
\end{equation}
where the base case is defined as $S_{\tau_0}^{+}\triangleq S_0$. 
Consequently, the marginal probability of reaching the final goal is lower bounded by this witness trajectory:
\begin{equation}
    P\!\left(
    S_{\tau_K}\models s_{\mathrm{goal}}(T_K)
    \mid
    S_0,\Pi,E
    \right)
    \geq
    P(\Gamma\mid S_0,\Pi,E)
    >0.
\end{equation}
\end{proposition}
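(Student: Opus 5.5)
The proof will be an induction on the subtask index $k$ that builds the witness trajectory $\Gamma$ one segment at a time. At each step we fix a concrete successor state that carries positive conditional probability, using the Markov-style factorization over segments written in the statement. The base case is $S_{\tau_0}^{+}\triangleq S_0$. Because $\mathcal{T}$ is validly decomposed in $\mathcal{G}_{S_0}$, we have $S_0\models s_{\mathrm{init}}(T_1)$. The inductive invariant will be
\[
S_{\tau_{k-1}}^{+}\models s_{\mathrm{init}}(T_k)
\quad\text{and}\quad
P(\Gamma_{<k}\mid S_0,\Pi_{<k},E_{<k})>0,
\]
where $\Gamma_{<k}$ denotes the prefix $(S_{\tau_1},S_{\tau_1}^{+},\ldots,S_{\tau_{k-1}}^{+})$.

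The inductive step has two parts.

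\textbf{Intra-task segment.} Given the invariant, Primitive Completeness supplies an action flow $\pi_k$ with $P(S_{\tau_k}\models s_{\mathrm{goal}}(T_k)\mid S_{\tau_{k-1}}^{+},\pi_k)>0$. The event $\{S_{\tau_k}\models s_{\mathrm{goal}}(T_k)\}$ is a union of states in $\mathcal{S}$. If it is countable, or if we pass to a positive-measure set, at least one state $S_{\tau_k}$ in it has positive mass, and we select that one. Locality of Primitive Effects is used here to guarantee that relations outside the slice of $T_k$ are unchanged. This keeps any predicates of $s_{\mathrm{init}}(T_{k+1})$ that concern other objects intact, so the chosen state stays in the conditioning event of Connectivity.

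\textbf{Transfer segment.} Connectivity of Context gives an $e_k$ with $P(S_{\tau_k}^{+}\models s_{\mathrm{init}}(T_{k+1})\mid S_{\tau_k}\models s_{\mathrm{goal}}(T_k),e_k)>0$. We pick a positive-mass successor $S_{\tau_k}^{+}$ from this event, which restores the invariant for $k+1$.

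Next we identify the selected $\pi_k$ and $e_k$ with the samples in the statement. This requires that they lie in the support of $p_F(\cdot\mid\mathcal{T};\epsilon_1)$ and $p_T(\cdot\mid\mathcal{T};\epsilon_2)$. I would justify this by reading the existential "there exist $\Pi, E$" as a choice of realizations from those supports, since the generators range over the primitive library $\mathcal{A}_{\mathrm{prim}}$. The joint probability then factorizes by the chain rule, under the assumption that each segment depends only on its starting state and its own flow or transfer. A finite product of $2K-1$ positive factors is positive, which gives $P(\Gamma\mid S_0,\Pi,E)>0$.

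The final bound is immediate. Since $\Gamma$ ends in a state satisfying $s_{\mathrm{goal}}(T_K)$, the event $\{\Gamma\}$ is contained in the goal event, so monotonicity of probability gives the stated lower bound.

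The main obstacle is the gap between the conditioning used in Connectivity, which conditions on the event $\{S_{\tau_k}\models s_{\mathrm{goal}}(T_k)\}$, and what the product formula needs, which is conditioning on the specific state $S_{\tau_k}$ chosen in the witness. Positivity for the event only guarantees positivity for some goal-satisfying states, and these may not be the ones reachable under $\pi_k$. I would first try to handle this by selecting $S_{\tau_k}$ jointly: the event-conditioned probability is an average over goal states weighted by their reach probability under $\pi_k$. If we additionally assume, or enforce through Locality, that the weighting is concentrated on states reachable from $S_{\tau_{k-1}}^{+}$, then some reachable state has a positive transfer probability. Failing that, I would strengthen Connectivity so that it holds for every state satisfying $s_{\mathrm{goal}}(T_k)$, and note that this is the intended reading of boundary compatibility.
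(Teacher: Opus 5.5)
Your proposal is correct and takes essentially the same route as the paper's proof: build one witness trajectory from Primitive Completeness and Connectivity of Context, factor $P(\Gamma\mid S_0,\Pi,E)$ into finitely many strictly positive terms, and lower-bound the goal probability by monotonicity. The paper simply asserts that each factor is positive, so the points you flag are added rigor rather than a departure: event- versus state-level conditioning in Connectivity, the Markov factorization, $S_0\models s_{\mathrm{init}}(T_1)$, and whether the chosen $\pi_k,e_k$ lie in the supports of $p_F,p_T$ (which ``ranging over the primitive library'' does not by itself guarantee); your joint-selection argument closes the first of these once the conditional in Connectivity is read under the distribution of $S_{\tau_k}$ induced by executing $\pi_k$.
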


This proposition is a structural guarantee rather than an empirical success guarantee. 
It states that if Module 2 produces a valid decomposition with compatible symbolic boundaries, Module 3 instantiates checkable initial and goal predicates, and Module 5 finds action flows and transfer links satisfying the above assumptions, then the resulting task admits a feasible witness trajectory on the scene-specific object-action graph. 
The self-evolution stage improves the empirical realization of these conditions by correcting failed primitive sequences or adjusting their parameters. 
The proof is given in Appendix~\ref{app:proof_global_reachability}.

\subsection{Probabilistic Interpretation of Action Transfer}
\label{app:action_transfer_probabilistic_interpretation}

This section clarifies the probability used for action transfer. 
The transfer edge $e_k = e(T_k,T_{k+1}\mid \mathcal{T})$ does not necessarily correspond to an additional physical action. 
Instead, it represents a semantic and physical compatibility check between the terminal state of $T_k$ and the initial predicate of $T_{k+1}$.

Given a completed subtask $T_k$, the next subtask $T_{k+1}$ is valid only if the post-subtask state can satisfy the initial predicate required by $T_{k+1}$. 
We therefore define the transfer condition as
\begin{equation}
    P\!\left(
    S_{\tau_k}^{+}\models s_{\mathrm{init}}(T_{k+1})
    \mid
    S_{\tau_k}\models s_{\mathrm{goal}}(T_k), e_k
    \right)>0.
\end{equation}
Here $S_{\tau_k}^{+}$ denotes the state after a small transition margin following the completion of $T_k$. 
This notation captures the fact that the same physical scene may be reinterpreted under a new task context.

When the object sets of $T_k$ and $T_{k+1}$ are disjoint, this compatibility often reduces to checking that the objects required by $T_{k+1}$ remain in valid initial conditions. 
When the object sets overlap, for example when an object is picked in $T_k$ and placed in $T_{k+1}$, we do not rely on background invariance. 
Instead, the compatibility is checked directly through the boundary predicates:
\begin{equation}
    S_{\tau_k}\models s_{\mathrm{goal}}(T_k),
    \qquad
    S_{\tau_k}^{+}\models s_{\mathrm{init}}(T_{k+1}).
\end{equation}
Thus, action transfer acts as a semantic gatekeeper that determines whether the output context of one subtask can serve as the valid input context of the next.

\subsection{Proof of Proposition~\ref{prop:Global Reachability}}
\label{app:proof_global_reachability}

\begin{proof}
We prove the claim by constructing one feasible witness trajectory. 
By Primitive Completeness, for each subtask $T_k$ whose initial predicate is satisfied by $S_{\tau_{k-1}}^{+}$, there exists an action flow $\pi_k$ such that
\begin{equation}
    P\!\left(
    S_{\tau_k}\models s_{\mathrm{goal}}(T_k)
    \mid
    S_{\tau_{k-1}}^{+},\pi_k
    \right)>0.
\end{equation}
By the Locality of Primitive Effects assumption, the execution of $\pi_k$ does not invalidate unrelated object-pair relations required by subsequent subtasks unless such relations are explicitly modified by the task. 
Thus, the local transition of each subtask can be embedded into the global scene state with positive probability.

Next, by Connectivity of Context, for each adjacent pair $(T_k,T_{k+1})$, there exists a transfer edge $e_k$ such that
\begin{equation}
    P\!\left(
    S_{\tau_k}^{+}\models s_{\mathrm{init}}(T_{k+1})
    \mid
    S_{\tau_k}\models s_{\mathrm{goal}}(T_k),e_k
    \right)>0.
\end{equation}
This ensures that the terminal context of $T_k$ can serve as a valid initialization context for $T_{k+1}$.

Combining these local execution transitions and transfer transitions yields a witness trajectory
\begin{equation}
    \Gamma=
    (S_{\tau_1},S_{\tau_1}^{+},\ldots,S_{\tau_K}).
\end{equation}
By the chain rule along this specific trajectory,
\begin{equation}
\begin{aligned}
    &P(\Gamma\mid S_0,\Pi,E)
    =\\
    &\prod_{k=1}^{K}
    P(S_{\tau_k}\mid S_{\tau_{k-1}}^{+},\pi_k)
    \prod_{k=1}^{K-1}
    P(S_{\tau_k}^{+}\mid S_{\tau_k},e_k).
\end{aligned}
\end{equation}
Each factor is strictly positive by Primitive Completeness and Connectivity of Context. 
Since the product contains finitely many strictly positive terms, we have
\begin{equation}
    P(\Gamma\mid S_0,\Pi,E)>0.
\end{equation}
Because reaching the witness trajectory implies reaching the final goal predicate, the marginal probability of final-task success is at least the probability of this witness trajectory:
\begin{equation}
    P\!\left(
    S_{\tau_K}\models s_{\mathrm{goal}}(T_K)
    \mid
    S_0,\Pi,E
    \right)
    \geq
    P(\Gamma\mid S_0,\Pi,E)>0.
\end{equation}
Therefore, the validly decomposed long-horizon task is compositionally reachable under the stated assumptions.
\end{proof}

\section{More experimental results}
\label{app:more_exp_results}

\subsection{The Details of Three Camera View}
\label{app:The Details of Three Camera View}

In our Self-Evolution mechanism, the VLM receives visual feedback from a set of cameras $I\subset\{\text{Global, Head, Wrist}\}$. Each perspective offers distinct information density and spatial context, contributing differently to the planning and verification process.

\begin{figure*}[htbp]
    \centering
    \newcommand{\taskcell}[1]{\parbox{0.15\textwidth}{\centering \small #1}}
    \setlength{\tabcolsep}{1pt}
    \begin{tabular}{ccccccc}
        \includegraphics[width=0.15\textwidth]{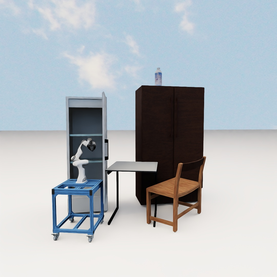} & 
        \includegraphics[width=0.15\textwidth]{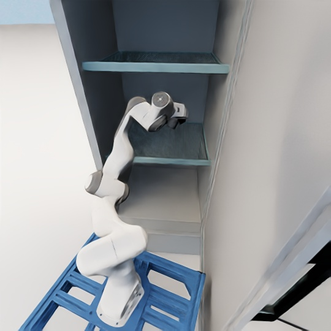} & 
        \includegraphics[width=0.15\textwidth]{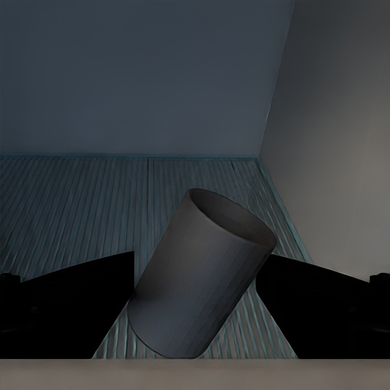} & \quad\  &
        \includegraphics[width=0.15\textwidth]{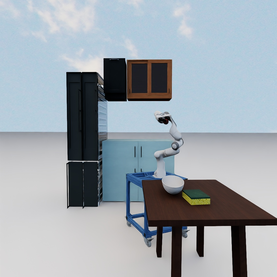} & 
        \includegraphics[width=0.15\textwidth]{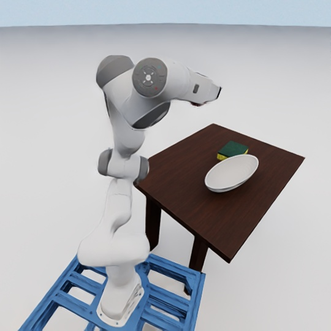} & 
        \includegraphics[width=0.15\textwidth]{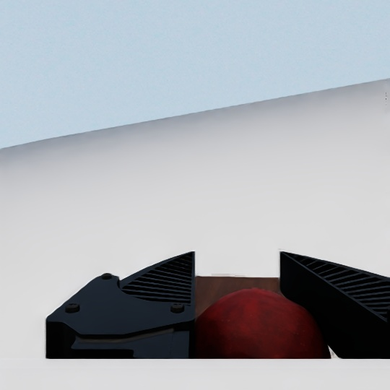} \\
        \taskcell{Global View} & \taskcell{Head View} & \taskcell{Wrist View} & & \taskcell{Global View} & \taskcell{Head View} & \taskcell{Wrist View} \\
    \end{tabular}
    \caption{\textbf{Visual Comparison of Camera Perspectives.} The figure is divided into two task scenarios. \textbf{Left (Columns 1-3):} The task ``Put the cup into the refrigerator''. The \textit{Global View} reveals the arm reaching inside the fridge, though the specific arm posture is occluded. The \textit{Head View} clearly shows the robot's joint state and the open fridge door. The \textit{Wrist View} captures the close-up of the cup within the gripper. \textbf{Right (Columns 4-6):} The task ``Place the apple into the bowl''. The \textit{Global View} shows the overall layout, but depth ambiguity makes it hard to judge if the apple is perfectly over the bowl. The \textit{Head View} compensates for this by showing the alignment relative to the base. However, since the gripper is in mid-air, the \textit{Wrist View} captures mostly empty space, providing little actionable information.}
\label{fig:camera_view_details}
\end{figure*}

\begin{enumerate}
\item \textbf{Global View (Static Scene Overview):} This view corresponds to the original perspective of the real-world input image. Since real-world photos lack intrinsic camera parameter matrices, we estimate these parameters during the simulated scene reconstruction process to align the simulation camera with the original photo's pose. This view provides the most comprehensive understanding of the task execution, capturing the relative positions of the robot, the target object, and the support surface simultaneously. It is the primary source for checking task completion logic.
\item \textbf{Head View (Robot-Following):} This camera is mounted virtually above the mobile base and moves synchronously with the robot. It serves as an approximation of a ``third-person'' view bound to the agent. This perspective allows for clear observation of the robotic arm's pose, joint configurations, and gripper status (open/closed). However, it lacks global environmental context. As shown in our ablation study Figure \ref{fig:combined_results} (d), relying solely on this view degrades performance in tasks requiring affordance reasoning—for example, it is often impossible to determine the opening direction of a refrigerator door from this angle, leading to navigation failures.

\item \textbf{Wrist View (End-Effector Camera):} Fixed to the robotic arm's wrist, this camera looks directly at the gripper's target. Its field of view is extremely limited, capturing only the immediate interaction area. While theoretically useful for fine-grained verification (e.g., checking if a cup is level), it offers almost no utility for the coarse trajectory planning and semantic reasoning required in our graph-based generation. Consequently, as indicated in Figure \ref{fig:combined_results} (d), adding this view provides negligible performance gains for our task, as the global and Head views provide sufficient information.
\end{enumerate}

\subsection{Robot System and Execution Details}
\label{app:robot_details}

\paragraph{Robot Configuration and Control.}
We employ a \textit{FrankaMounted} robotic arm equipped with a mobile base for all experiments. The agent perceives the environment through on-board RGB-D sensors. Motion planning is governed by an \textbf{Inverse Kinematics (IK)} controller. For each primitive action defined in the graph, the system first calculates the target coordinate and pose for either the mobile base or the End-Effector (EEF). The IK controller then generates a continuous motion path to reach these targets sequentially. To ensure safe interaction, we initialize the robot at a fixed offset distance relative to the target object before the start of any task execution.

\paragraph{Scale Adaptation.}
A challenge inherent to monocular 3D reconstruction is the ambiguity of absolute scale. Since the scene dimensions are estimated from a single image, the physical size of the reconstructed environment may vary from real-world norms. This discrepancy necessitates a rescaling of the robotic agent to match the simulation's affordances. Through empirical testing, we uniformly set the robot scale factor to $s=0.7$. While this setting works for the majority of our generated scenes, we observed a few failure cases still stemming from scale mismatch.

\paragraph{Grasping Hyperparameters.}
The precision of the \textit{CONVERGE} primitive, which moves the gripper to the final grasping position, is highly sensitive to the distance offset parameter. This offset dictates how far the gripper stops in front of the target coordinate.

If the offset is set too low, the gripper extends too far forward. For tasks like ``Pick up object from table'', this often causes the gripper to collide with the object or grasp it too high, leading to instability. If the offset is set too high, the gripper may stop prematurely, resulting in ``grasping air'' failure where the fingers close without contacting the object.

To address this, we define the convergence offset as a function of the robot's scale. In our experiments, we use a unified hyperparameter: $\delta_\text{offset}=0.14\times s$, where $s=0.7$. This adaptive value ensures that the gripper positioning remains consistent relative to the scaled robot body.

\subsection{Scalable Scene Generation Statistics}
\label{app:Scalable Scene Generation Statistics}

In this section, we provide detailed statistics on the dataset construction, visualize the generated environments, and evaluate the success rates of the autonomously generated atomic tasks. 

To ensure the diversity and realism of our evaluation environment, we constructed a dataset of \textbf{102 unique interactive scenarios}. The generation process follows a hierarchical pipeline:

\begin{enumerate}
    \item \textbf{Source Image Generation:} We utilized \textit{Qwen3-Max} and \textit{Seedream4.5}, a state-of-the-art generative model, to synthesize 34 high-quality real-world reference images. To cover the most common household manipulation settings, we prompted the model to generate diverse layouts. The distribution includes:
    \begin{itemize}
        \item \textbf{16 Kitchens:} Focusing on complex articulated objects like refrigerators, ovens, microwaves, and cabinets.
        \item \textbf{10 Bedrooms:} Focusing on wardrobes, drawers, and bedside tables.
        \item \textbf{8 Others (Study Rooms, Workshops, etc.):} Focusing on tables, shelves, and miscellaneous clutter.
    \end{itemize}
    \item \textbf{Simulated Scene Reconstruction:} Using the reconstruction pipeline~\cite{acdc}, we extracted semantic and geometric information from these 34 images.
    \item \textbf{Variation Augmentation:} To further expand the dataset, we generated \textbf{3 unique variations} for each real-world reference by randomizing the top-3 matched assets from the BEHAVIOR-1K dataset (e.g., matching a ``fridge" in the image to three different interactive fridge models in simulation). This resulted in a total of $34 \times 3 = 102$ simulation environments.
\end{enumerate}

\textbf{Automated Task Proposal.}

Unlike manual task design, our framework leverages the \textbf{Graph-Based Task Generation} module (detailed in Appendix~\ref{app:task_generation_details}) to autonomously populate these scenes with valid tasks. 

For each of the 102 scenes, we employ a VLM to analyze the spatial layout and propose a context-aware keyword (e.g., ``cup", ``open fridge"). The system then expands this keyword into a full task definition tuple $\langle \text{task\_name}, \text{task\_message}\rangle$. This automated pipeline ensures that the generated tasks are not only semantically aligned with the visual scene but also grounded in the available physical assets. Consequently, we obtained 102 distinct \textit{Scene-Task Pairs}, serving as a reliable benchmark for evaluating the robustness of our method.

\textbf{Visualization of Simulated Scene.} 

We visualize a subset of the generated pairs in Figure~\ref{fig:scene_vis}. The top row displays the reference real-world images generated by Qwen3-Max or Seedream-4.5, while the bottom row shows the reconstructed interactive simulated scenes in OmniGibson. Despite the domain gap, our framework successfully preserves the semantic layout (e.g., the relative position of the fridge and table) and functional affordances (e.g., the graspability of handles), providing a solid foundation for task execution.

\begin{figure*}[t]
    \centering
    \newcommand{\taskcell}[1]{\parbox{0.095\textwidth}{\centering \tiny #1}}
    \setlength{\tabcolsep}{1pt}
    \begin{tabular}{cccccccccc}
        % Row 1: Real Images
        \includegraphics[width=0.095\textwidth]{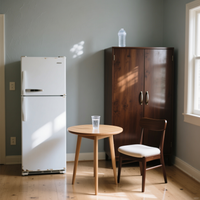} &
        \includegraphics[width=0.095\textwidth]{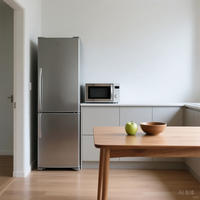} &
        \includegraphics[width=0.095\textwidth]{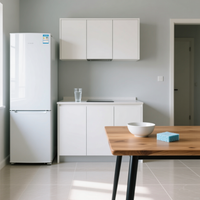} &
        \includegraphics[width=0.095\textwidth]{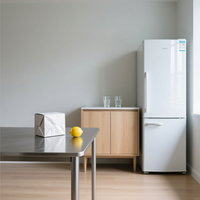} &
        \includegraphics[width=0.095\textwidth]{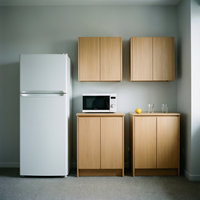} &
        \includegraphics[width=0.095\textwidth]{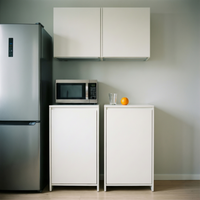} &
        \includegraphics[width=0.095\textwidth]{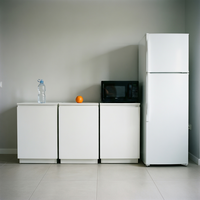} &
        \includegraphics[width=0.095\textwidth]{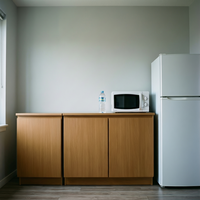} &
        \includegraphics[width=0.095\textwidth]{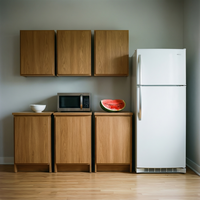} &
        \includegraphics[width=0.095\textwidth]{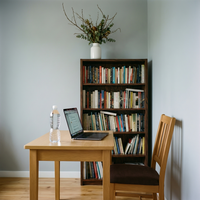} \\
        % Row 2: Simulated Scenes
        \includegraphics[width=0.095\textwidth]{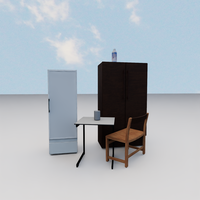} &
        \includegraphics[width=0.095\textwidth]{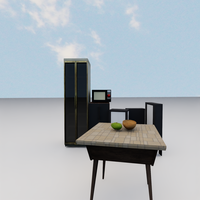} &
        \includegraphics[width=0.095\textwidth]{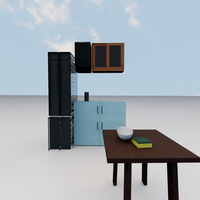} &
        \includegraphics[width=0.095\textwidth]{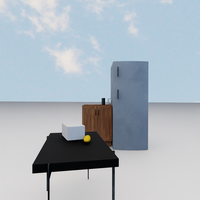} &
        \includegraphics[width=0.095\textwidth]{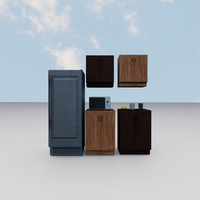} &
        \includegraphics[width=0.095\textwidth]{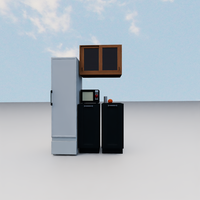} &
        \includegraphics[width=0.095\textwidth]{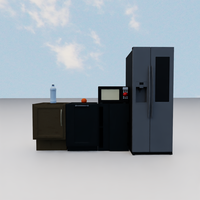} &
        \includegraphics[width=0.095\textwidth]{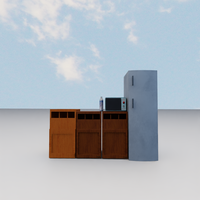} &
        \includegraphics[width=0.095\textwidth]{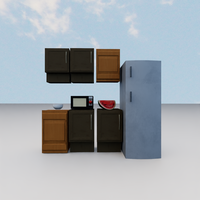} &
        \includegraphics[width=0.095\textwidth]{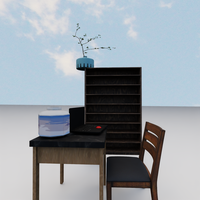} \\
        \taskcell{(1) Open Refrigerator} & 
        \taskcell{(2) Open Cabinet Under The Microwave} & 
        \taskcell{(3) Open The Bottom Cabinet} & 
        \taskcell{(4) Open Cabinet Under Cup} & 
        \taskcell{(5) Open Refrigerator} & 
        \taskcell{(6) Open Cabinet Under The Orange} & 
        \taskcell{(7) Open The Left Cabinet} & 
        \taskcell{(8) Pick Up Water Bottle On The Counter} & 
        \taskcell{(9) Open Cabinet Under The Microwave} & 
        \taskcell{(10) Pick Up Water Bottle On The Table} \\
        \\
        % Row 3: Real Images
        \includegraphics[width=0.095\textwidth]{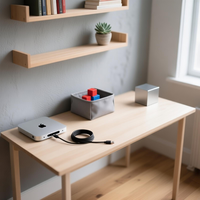} &
        \includegraphics[width=0.095\textwidth]{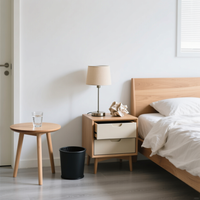} &
        \includegraphics[width=0.095\textwidth]{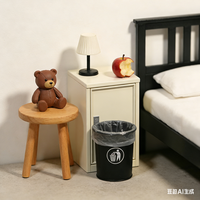} &
        \includegraphics[width=0.095\textwidth]{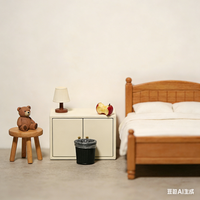} &
        \includegraphics[width=0.095\textwidth]{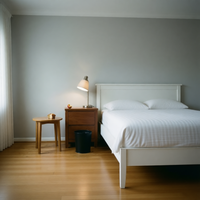} &
        \includegraphics[width=0.095\textwidth]{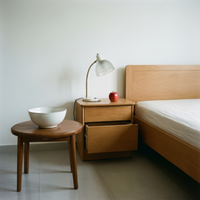} &
        \includegraphics[width=0.095\textwidth]{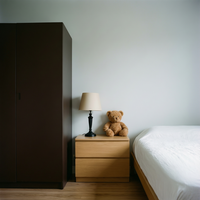} &
        \includegraphics[width=0.095\textwidth]{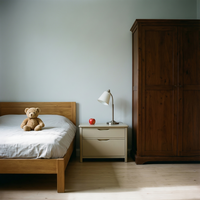} &
        \includegraphics[width=0.095\textwidth]{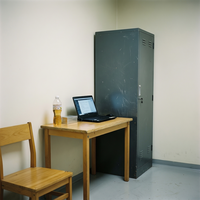} &
        \includegraphics[width=0.095\textwidth]{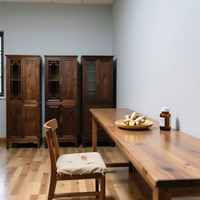} \\
        % Row 4: Simulated Scenes
        \includegraphics[width=0.095\textwidth]{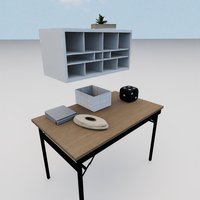} &
        \includegraphics[width=0.095\textwidth]{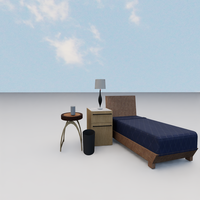} &
        \includegraphics[width=0.095\textwidth]{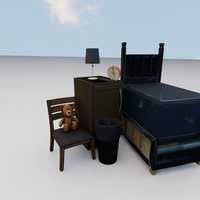} &
        \includegraphics[width=0.095\textwidth]{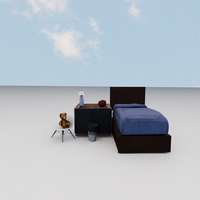} &
        \includegraphics[width=0.095\textwidth]{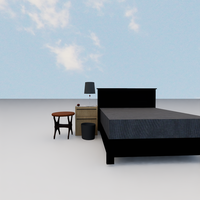} &
        \includegraphics[width=0.095\textwidth]{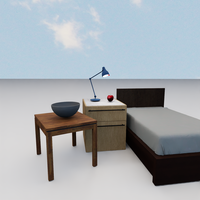} &
        \includegraphics[width=0.095\textwidth]{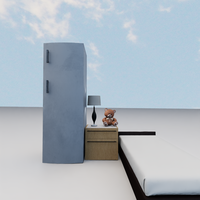} &
        \includegraphics[width=0.095\textwidth]{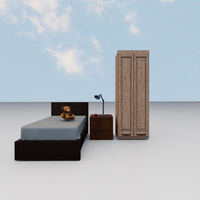} &
        \includegraphics[width=0.095\textwidth]{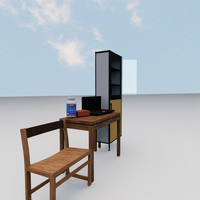} &
        \includegraphics[width=0.095\textwidth]{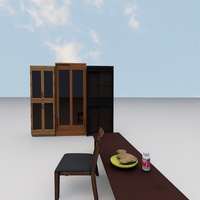} \\
        \taskcell{(11) Pick Up Black Die On The Table} & 
        \taskcell{(12) Pick Up Glass On The Table} & 
        \taskcell{(13) Pick Up Half Apple } & 
        \taskcell{(14) Pick Up Red Apple From Cabinet} & 
        \taskcell{(15) Pick Up Apple On The Table} & 
        \taskcell{(16) Pick Up Apple On The Nightstand} & 
        \taskcell{(17) Pick Up Teddy Bear On The Nightstand} & 
        \taskcell{(18) Pick Up Half Apple From Nightstand} & 
        \taskcell{(19) Pick Up Bottle Of Medicine On The Table} & 
        \taskcell{(20) Pick Up Bottle Of Medicine On The Table} \\
    \end{tabular}
    \caption{\textbf{Visualization of 20 Representative Scene-Task Pairs.} \textbf{Top Row:} Real-world reference images generated by Qwen3-Max or Seedream-4.5. \textbf{Bottom Row:} Reconstructed interactive simulation environments. Each column corresponds to a specific atomic task proposed by the VLM, demonstrating our framework's ability to generate diverse household scenarios.}
    \vspace{-0.4cm}
    \label{fig:scene_vis}
\end{figure*}

\textbf{Evaluation of Atomic Task Generation.}

To validate the effectiveness of our Graph-Based Task Generation mechanism, we evaluated the execution success rate on the 102 generated Scene-Task pairs. 

Since the primary goal here is to assess the feasibility of the generated tasks and the robustness of the graph-based planner, these tasks are primarily simple tasks (atomic primitives) such as ``Open Refrigerator" or ``Pick up Cup". For this experiment, we disabled the \textit{Self-Evolution} mechanism and relied solely on the initial open-loop plan generated by the graph solver.

As shown in Table~\ref{tab:primitive_tasks}, our method achieves an overall success rate of \textbf{71.6\%} across 102 tasks. By analyzing our experimental results, the following conclusions can be drawn:

\begin{itemize}
    \item \textbf{Feasibility:} The high success rate confirms that our reconstruction method and affordance-graph planning effectively bridge the sim-to-real gap, producing physically feasible task definitions.
    \item \textbf{Efficiency:} This result demonstrates that \method\ can serve as a low-cost, high-throughput engine for generating large-scale, high-quality demonstration data without human annotation.
    \item \textbf{Role of Self-Evolution:} The failure cases (approx. 28.4\%) were mostly due to edge cases (e.g., kinematic singularities in very narrow corners). This justifies our framework's design: using the efficient Graph-Based generator for the vast majority of data production, while reserving the computationally more expensive \textit{Self-Evolution} mechanism as a specialized agent to solve complex, long-horizon, or failure-prone edge cases.
\end{itemize}

\subsection{Additional Details for Main Task Execution Comparison}
\label{app:Additional Details for Main Task Execution Comparison}

This section provides additional details for the main task execution comparison in Table~\ref{tab:main_comparison}, including the number of evaluated demonstrations or episodes and the definitions of success rate (SR) and Subtask-Level Score. 
We use ``episode'' to denote a distinct evaluated task instance or scene-task configuration, and ``demo'' to denote one rollout under that episode.

\paragraph{Evaluation counts.}
Table~\ref{tab:eval_counts} summarizes the exact number of evaluated demonstrations or episodes for each method and task. 
For $\mathcal{T}_1$--$\mathcal{T}_4$, we evaluate representative long-horizon tasks. 
For $\mathcal{T}_5$ and $\mathcal{T}_6$, we evaluate primitive task categories, where $\mathcal{T}_5$ corresponds to Open/Close tasks and $\mathcal{T}_6$ corresponds to Pick/Place tasks.

\begin{table}[h]
\centering
\caption{
Exact number of evaluated demos and episodes for each method and task. 
Each cell reports all demos as \emph{demos/episode $\times$ episodes}.
}
\label{tab:eval_counts}
\small
\resizebox{0.45\textwidth}{!}{
\begin{tabular}{lcccccc}
\toprule
Method 
& $\mathcal{T}_1$ 
& $\mathcal{T}_2$ 
& $\mathcal{T}_3$ 
& $\mathcal{T}_4$ 
& $\mathcal{T}_5$ 
& $\mathcal{T}_6$ \\
\midrule
Ours (primitive) 
& 10 $(10{\times}1)$ 
& 10 $(10{\times}1)$ 
& 10 $(10{\times}1)$ 
& 10 $(10{\times}1)$ 
& 36 $(1{\times}36)$ 
& 66 $(1{\times}66)$ \\
Ours (prim+Evo) 
& 20 $(20{\times}1)$ 
& 10 $(10{\times}1)$ 
& 10 $(10{\times}1)$ 
& 10 $(10{\times}1)$ 
& -- 
& -- \\
GenSim2 (primitive)
& 20 $(20{\times}1)$ 
& 20 $(20{\times}1)$ 
& 20 $(20{\times}1)$ 
& 20 $(20{\times}1)$ 
& 80 $(20{\times}4)$ 
& 80 $(20{\times}4)$ \\
RoboGen (primitive)
& 10 $(10{\times}1)$ 
& 10 $(10{\times}1)$ 
& 10 $(10{\times}1)$ 
& 10 $(10{\times}1)$ 
& 40 $(10{\times}4)$ 
& 40 $(10{\times}4)$ \\
RoboGen (prim+RL) 
& 1 
& 1 
& 1 
& 1 
& 1 
& 1 \\
\bottomrule
\end{tabular}
}
\end{table}

For GenSim2 and RoboGen, all evaluations are executed using their native automated pipelines. 
For RoboGen (prim+RL), we evaluate one task-specific RL run for each task as a reference solver setting. 
The evaluated RoboGen+RL tasks are: $\mathcal{T}_1$ Put the bottle into the refrigerator, $\mathcal{T}_2$ Put the mouse into the refrigerator, $\mathcal{T}_3$ Put the mouse into the bowl, $\mathcal{T}_4$ Put the bottle on the table, $\mathcal{T}_5$ Open Microwave, and $\mathcal{T}_6$ Pick Bottle.

\paragraph{Success rate.}
For each task, the success rate is computed by averaging over all evaluated rollouts. 
Let $E$ be the number of evaluated episodes and $D_e$ be the number of demos evaluated in episode $e$. 
Then SR is computed as
\begin{equation}
    \mathrm{SR}
    =
    \frac{1}{\sum_{e=1}^{E} D_e}
    \sum_{e=1}^{E}
    \sum_{i=1}^{D_e}
    \mathbf{1}[\mathrm{Success}_{e,i}]
    \times 100\% .
\end{equation}
For our long-horizon tasks $\mathcal{T}_1$--$\mathcal{T}_4$, each task is evaluated in one fixed scene-task episode with multiple demos, e.g., $10{\times}1$ means 10 demos in one episode. 
For our primitive task categories $\mathcal{T}_5$ and $\mathcal{T}_6$, we evaluate many scene-task episodes with one demo each, e.g., $1{\times}36$ for Open/Close and $1{\times}66$ for Pick/Place. 
For complex long-horizon tasks, a demo is counted as successful only if all subtasks are successfully completed.

\paragraph{Subtask-Level Score.}
Besides complete-task SR, we report a Subtask-Level Score to measure partial execution quality. 
Since \method, GenSim2, and RoboGen define and evaluate subtasks differently, this second metric should be interpreted as a diagnostic subtask-level measure rather than a strictly identical mathematical quantity across all systems.

For \method, each long-horizon task is decomposed into a fixed sequence of symbolic subtasks. 
We compute the Subtask-Level Score as the average conditional success rate over these subtasks. 
Let $K$ be the number of subtasks. 
For subtask $T_k$, we compute its success rate among the demos that reach $T_k$:
\begin{equation}
    \mathrm{CondSR}(T_k)
    =
    \frac{
    \sum_i \mathbf{1}[\mathrm{Reach}_{i,k}]
    \mathbf{1}[\mathrm{Success}_{i,k}]
    }{
    \sum_i \mathbf{1}[\mathrm{Reach}_{i,k}]
    } .
\end{equation}
The Subtask-Level Score of \method~ is then
\begin{equation}
    \mathrm{SubtaskScore}^{\method}
    =
    \frac{1}{K}
    \sum_{k=1}^{K}
    \mathrm{CondSR}(T_k)
    \times 100\% .
\end{equation}
This metric measures how reliably each local subtask can be executed once its precondition is reached. 
For our primitive task categories $\mathcal{T}_5$ and $\mathcal{T}_6$, each evaluated task contains one subtask, so the Subtask-Level Score is identical to SR.

For GenSim2, we use its native progress evaluator. 
GenSim2 defines task-specific success criteria and maintains a progress vector over these criteria; the reported score is the average completion value returned by its native evaluator. 
For RoboGen, because the generated task structure is not fixed and may contain both primitive and RL-type subtasks, we compute the ratio of completed generated subtasks to all generated subtasks for each episode and then average across episodes. 
In the primitive-only RoboGen setting, RL subtasks are disabled, so early primitive subtasks may be completed while later subtasks fail, leading to non-zero subtask-level scores even when complete-task SR is $0\%$.

It is worth noting that, although $\mathcal{T}_5$ and $\mathcal{T}_6$ are treated as one-subtask primitive categories in our evaluation, RoboGen may still decompose the corresponding tasks into multiple subtasks. 
Therefore, RoboGen's Subtask-Level Score and SR can differ even on $\mathcal{T}_5$ and $\mathcal{T}_6$.

\subsection{Performance of Example Complex Long-Horizon Tasks}
\label{app:Performance of Example Complex Long-Horizon Tasks}

In this section, we provide a granular breakdown of the generation and instantiation process for the four representative complex long-horizon tasks ($\mathcal{T}_1$ to $\mathcal{T}_4$). We detail the intermediate outputs of our pipeline, from semantic expansion to action flow planning, to demonstrate the reliability and logical consistency of~\method.

\subsubsection{Visual Overview of Examples Scenarios}

To intuitively present the experimental setup, Figure \ref{fig:task_overview_grid} visualizes the four tasks using a structured grid. Each complex long-horizon task will display real-world picture, simulated reconstruction scene, main target object, main support object, and the user keyword.

\begin{figure*}[t]
    \centering
    \newcommand{\taskcell}[1]{\parbox{0.11\textwidth}{\centering \small #1}}
    \setlength{\tabcolsep}{1pt}
    \begin{tabular}{ccccccccccc}
        % Row 1: Real-Sim Pairs
        \includegraphics[width=0.11\textwidth]{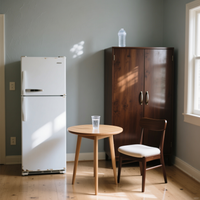} & 
        \includegraphics[width=0.11\textwidth]{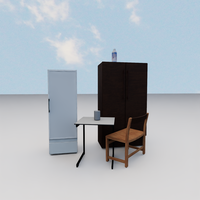} & \quad\  &
        \includegraphics[width=0.11\textwidth]{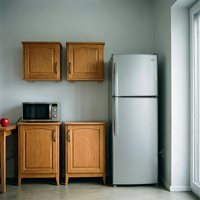} & 
        \includegraphics[width=0.11\textwidth]{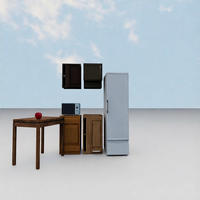} & \quad\  &
        \includegraphics[width=0.11\textwidth]{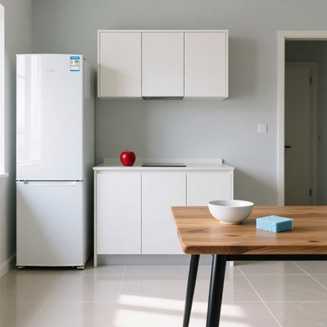} & 
        \includegraphics[width=0.11\textwidth]{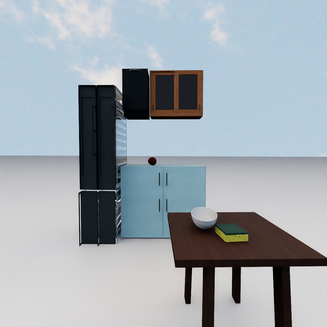} & \quad\  &
        \includegraphics[width=0.11\textwidth]{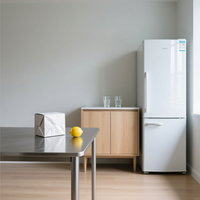} & 
        \includegraphics[width=0.11\textwidth]{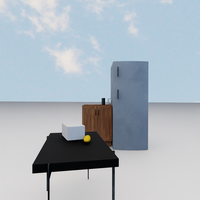}\\
        % \multicolumn{2}{c}{\tiny (Task 1: Real vs Sim)} & & \multicolumn{2}{c}{\tiny (Task 2: Real vs Sim)} & & \multicolumn{2}{c}{\tiny (Task 3: Real vs Sim)} \\
        % \\
        % Row 2: Object Details
        \includegraphics[width=0.11\textwidth]{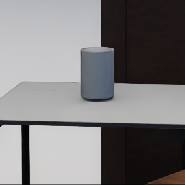} & 
        \includegraphics[width=0.11\textwidth]{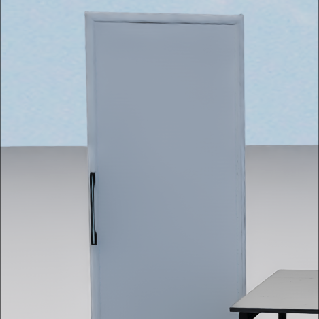} & &
        \includegraphics[width=0.11\textwidth]{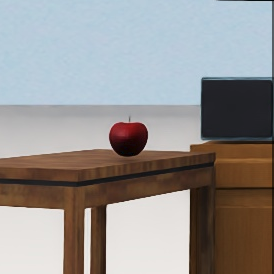} & 
        \includegraphics[width=0.11\textwidth]{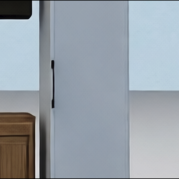} & &
        \includegraphics[width=0.11\textwidth]{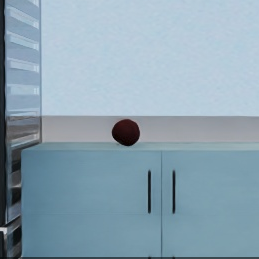} & 
        \includegraphics[width=0.11\textwidth]{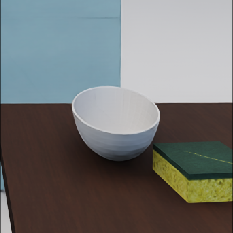} & &
        \includegraphics[width=0.11\textwidth]{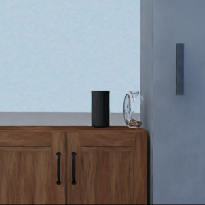} & 
        \includegraphics[width=0.11\textwidth]{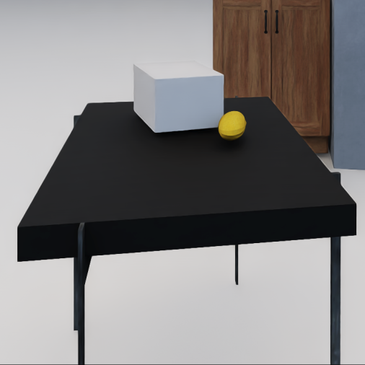}\\
        \taskcell{glass\_0} & \taskcell{refrigerator\_0} & & \taskcell{apple\_0} & \taskcell{refrigerator\_0} & & \taskcell{apple\_0} & \taskcell{bowl\_0} & & \taskcell{glass\_0} & \taskcell{table\_0} \\
        \\
        % Row 3: Keywords
        \multicolumn{2}{c}{\taskcell{``Transport a glass into a refrigerator''}} & & 
        \multicolumn{2}{c}{\taskcell{``Place an apple into a refrigerator''}} & & 
        \multicolumn{2}{c}{\taskcell{``Transfer an apple into a bowl''}} & & 
        \multicolumn{2}{c}{\taskcell{``Place a cup onto a table''}} \\
    \end{tabular}
    \caption{\textbf{Visual Overview of Complex Long-Horizon Tasks.} \textbf{Row 1:} Pairs of real-world reference images and reconstructed simulation environments. \textbf{Row 2:} Close-up views of the main target object and support object matched from the simulation environments. \textbf{Row 3:} The user-provided keywords that seed the autonomous task generation process.}
    \label{fig:task_overview_grid}
\end{figure*}

\subsubsection{Semantic Expansion Results}

Upon receiving the user keyword, scene configuration, and observation, the VLM performs Semantic Expansion. While the exact phrasing may vary slightly across different demo runs due to the probabilistic nature of VLMs, the semantic intent remains strictly consistent with the user's instructions. The following are representative outputs for a single demo of each complex long-horizon task:

\textbf{$\mathcal{T}_1$: Put the glass into the refrigerator}

\begin{OutputBox}
"task_activity_name": "open_the_refrigerator_and_put_the_glass_into_the_refrigerator",
"task_detail_message": "This is a kitchen scene, and there is a glass_0 on the table_0, and there is a refrigerator_0 on the left side. The task we need to accomplish is to put the glass_0 on the table_0 into the refrigerator_0. The robot needs to perform a sequence of actions. First, the robotic arm needs to approach the handle of the refrigerator_0 and rotate to open it. Second, pick up the glass_0 from the table_0 and place it inside the refrigerator_0. Finally, close the refrigerator_0."
\end{OutputBox}

\textbf{$\mathcal{T}_2$: Put the apple into the refrigerator}

\begin{OutputBox}
"task_activity_name": "open_the_refrigerator_and_put_the_apple_into_the_refrigerator",
"task_detail_message": "This is a modern kitchen scene. The initial condition is that there is a half_apple (apple_0) resting on top of a breakfast_table (table_0), and a refrigerator (refrigerator_0) is positioned to the right side of the scene, mounted against the wall. The task we need to accomplish is to put the apple_0 into the refrigerator_0. The robot needs to perform a sequence of actions: First, approach the refrigerator_0 and open its door by grasping and rotating the handle. Second, pick up the apple_0 from the table_0. Third, carefully place the apple_0 inside the refrigerator_0. Finally, close the refrigerator_0 door to complete the task."
\end{OutputBox}

\textbf{$\mathcal{T}_3$: Put the apple into the bowl}

\begin{OutputBox}
"task_activity_name": "pick_up_apple_and_put_into_bowl",
"task_detail_message": "This is a modern kitchen scene. The initial condition is that there is an apple_0 resting on the countertop of cabinet_0, and a bowl_0 placed on the wooden table_0 in the foreground. The task we need to accomplish is to pick up the apple_0 from the countertop and place it into the bowl_0. The robotic arm must first approach and grasp the apple_0 while it is stationary on the cabinet_0 surface. Then, without releasing the apple_0, the arm must move it over to the bowl_0 on the table_0 and gently release it inside the bowl_0. No container needs to be opened or closed as both objects are accessible and unobstructed."
\end{OutputBox}

\textbf{$\mathcal{T}_4$: Put the cup on the table}

\begin{OutputBox}
"task_activity_name": "put_the_glass_on_the_table",
"task_detail_message": "This is a modern kitchen scene. The initial condition is that there is a glass_0 (water_glass) currently located near the cabinet_0, not yet on the table_0. The task we need to accomplish is to place the glass_0 onto the table_0. The robotic arm needs to perform the following steps: First, approach and grasp the glass_0 from its current position. Second, lift and move the glass_0 toward the table_0. Third, carefully place the glass_0 onto the surface of the table_0, ensuring it is stable and properly positioned. No other objects need to be moved or manipulated during this task."
\end{OutputBox}

\subsubsection{Subtask Decomposition}

A defining characteristic of~\method~is its structured, graph-based decomposition. By modeling the task space as transitions between simple sub-tasks, we ensure that the decomposition is deterministic across hundreds of demos. Regardless of visual variations in the scene, the logical steps required to satisfy the complex goal remain invariant. 

Due to space limitations, this document only shows the names and descriptions of each sub-task within the complex long-horizon tasks. Information such as target objects, support objects, and BDDL names, which are included in the output configuration file, are not presented here. The following are representative outputs for a single demo of each complex long-horizon task:

\textbf{$\mathcal{T}_1$: Put the glass into the refrigerator}

\begin{OutputBox}
"sub_task_1": {
    "task_activity_name": "open_refrigerator",
    "task_detail_message": "First, the robotic arm needs to approach the handle of the refrigerator_0 and rotate to open it.",
    ...
},
"sub_task_2": {
    "task_activity_name": "pick_up_glass",
    "task_detail_message": "Second, pick up the glass_0 from the table_0. The robotic arm needs to navigate to the glass_0, approach it, grasp it, and lift it from the table_0.",
    ...
},
"sub_task_3": {
    "task_activity_name": "put_glass_into_refrigerator",
    "task_detail_message": "Third, place the glass_0 inside the refrigerator_0. The robotic arm needs to navigate to the refrigerator_0, move forward while holding the glass_0, and ungrasp to place it inside the refrigerator_0.",
    ...
},
"sub_task_4": {
    "task_activity_name": "close_refrigerator",
    "task_detail_message": "Finally, close the refrigerator_0. The robotic arm needs to approach the handle of the refrigerator_0 and rotate to close it.",
    ...
}
\end{OutputBox}

\textbf{$\mathcal{T}_2$: Put the apple into the refrigerator}

\begin{OutputBox}
"sub_task_1": {
    "task_activity_name": "open_refrigerator",
    "task_detail_message": "First, approach the refrigerator_0 and open its door by grasping and rotating the handle.",
    ...
},
"sub_task_2": {
    "task_activity_name": "pick_up_apple",
    "task_detail_message": "Second, pick up the apple_0 from the table_0. The robotic arm needs to navigate to the target apple_0, approach it, grasp it, and lift it off the table_0.",
    ...
},
"sub_task_3": {
    "task_activity_name": "put_apple_into_refrigerator",
    "task_detail_message": "Third, carefully place the apple_0 inside the refrigerator_0. The robotic arm needs to navigate to the refrigerator_0, move forward, and release the apple_0 inside the refrigerator_0.",
    ...
},
"sub_task_4": {
    "task_activity_name": "close_refrigerator",
    "task_detail_message": "Finally, close the refrigerator_0 door to complete the task. The robot needs to navigate to the refrigerator_0, get close to the handle, and rotate to close the door.",
    ...
}
\end{OutputBox}

\textbf{$\mathcal{T}_3$: Put the apple into the bowl}

\begin{OutputBox}
"sub_task_1": {
    "task_activity_name": "pick_up_apple",
    "task_detail_message": "The robotic arm must first approach and grasp the apple_0 while it is stationary on the countertop of cabinet_0. The target object is apple_0, and the support object is cabinet_0 as the success condition is that the apple_0 is no longer resting on the cabinet_0 surface.",
    ...
},
"sub_task_2": {
    "task_activity_name": "put_apple_into_bowl",
    "task_detail_message": "Then, without releasing the apple_0, the arm must move it over to the bowl_0 on the table_0 and gently release it inside the bowl_0. The target object remains apple_0, but now the support object is bowl_0, as the success condition is that the apple_0 is inside the bowl_0.",
    ...
}
\end{OutputBox}

\textbf{$\mathcal{T}_4$: Put the cup on the table}

\begin{OutputBox}
"sub_task_1": {
    "task_activity_name": "pick_up_glass",
    "task_detail_message": "First, the robotic arm must locate and pick up the glass_0 from its current position (near the cabinet or floor level). The robot needs to approach the glass_0, grasp it securely, and lift it off its current surface.",
    ...
},
"sub_task_2": {
    "task_activity_name": "put_glass_on_table",
    "task_detail_message": "Second, the robotic arm must move the glass_0 toward the table_0 and gently place it on the tabletop, ensuring stability. The robot should navigate to the table_0, lower the glass_0 onto its surface, and release it carefully.",
    ...
}
\end{OutputBox}

\subsubsection{Object Size Adjustment}

To bridge the gap between reconstructed visual assets and physical robot constraints, we implement a dynamic scaling module. For each subtask, the system identifies the target object and applies a scaling factor calculated from its bounding box extents relative to the robot's parallel gripper width ($\approx 0.06m$).

\textbf{Detailed Scaling for $\mathcal{T}_1$ and $\mathcal{T}_2$.} In complex long-horizon task 1, the target object sequence for each sub-task involves [refrigerator\_0: 1.0, glass\_0: 0.37, glass\_0: 0.37, refrigerator\_0: 1.0]. 

For fixed fixtures like the refrigerator, the scale is set to 1.0 to maintain the room's layout. For the manipulable glass\_0 object, a scaling factor of 0.37 was determined in sub-tasks 2 and 3 to ensure the glass diameter was easy to grasp. Finally, we selected the first occurring scaling factors for the refrigerator and the glass, which are 1.0 and 0.37 respectively, and applied these scaling factors.

In complex long-horizon task 2, the target object sequence for each sub-task involves [refrigerator\_0: 1.0, apple\_0: 0.35, apple\_0: 0.4, refrigerator\_0: 1.0]. And we select 1.0 and 0.35 for the refrigerator and the apple respectively

\textbf{Detailed Scaling for $\mathcal{T}_3$ and $\mathcal{T}_4$.} In complex long-horizon task 3 and 4, the target object sequence for each sub-task is [apple\_0: 0.42, apple\_0: 0.42] and [glass\_0: 0.4, glass\_0: 0.4]. So we use the first occurring scaling factors 0.42 and 0.4 for $\mathcal{T}_3$ and $\mathcal{T}_4$ respectively.

\subsubsection{BDDL Generation (Initial State and Success Detection)}

The BDDL generation stage defines the logical boundaries of each subtask. The Table~\ref{tab:Initial and goal states} shows the representative initial and target (goal) states of the BDDL definitions for these four complex long-horizon tasks, and we also show the representative outputs for a single demo.

\begin{table}[h]
\centering
\caption{Initial and goal states for the sub-tasks of our complex long-horizon tasks}
% \vspace{-.1cm}
\resizebox{0.45\textwidth}{!}{
\begin{tabular}{c|c|c|c}
\toprule
\textbf{Tasks} & \textbf{Subtask Index} & \textbf{Initial State} & \textbf{Target State} \\
\midrule
\multirow{4}{*}{$\mathcal{T}_1$: Glass$\to$Fridge} & 1 & (not (open refrigerator\_0) & (open refrigerator\_0) \\
 & 2 & (ontop glass\_0 table\_0) & (not (ontop glass\_0 table\_0)) \\
 & 3 & (inside glass\_0 gripper) & (inside glass\_0 refrigerator\_0) \\
 & 4 & (open refrigerator\_0) & (not (open refrigerator\_0)) \\
\midrule
\multirow{4}{*}{$\mathcal{T}_2$: Apple$\to$Fridge} & 1 & (not (open refrigerator\_0) & (open refrigerator\_0) \\
 & 2 & (ontop apple\_0 table\_0) & (not (ontop apple\_0 table\_0)) \\
 & 3 & (inside apple\_0 gripper) & (inside apple\_0 refrigerator\_0) \\
 & 4 & (open refrigerator\_0) & (not (open refrigerator\_0)) \\
\midrule
\multirow{2}{*}{$\mathcal{T}_3$: Apple$\to$Bowl} & 1 & (ontop apple\_0 cabinet\_0) & (not (ontop apple\_0 cabinet\_0)) \\
 & 2 & (inside apple\_0 gripper) & (inside apple\_0 bowl\_0) \\
\midrule
\multirow{2}{*}{$\mathcal{T}_4$: Cup$\to$Table} & 1 & (ontop glass\_0 cabinet\_0) & (not (ontop glass\_0 cabinet\_0)) \\
 & 2 & (inside glass\_0 gripper) & (ontop glass\_0 table\_0) \\
% Scene Image - CLIP Sim $(\uparrow)$ & 0.679 & 0.762 & 0.833 & 0.864 & 0.867 & 0.932 & 0.828 \\
\bottomrule
\end{tabular}
}
% \vspace{.1cm}
% \vspace{-.5cm}
\label{tab:Initial and goal states}
\end{table}

\subsubsection{Initial Action Flow}

The mapping of high-level subtask definitions to executable primitive sequences is a critical step in the ~\method~ pipeline. While visual keyframes of these executions are omitted here for brevity, comprehensive video demonstrations of the initial and evolved policies for all tasks are available on our project page (as linked in the Abstract).

Detailed definitions and physical implementations of each atomic primitive (e.g., \textit{APPROACH}, \textit{CONVERGE}, \textit{ARTICULATE}) are provided in Appendix~\ref{app:Subtask Instantiation}. Table~\ref{tab:initial_action_flows} summarizes the deterministic initial action sequences generated by our graph-based planner for the four representative complex long-horizon tasks before any self-evolution iterations.

\begin{table*}[h]
\centering
\caption{Initial action flows for the sub-tasks of our representative complex long-horizon tasks}
% \vspace{-.1cm}
\resizebox{1.0\textwidth}{!}{
\begin{tabular}{c|c|c}
\toprule
\textbf{Tasks} & \textbf{Subtask} & \textbf{Initial Action Flow (Pre-evolution)} \\
\midrule
\multirow{4}{*}{$\mathcal{T}_1$} & 1 & APPROACH $\to$ CONVERGE $\to$ GRASP $\to$ ARTICULATE\_OPEN $\to$ UNGRASP \\
 & 2 & NAVIGATE\_TO\_TARGET $\to$ APPROACH $\to$ CONVERGE $\to$ GRASP $\to$ LIFT\_EEF\_UP(0.2) \\
 & 3 & NAVIGATE\_TO\_SUPPORT $\to$ MOVE\_BASE\_FORWARD(0.4) $\to$ MOVE\_EEF\_FORWARD(0.1) $\to$ UNGRASP \\
 & 4 & NAVIGATE\_TO\_TARGET $\to$ APPROACH $\to$ CONVERGE $\to$ GRASP $\to$ ARTICULATE\_CLOSE(0.0, 0.5) $\to$ UNGRASP \\
\midrule
\multirow{4}{*}{$\mathcal{T}_2$} & 1 & APPROACH $\to$ CONVERGE $\to$ GRASP $\to$ ARTICULATE\_OPEN(0.0, 0.7) $\to$ UNGRASP $\to$ RETREAT \\
 & 2 & NAVIGATE\_TO\_TARGET $\to$ RETREAT $\to$ APPROACH $\to$ CONVERGE $\to$ GRASP $\to$ LIFT\_EEF\_UP(0.2) \\
 & 3 & NAVIGATE\_TO\_SUPPORT $\to$ MOVE\_BASE\_FORWARD(0.4) $\to$ MOVE\_EEF\_FORWARD(0.1) $\to$ UNGRASP \\
 & 4 & NAVIGATE\_TO\_TARGET $\to$ APPROACH $\to$ CONVERGE $\to$ GRASP $\to$ ARTICULATE\_CLOSE(0.0, 0.7) $\to$ UNGRASP \\
\midrule
\multirow{2}{*}{$\mathcal{T}_3$} & 1 & APPROACH $\to$ CONVERGE $\to$ GRASP $\to$ LIFT\_EEF\_UP(0.2) \\
 & 2 & NAVIGATE\_TO\_SUPPORT $\to$ MOVE\_BASE\_FORWARD(0.4) $\to$ MOVE\_EEF\_FORWARD(0.1) $\to$ LIFT\_EEF\_DOWN(0.3) $\to$ UNGRASP \\
\midrule
\multirow{2}{*}{$\mathcal{T}_4$} & 1 & APPROACH $\to$ CONVERGE $\to$ GRASP $\to$ LIFT\_EEF\_UP(0.2) \\
 & 2 & NAVIGATE\_TO\_SUPPORT $\to$ MOVE\_BASE\_FORWARD(0.4) $\to$ MOVE\_EEF\_FORWARD(0.1) $\to$ LIFT\_EEF\_DOWN(0.3) $\to$ UNGRASP \\
\bottomrule
\end{tabular}
}
% \vspace{-.2cm}
\label{tab:initial_action_flows}
\end{table*}

\subsection{Performance of Self-Evolution}
\label{app:Performance of Self-Evolution}

In this section, we present the step-by-step evolution trajectory for the four representative complex long-horizon tasks. The Self-Evolution mechanism plays a crucial role in correcting execution failures caused by geometric uncertainties or kinematic constraints that are difficult to resolve with open-loop planning.

We report the evolutionary process for one representative demonstration of each complex long-horizon task. Tables \ref{tab:t1_evolution}, \ref{tab:t2_evolution}, \ref{tab:t3_evolution}, and \ref{tab:t4_evolution} detail the changes in the action flow across iterations. The ``Iter 0'' row represents the initial open-loop plan generated by the graph solver. Subsequent iterations show how the VLM refined the sequence by adjusting parameters or inserting new primitives.

\paragraph{Evolution of Task $\mathcal{T}_1$ (Put the glass into the refrigerator).}
The evolution process of Task $\mathcal{T}_1$ is shown in Table \ref{tab:t1_evolution}. Subtasks 1 and 2 were successfully completed on the first attempt. However, subtask 3 (place the glass) required two iterations of improvement. Initially, the robot collided with the open refrigerator door, causing the door to close and preventing the subsequent tasks from being executed. Therefore, the VLM adjusted the base movement path. After one iteration, the glass was released too high and too far back, so the VLM adjusted the robot arm's extension range to ensure safe placement in the correct position. Subtask 4 (close the refrigerator) required one iteration of improvement to ensure the refrigerator door was completely closed.

\begin{table*}[h]
\centering
\caption{\textbf{Evolutionary Trajectory for Task $\mathcal{T}_1$: ``Put the glass into the refrigerator''.} Subtasks 1 and 2 succeeded initially. Subtask 3 required 2 iterations to correct placement height and depth. Subtask 4 required 1 iteration to fully close the door.}
% \vspace{-.1cm}
\resizebox{1.0\textwidth}{!}{
\begin{tabular}{c|c|c}
\toprule
\textbf{Subtask} & \textbf{Iter} & \textbf{Action Flow} \\
\midrule
1 & 0 & \text{APPROACH} $\to$ \text{CONVERGE} $\to$ \text{GRASP} $\to$ \text{ARTICULATE\_OPEN} $\to$ \text{UNGRASP} \\
\midrule
2 & 0 & \text{NAVIGATE\_TO\_TARGET} $\to$ \text{APPROACH} $\to$ \text{CONVERGE} $\to$ \text{GRASP} $\to$ \text{LIFT\_EEF\_UP}(0.2) \\
\midrule
\multirow{6}{*}{3} & 0 & \text{NAVIGATE\_TO\_SUPPORT} $\to$ \text{MOVE\_BASE\_FORWARD}(0.4) $\to$ \text{MOVE\_EEF\_FORWARD}(0.1) $\to$ \text{UNGRASP} \\
 & & \\
 & 1 & \text{NAVIGATE\_TO\_SUPPORT} $\to$ \text{MOVE\_BASE\_FORWARD}(0.3) $\to$ \text{MOVE\_EEF\_FORWARD}(0.45) $\to$ \text{UNGRASP} \\
 & & \\
 & \multirow{2}{*}{2} & \text{NAVIGATE\_TO\_SUPPORT} $\to$ \text{MOVE\_BASE\_FORWARD}(0.3) $\to$ \text{MOVE\_EEF\_FORWARD}(0.45) $\to$ \text{MOVE\_EEF\_FORWARD}(0.2) \\
 & & $\to$ \text{LIFT\_EEF\_DOWN}(0.3) $\to$ \text{UNGRASP} \\
\midrule
\multirow{4}{*}{4} & 0 & \text{NAVIGATE\_TO\_TARGET} $\to$ \text{APPROACH} $\to$ \text{CONVERGE} $\to$ \text{GRASP} $\to$ \text{ARTICULATE\_CLOSE}(0.0, 0.5) $\to$ \text{UNGRASP} \\
 & & \\
 & \multirow{2}{*}{1} & \text{NAVIGATE\_TO\_TARGET} $\to$ \text{APPROACH} $\to$ \text{CONVERGE} $\to$ \text{GRASP} $\to$ \text{ARTICULATE\_CLOSE}(0.0, 0.6) \\
 & & $\to$ \text{MOVE\_EEF\_FORWARD}(0.1) $\to$ \text{UNGRASP} \\
\bottomrule
\end{tabular}
}
\label{tab:t1_evolution}
\end{table*}

\paragraph{Evolution of Task $\mathcal{T}_2$ (Put the apple into the refrigerator).}
Table \ref{tab:t2_evolution} show the evolution results. Subtask 3 failed initially because the robot could not reach the deep shelf. The evolution process involved significant adjustments to the base position and rotation (\texttt{TURN\_BASE}) over 3 iterations to achieve a valid kinematic configuration for placement.

\begin{table*}[h]
\centering
\caption{\textbf{Evolutionary Trajectory for Task $\mathcal{T}_2$: ``Put the apple into the refrigerator''.} Subtask 3 underwent 3 iterations, introducing base rotation and finer arm control to navigate the apple onto the fridge shelf.}
% \vspace{-.1cm}
\resizebox{1.0\textwidth}{!}{
\begin{tabular}{c|c|c}
\toprule
\textbf{Subtask} & \textbf{Iter} & \textbf{Action Flow} \\
\midrule
1 & 0 & \text{APPROACH} $\to$ \text{CONVERGE} $\to$ \text{GRASP} $\to$ \text{ARTICULATE\_OPEN}(0.0, 0.7) $\to$ \text{UNGRASP} $\to$ \text{RETREAT} \\
\midrule
2 & 0 & \text{NAVIGATE\_TO\_TARGET} $\to$ \text{RETREAT} $\to$ \text{APPROACH} $\to$ \text{CONVERGE} $\to$ \text{GRASP} $\to$ \text{LIFT\_EEF\_UP}(0.2) \\
\midrule
\multirow{9}{*}{3} & 0 & \text{NAVIGATE\_TO\_SUPPORT} $\to$ \text{MOVE\_BASE\_FORWARD}(0.4) $\to$ \text{MOVE\_EEF\_FORWARD}(0.1) $\to$ \text{UNGRASP} \\
 & & \\
 & 1 & \text{NAVIGATE\_TO\_SUPPORT} $\to$ \text{MOVE\_BASE\_BACKWARD}(0.3) $\to$ \text{MOVE\_EEF\_FORWARD}(0.4) $\to$ \text{MOVE\_EEF\_FORWARD}(0.2) $\to$ \text{UNGRASP} \\
 & & \\
 & \multirow{2}{*}{2} & \text{NAVIGATE\_TO\_SUPPORT} $\to$ \text{MOVE\_BASE\_BACKWARD}(0.3) $\to$ \text{MOVE\_EEF\_FORWARD}(0.4) $\to$ \text{TURN\_BASE\_LEFT}(45) \\
 & & $\to$ \text{MOVE\_EEF\_FORWARD}(0.2) $\to$ \text{UNGRASP} \\
 & & \\
 & \multirow{2}{*}{3} & \text{NAVIGATE\_TO\_SUPPORT} $\to$ \text{MOVE\_BASE\_BACKWARD}(0.3) $\to$ \text{MOVE\_EEF\_FORWARD}(0.4) $\to$ \text{TURN\_BASE\_LEFT}(45) \\
 & & $\to$ \text{MOVE\_EEF\_FORWARD}(0.3) $\to$ \text{LIFT\_EEF\_DOWN}(0.2) $\to$ \text{UNGRASP} \\
\midrule
4 & 0 & \text{NAVIGATE\_TO\_TARGET} $\to$ \text{APPROACH} $\to$ \text{CONVERGE} $\to$ \text{GRASP} $\to$ \text{ARTICULATE\_CLOSE}(0.0, 0.7) $\to$ \text{UNGRASP} \\
\bottomrule
\end{tabular}
}
\label{tab:t2_evolution}
\end{table*}

\paragraph{Evolution of Task $\mathcal{T}_3$ (Put the apple into the bowl).}
This evolution details is in Table \ref{tab:t3_evolution}, which required precise positioning to ensure the apple landed \textit{inside} the bowl rather than hitting the rim. Subtask 2 evolved twice: first to adjust the approach angle via \texttt{TURN\_BASE}, and second to fine-tune the release height.

\begin{table*}[h]
\centering
\caption{\textbf{Evolutionary Trajectory for Task $\mathcal{T}_3$: ``Put the apple into the bowl''.} The placement subtask (Subtask 2) required 2 iterations to prevent collision with the bowl's rim.}
% \vspace{-.1cm}
\resizebox{1.0\textwidth}{!}{
\begin{tabular}{c|c|c}
\toprule
\textbf{Subtask} & \textbf{Iter} & \textbf{Action Flow} \\
\midrule
1 & 0 & \text{APPROACH} $\to$ \text{CONVERGE} $\to$ \text{GRASP} $\to$ \text{LIFT\_EEF\_UP}(0.2) \\
\midrule
\multirow{7}{*}{2} & 0 & \text{NAVIGATE\_TO\_SUPPORT} $\to$ \text{MOVE\_BASE\_FORWARD}(0.4) $\to$ \text{MOVE\_EEF\_FORWARD}(0.1) $\to$ \text{LIFT\_EEF\_DOWN}(0.3) $\to$ \text{UNGRASP} \\
 & & \\
 & \multirow{2}{*}{1} & \text{NAVIGATE\_TO\_SUPPORT} $\to$ \text{MOVE\_BASE\_BACKWARD}(0.15) $\to$ \text{MOVE\_EEF\_FORWARD}(0.4) $\to$ \text{TURN\_BASE\_LEFT}(15) \\
 & & $\to$ \text{MOVE\_EEF\_FORWARD}(0.1) $\to$ \text{LIFT\_EEF\_DOWN}(0.4) $\to$ \text{UNGRASP} \\
 & & \\
 & \multirow{2}{*}{2} & \text{NAVIGATE\_TO\_SUPPORT} $\to$ \text{MOVE\_BASE\_BACKWARD}(0.3) $\to$ \text{MOVE\_EEF\_FORWARD}(0.2) $\to$ \text{TURN\_BASE\_RIGHT}(20) \\
 & & $\to$ \text{MOVE\_EEF\_FORWARD}(0.25) $\to$ \text{LIFT\_EEF\_DOWN}(0.4) $\to$ \text{UNGRASP} \\
\bottomrule
\end{tabular}
}
\label{tab:t3_evolution}
\end{table*}

\paragraph{Evolution of Task $\mathcal{T}_4$ (Put the cup on the table)}
Similar to $\mathcal{T}_3$, Table \ref{tab:t4_evolution} shows that placing a cup on a specific spot on the table required adjusting the robot base's orientation (\texttt{TURN\_BASE}) to reach the target zone.

\begin{table*}[h]
\centering
\caption{\textbf{Evolutionary Trajectory for Task $\mathcal{T}_4$: ``Put the cup on the table''.} Subtask 2 required 2 iterations, heavily relying on base rotation to align with the table surface.}
% \vspace{-.1cm}
\resizebox{1.0\textwidth}{!}{
\begin{tabular}{c|c|c}
\toprule
\textbf{Subtask} & \textbf{Iter} & \textbf{Action Flow} \\
\midrule
1 & 0 & \text{APPROACH} $\to$ \text{CONVERGE} $\to$ \text{GRASP} $\to$ \text{LIFT\_EEF\_UP}(0.2) \\
\midrule
\multirow{7}{*}{2} & 0 & \text{NAVIGATE\_TO\_SUPPORT} $\to$ \text{MOVE\_BASE\_FORWARD}(0.4) $\to$ \text{MOVE\_EEF\_FORWARD}(0.1) $\to$ \text{LIFT\_EEF\_DOWN}(0.3) $\to$ \text{UNGRASP} \\
 & & \\
 & \multirow{2}{*}{1} & \text{NAVIGATE\_TO\_SUPPORT} $\to$ \text{TURN\_BASE\_RIGHT}(60) $\to$ \text{MOVE\_EEF\_FORWARD}(0.4) $\to$ \text{MOVE\_EEF\_FORWARD}(0.1) \\
 & & $\to$ \text{LIFT\_EEF\_DOWN}(0.3) $\to$ \text{UNGRASP} \\
 & & \\
 & \multirow{2}{*}{2} & \text{NAVIGATE\_TO\_SUPPORT} $\to$ \text{TURN\_BASE\_RIGHT}(60) $\to$ \text{MOVE\_EEF\_FORWARD}(0.4) $\to$ \text{MOVE\_EEF\_FORWARD}(0.2) \\
 & & $\to$ \text{LIFT\_EEF\_DOWN}(0.3) $\to$ \text{UNGRASP} \\
\bottomrule
\end{tabular}
}
% \vspace{-0.5cm}
\label{tab:t4_evolution}
\end{table*}

% \newpage

\paragraph{VLM Reasoning Process.}
To illustrate the decision-making capability of the Self-Evolution mechanism, we provide the raw reasoning outputs for Task $\mathcal{T}_1$ in the box below. This includes the step-wise critiques $m_\tau$ and the final Global Supervisor reasoning.

First, subtask 3 underwent two iterations, and the output of all iterations are:

\begin{OutputBox}
"iter_1": {
    "VLM-Observation": {
        "Step 0 observation (4 frames)": "The robot base navigates to the support (refrigerator), positioning itself near the open refrigerator door. The refrigerator door remains open throughout, and no collision or environmental change is observed during this step.",
        "Step 1 observation (4 frames)": "The robot base moves forward 0.45 meters toward the refrigerator. During this motion, the robotic arm collides with the open refrigerator door, causing it to swing shut partially, which alters the environment and obstructs access to the refrigerator compartment.",
        "Step 2 observation (2 frames)": "The robot performs the UNGRASP action while its arm is positioned near the now partially closed refrigerator door. No further collision occurs, but the glass is not placed inside the refrigerator due to the obstructed access caused by the door's position. The environment remains unchanged during this step."
    },
    "reason": "During the execution of 'put_glass_into_refrigerator', the robotic arm collided with the open refrigerator door while performing 'MOVE_BASE_FORWARD', causing the door to close partially and blocking access to the compartment. Therefore, before 'MOVE_BASE_FORWARD', the following step should be added: 'MOVE_BASE_LEFT 0.3m' to clear the door swing path.",
    "new_sequence": [18, {"15": 0.3}, {"13": 0.45}, 5] 
},
"iter_2": {
    "VLM-Observation": {
        "Step 0 observation (4 frames)": "The robot base navigates toward the refrigerator, positioning itself so the arm can reach the open door. No collision occurs, and the refrigerator door remains open; the environment status is unchanged.",
        "Step 1 observation (3 frames)": "The robot base moves left by 0.3 meters, repositioning the arm slightly away from the refrigerator door. No collision occurs, and the refrigerator door remains open; no environmental changes are observed.",
        "Step 2 observation (4 frames)": "The robot base moves forward by 0.45 meters, bringing the arm closer to the refrigerator. No collision occurs, and the refrigerator door remains open; no environmental changes are observed.",
        "Step 3 observation (2 frames)": "The robot opens its gripper to release the glass, but the glass is not visible inside the refrigerator, indicating it was not placed successfully. No collision occurs, and the refrigerator door remains open; no environmental changes are observed."
    },
    "reason": "The task failed because after moving the base forward, the robotic arm did not extend far enough into the refrigerator to place the glass inside before releasing it. The UNGRASP action occurred while the gripper was still outside the refrigerator compartment. Therefore, after 'MOVE_BASE_FORWARD', the following steps should be added: 1. 'MOVE_EEF_FORWARD 0.2m' to extend the arm into the refrigerator, and 2. 'LIFT_EEF_DOWN 0.3m' to lower the glass to a safe placement height inside the refrigerator.",
    "new_sequence": [18, {"15": 0.3}, {"13": 0.45}, {"9": 0.2}, {"8": 0.3}, 5]
}
\end{OutputBox}

Next, here are the output results for subtask 4:

\begin{OutputBox}
"iter_1": {
    "VLM-Observation": {
        "Step 0 observation (5 frames)": "The robot successfully navigates to the refrigerator, positioning itself beside the open door without hitting any obstacles. The environment remains unchanged during this step; the refrigerator door stays open, and no unintended movements or collisions occur.",
        "Step 1 observation (1 frames)": "The robot arm extends toward the refrigerator handle in a controlled manner, maintaining clearance from surrounding objects. No collision occurs, and the refrigerator door remains open without any unintended movement or environmental change.",
        "Step 2 observation (1 frames)": "The robot arm finely adjusts its position to align precisely with the refrigerator handle, showing no contact with surrounding objects. The refrigerator door remains open and stationary, with no environmental changes or collisions observed.",
        "Step 3 observation (1 frames)": "The robot gripper successfully closes around the refrigerator handle without colliding with any surrounding objects. The refrigerator door remains open and stationary, with no unintended environmental changes observed during the grasp.",
        "Step 4 observation (6 frames)": "The robot arm attempts to close the refrigerator door by pulling the handle, but the door does not fully shut and appears to be obstructed or misaligned. No collision with surrounding objects is observed, but the environment status changes partially -- the door moves slightly but fails to reach a fully closed position, indicating an incomplete articulation.",
        "Step 5 observation (2 frames)": "The robot opens its gripper to release the refrigerator handle, with no contact or collision observed. The refrigerator door remains partially open and does not move further during ungrasping, indicating that the failure to fully close occurred prior to this step."
    },
    "reason": "The failure occurred during the 'ARTICULATE_CLOSE' step. Although the robot successfully grasped the handle and attempted to close the door, the door did not fully shut, likely due to insufficient pulling force or misalignment in the articulation range. To correct this, we should extend the articulation range slightly beyond 0.5 and add a small forward motion of the EEF before ungrasping to ensure full closure.",
    "new_sequence": [17, 1, 2, 3, {"19": [0.0, 0.6]}, {"9": 0.1}, 5]
}
\end{OutputBox}

\subsection{Failure Details and Analysis}
\label{app:Failure Details and Analysis}

\subsubsection{Performance Variance across VLM Architectures}

We conducted a comparative study on the self-evolution performance of three VLMs: \textbf{Gemini-3-Flash}, \textbf{GPT-5.2}, and \textbf{Qwen3-VL} (specifically, the open-source model \textit{qwen3-vl-235b-22a-thinking}). While all models demonstrated reasoning capabilities, their evolution strategies exhibited distinct characteristics:

\begin{itemize}
    \item \textbf{Qwen3-VL (Conservative Strategy):} This model tends to be conservative in its modifications. Its proposed changes often involve minimal adjustments, such as tweaking a single parameter or adding just one or two atomic actions. While this ensures stability, Qwen3-VL frequently fails to identify ``stuck'' situations, where the robot is physically blocked but not in collision, resulting in minor updates that do not resolve the root cause.
    
    \item \textbf{Gemini-3-Flash (Aggressive Strategy):} Gemini-3-Flash exhibits a more aggressive and perceptive reasoning style. It is often capable of identifying subtle geometric constraints, such as the robot arm grazing the edge of a fridge door. Consequently, it proposes significant modifications, such as changing a movement parameter from around 0.3m to around 0.8m. However, given the offline nature of our evolution mechanism (where the VLM cannot interactively query state during reasoning), these large-scale parameter jumps sometimes degrade accuracy, causing the robot to deviate from the correct target position.
    
    \item \textbf{GPT-5.2 (Redundant Strategy):} GPT-5.2 shows a tendency to over-complicate solutions. It frequently stacks new actions onto the existing sequence rather than refining the current ones. This often leads to a ``death spiral'' where the action sequence grows to over 10 steps. As the sequence length increases, the probability of cumulative execution error rises, making success increasingly difficult despite valid high-level logic.
\end{itemize}

\subsubsection{Case Studies on Persistent Failures}

Qualitative inspection of failure cases reveals that errors are rarely due to high-level logic. Instead, they stem from low-level kinematic constraints, such as the robotic arm reaching a singularity when confined against the fridge side, or the hard-coded action primitives lacking the reflexivity to compensate for minor grasping errors. 

\textbf{Case 1: Kinematic Obstruction in Task $\mathcal{T}_1$ (Glass $\to$ Fridge).} Figure \ref{fig:failure_t1} shows the first 4 iterations of subtask 3 (placing the glass into the refrigerator), which ultimately failed after 5 iterations. The visual record shows the robot attempting to extend its arm deep into the refrigerator.

\begin{figure*}[htbp]
    \centering
    \newcommand{\taskcell}[1]{\parbox{0.11\textwidth}{\centering \small #1}}
    \setlength{\tabcolsep}{1pt}
    \begin{tabular}{ccccccccccc}
        \includegraphics[width=0.11\textwidth]{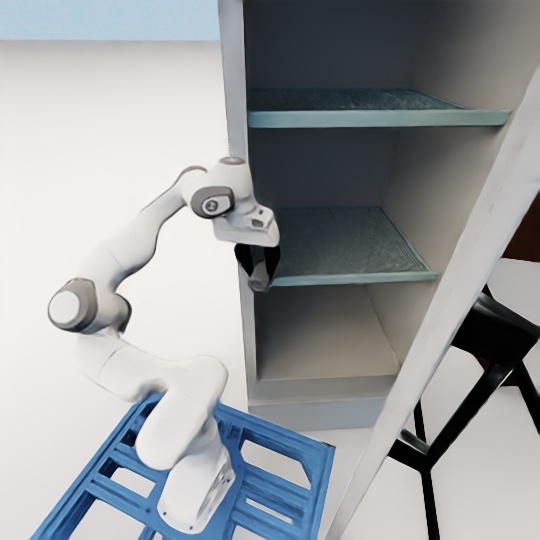} & 
        \includegraphics[width=0.11\textwidth]{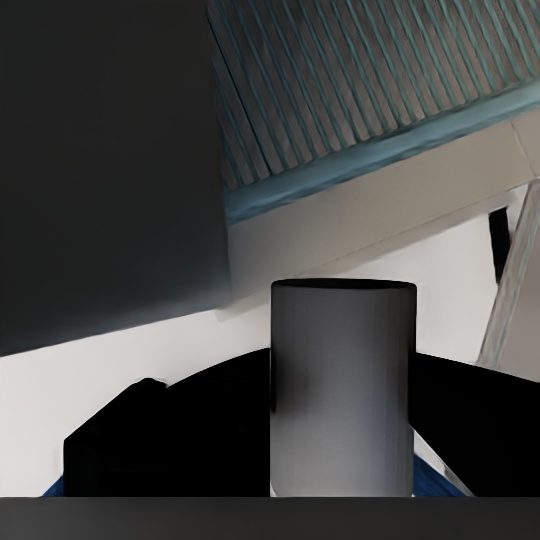} & \quad\  &
        \includegraphics[width=0.11\textwidth]{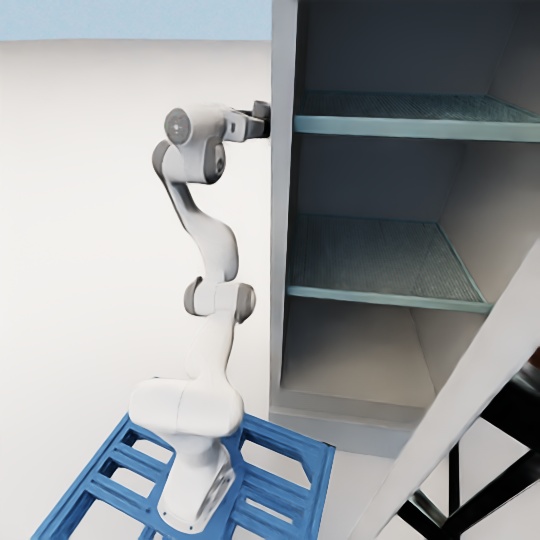} & 
        \includegraphics[width=0.11\textwidth]{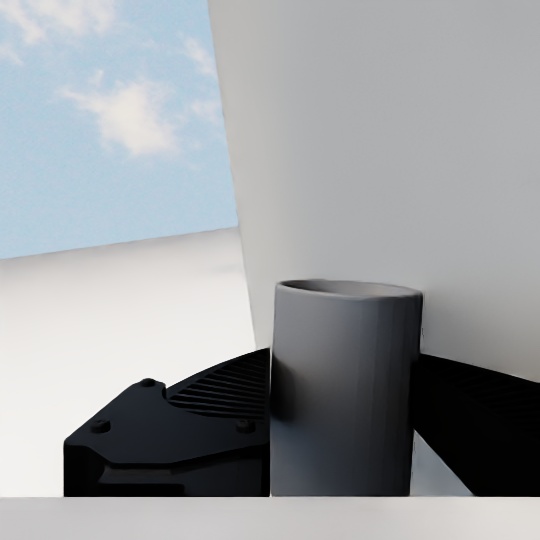} & \quad\  &
        \includegraphics[width=0.11\textwidth]{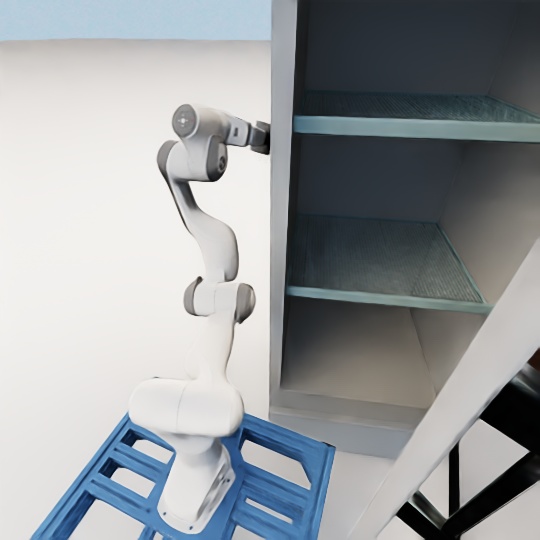} & 
        \includegraphics[width=0.11\textwidth]{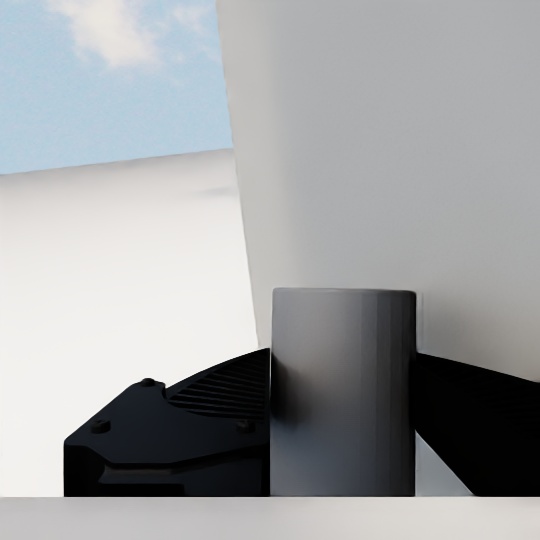} & \quad\  &
        \includegraphics[width=0.11\textwidth]{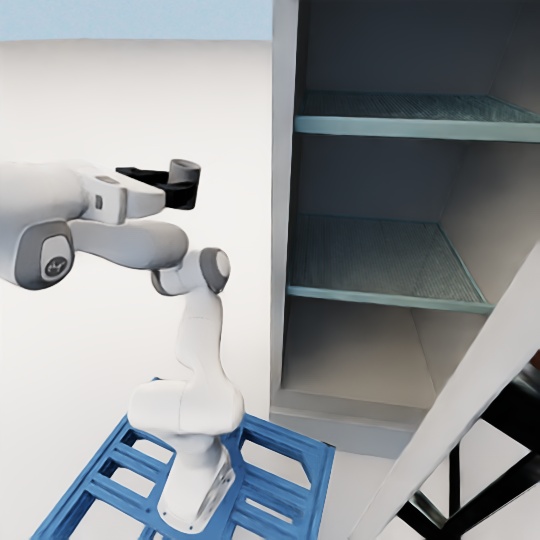} & 
        \includegraphics[width=0.11\textwidth]{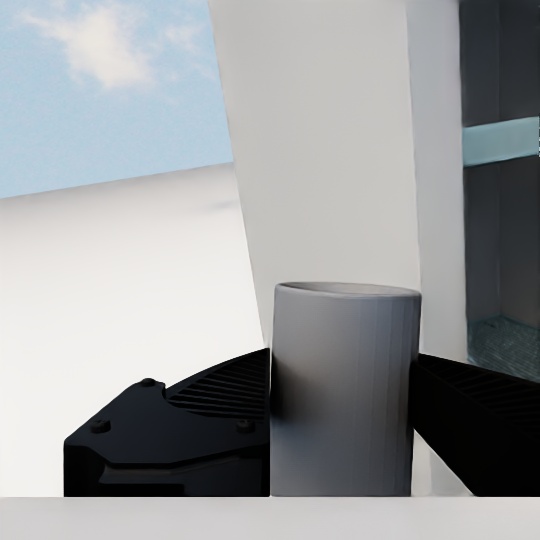}\\
        \multicolumn{2}{c}{\textbf{Iter 1}} & & 
        \multicolumn{2}{c}{\textbf{Iter 2}} & & 
        \multicolumn{2}{c}{\textbf{Iter 3}} & & 
        \multicolumn{2}{c}{\textbf{Iter 4}} \\
    \end{tabular}
    \caption{\textbf{Failure Analysis for $\mathcal{T}_1$ Subtask 3.} Despite 4 rounds of evolution (Iter 5 also failed), the robot consistently fails to insert the glass. The root cause, visible in the Head View, is that the lower link of the robotic arm collides with the left wall of the refrigerator, preventing the end-effector from advancing. This kinematic constraint is difficult for the VLM to diagnose from 2D images, as it requires a subtle backward adjustment of the mobile base to clear the singularity, which is a correction that iterative parameter tuning failed to find.}
    % \vspace{-0.2cm}
\label{fig:failure_t1}
\end{figure*}
As observed in the figures, the root cause is not the arm extension parameter, but the base positioning. The robotic arm link is physically blocked by the left side of the refrigerator frame, preventing the arm from extending fully. This type of collision is subtle and difficult for the VLM to perceive from the standard camera views.

\textbf{Case 2: Precision Placing in Task $\mathcal{T}_3$ (Apple $\to$ Bowl).}
Figure \ref{fig:failure_t3} depicts a failure in Subtask 2 (Place apple into bowl). The challenge here is fine-grained spatial alignment.

\begin{figure*}[htbp]
    \centering
    \newcommand{\taskcell}[1]{\parbox{0.11\textwidth}{\centering \small #1}}
    \setlength{\tabcolsep}{1pt}
    \begin{tabular}{ccccccccccc}
        \includegraphics[width=0.11\textwidth]{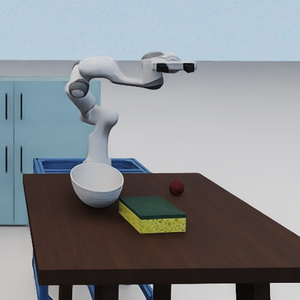} & 
        \includegraphics[width=0.11\textwidth]{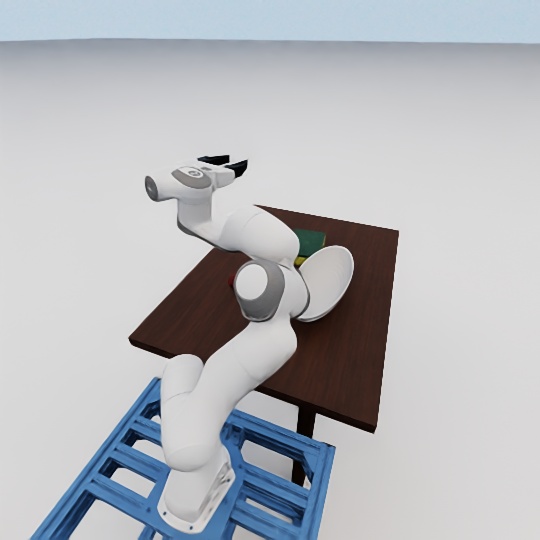} & \quad\  &
        \includegraphics[width=0.11\textwidth]{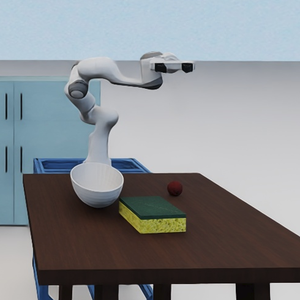} & 
        \includegraphics[width=0.11\textwidth]{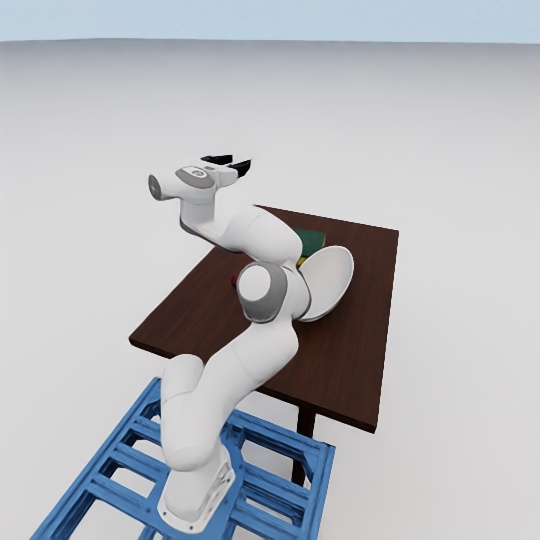} & \quad\  &
        \includegraphics[width=0.11\textwidth]{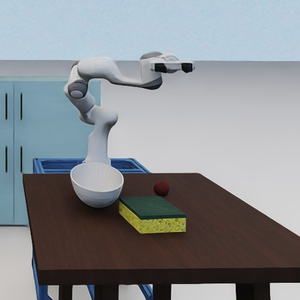} & 
        \includegraphics[width=0.11\textwidth]{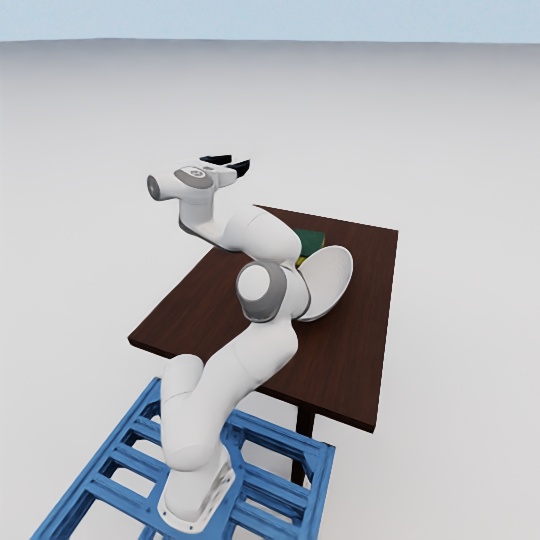} & \quad\  &
        \includegraphics[width=0.11\textwidth]{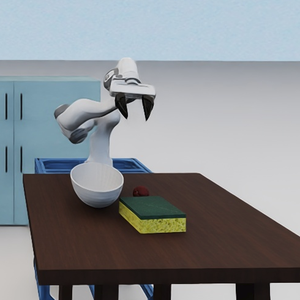} & 
        \includegraphics[width=0.11\textwidth]{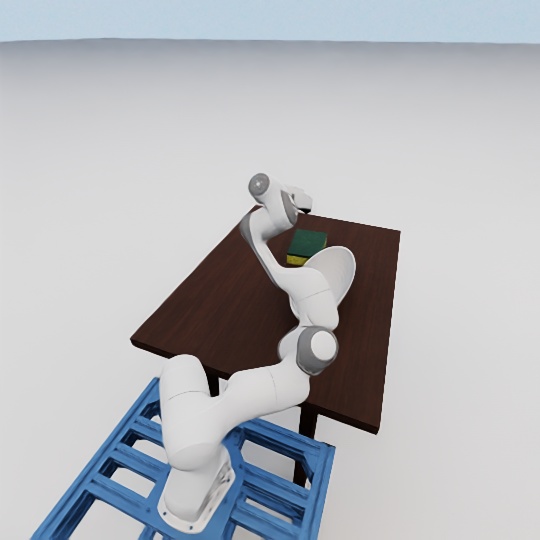}\\
        \multicolumn{2}{c}{\textbf{Iter 1}} & & 
        \multicolumn{2}{c}{\textbf{Iter 2}} & & 
        \multicolumn{2}{c}{\textbf{Iter 3}} & & 
        \multicolumn{2}{c}{\textbf{Iter 4}} \\
    \end{tabular}
    \caption{\textbf{Failure Analysis for $\mathcal{T}_3$ Subtask 2.} The robot struggles to align the apple directly above the bowl. Over 4 iterations, the VLM attempts to correct the drift, but lacks the precision to calculate the exact offset. The open-loop nature means the VLM oscillates between over-correcting (moving too far left) and under-correcting, failing to find the "sweet spot" within the limited iteration budget.}
    % \vspace{-0.2cm}
    \label{fig:failure_t3}
\end{figure*}

As shown in the iteration history, the VLM struggles to calculate the exact spatial offset required to center the gripper over the bowl. Since our method generates the next plan offline based on static visual feedback, the VLM cannot rely on real-time continuous servoing. This results in an oscillation where the robot either adjusts too little (incremental trial-and-error) or modifies parameters too aggressively, missing the target window.

\subsection{Downstream Policy Learning}
\label{app:Downstream Policy Learning}

We further evaluate whether the demonstrations generated by \method~ can be directly used for downstream policy learning. We train behavior cloning (BC) policies on expert demonstrations autonomously generated by our pipeline. This experiment is not intended to introduce a new policy architecture; instead, it serves as a validation that the generated observation-action data is executable, diverse, and suitable for standard imitation learning.

\paragraph{Policy architecture.}
Figure~\ref{fig:policy_architecture} illustrates the policy architecture. The policy takes point clouds and proprioceptive states as input. The point cloud input contains 3D coordinates $(x,y,z)$ and an end-effector mask. Before encoding, we apply point-cloud randomization, including downsampling to 1024 points, translation perturbation within $\pm 0.04$m, and Gaussian noise with standard deviation $\sigma=0.005$. The randomized point cloud is then encoded by a PointNet encoder composed of an MLP sequence and global max pooling.

The proprioceptive input contains the end-effector position, end-effector quaternion, gripper state, and mobile-base state. We normalize the proprioceptive vector and fuse it with the PointNet feature. The fused feature is linearly projected and combined with positional encoding. We use a causal Transformer with time window $T=10$, 4 layers, hidden dimension 512, 8 attention heads, and dropout 0.1. The Transformer output is passed to an action MLP to predict a 10-DoF action, consisting of 7-DoF arm/gripper control and 3-DoF mobile-base control:
\begin{equation}
\begin{split}
    a_t =&
    [
    \Delta x,\Delta y,\Delta z,
    \Delta r_x,\Delta r_y,\Delta r_z,\\
    &a_{\mathrm{gripper}},
    \Delta x_{\mathrm{base}},
    \Delta y_{\mathrm{base}},
    \Delta \theta_{\mathrm{base}}
    ].
\end{split}
\end{equation}

\begin{figure*}[t]
    \centering
    \includegraphics[width=0.95\linewidth]{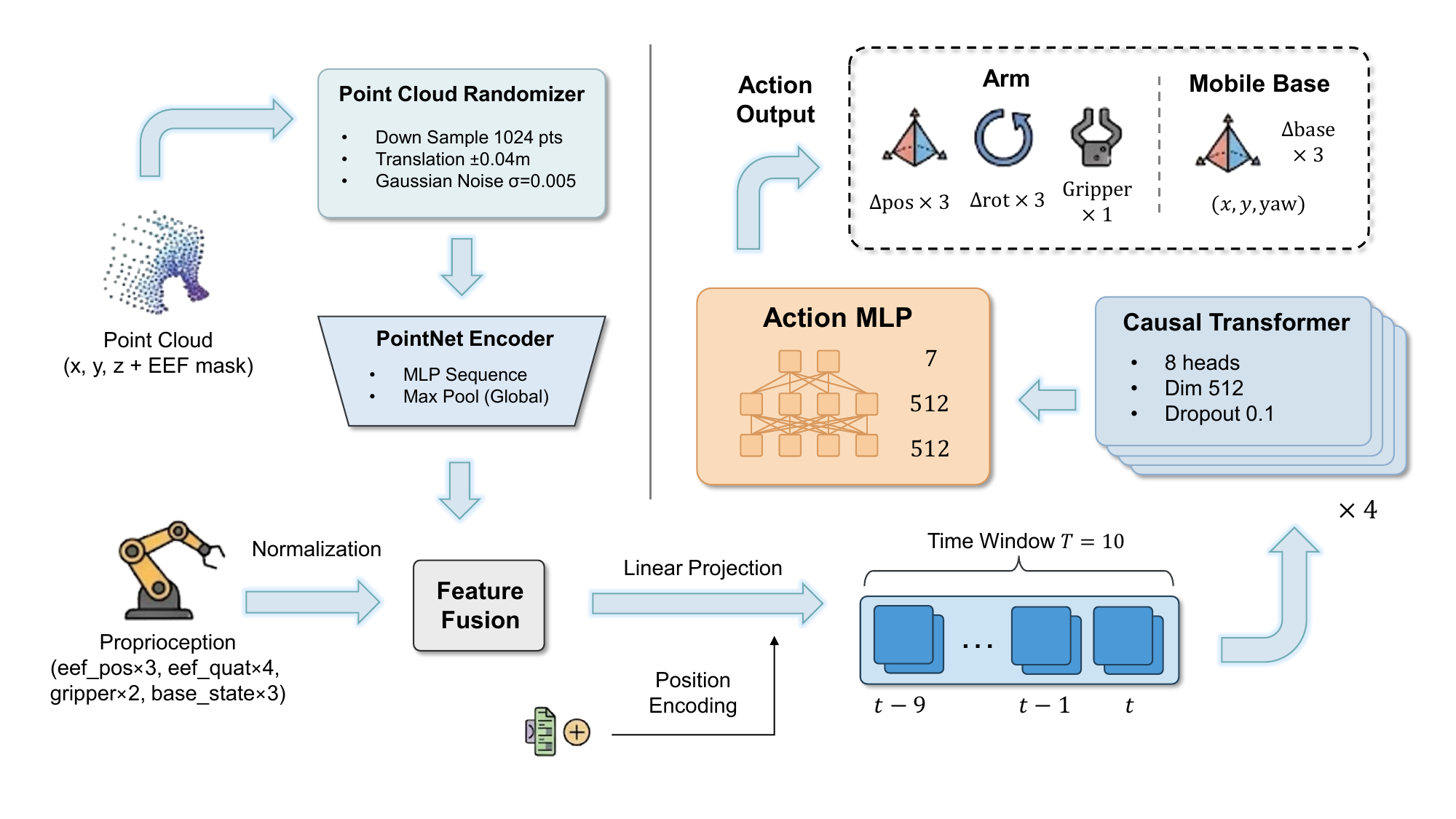}
    \caption{\textbf{Policy architecture for downstream behavior cloning.}
    The policy encodes randomized point clouds with a PointNet encoder, fuses the point-cloud feature with proprioceptive states, and predicts 10-DoF robot actions using a 4-layer causal Transformer and an action MLP.}
    \label{fig:policy_architecture}
\end{figure*}

\paragraph{Training setup.}
Each policy is trained using 100 demonstrations generated by \method. We use one NVIDIA A800 GPU, batch size 1024, and train for 200 epochs. Each epoch samples 102400 training samples. We use the AdamW optimizer with an initial learning rate of $4\times 10^{-4}$. The learning rate follows a linear decay schedule. The BC objective minimizes the prediction error between the policy action and the expert action generated by our pipeline.

\paragraph{Domain randomization.}
We apply domain randomization during data generation to obtain more diverse and robust demonstrations. Specifically, for each generated demonstration, we perturb the target-object scale along the XYZ dimensions with a scale noise of $\pm 0.10$. For tasks involving articulated-object interaction, such as opening a refrigerator, we additionally perturb the target-object orientation around the $z$ axis within $\pm \pi/12$. These perturbations force the policy to learn from varied object geometry and pose configurations instead of overfitting to a single deterministic scene layout.

During policy evaluation, we apply the same domain-randomization setting as in data generation. In particular, the XYZ dimensions of the target objects are perturbed with scale noise $\pm 0.10$, and the target-object rotation around the $z$ axis is perturbed within $\pm \pi/12$ when applicable. This evaluation setting verifies that the learned policies retain robustness under object-scale and pose variations.

\paragraph{Results.}
The generated demonstrations can support effective downstream imitation learning. With 100 generated demonstrations, the BC policy achieves 96.0\% success rate on \emph{Open Refrigerator} with 24 successful trials out of 25, and 92.0\% success rate on \emph{Pick Up Glass} with 23 successful trials out of 25. These results show that \method~ does not only produce visually plausible rollouts, but also generates action-observation trajectories that can be consumed by standard policy-learning pipelines.

% WARNING: do not forget to delete the supplementary pages from your submission 
% \input{sec/X_suppl}

\end{document}